\PassOptionsToPackage{table}{xcolor}

\documentclass[acmsmall, screen]{acmart}

\pdfoutput=1

\AtBeginDocument{%
  \providecommand\BibTeX{{%
    \normalfont B\kern-0.5em{\scshape i\kern-0.25em b}\kern-0.8em\TeX}}}

\hyphenation{on-tol-o-gy ad-di-tion-al in-fra-struc-ture
an-a-lyze a-nal-y-sis}

\acmJournal{TECS}

\usepackage{subcaption}
\usepackage{listings}
\usepackage{comment}
\usepackage{url}
\usepackage{amsmath,amsfonts}
\usepackage{bbm}
\usepackage[noend]{algpseudocode}
\usepackage{algorithm}
\usepackage{tcolorbox}
\usepackage{tikz}
\usepackage{widetext}
\usetikzlibrary{matrix,chains,positioning,decorations.pathreplacing,arrows}
\usepackage{pgfplots}
\pgfplotsset{compat=1.10}
\usetikzlibrary{shapes.geometric,arrows,fit,matrix,positioning, shapes, calc}
\usepackage{ulem}
\usepackage{multirow}
\pgfarrowsdeclarecombine{twotriang}{twotriang}
{stealth'}{stealth'}{stealth'}{stealth'}


\newcommand{\Ignore}[1]{}

\newcommand{\ann}{ANN}

\newcommand{\etal}{{\it et al.}}

\newtheorem{theorem}{Theorem}[section]

\lstset{language=C,basicstyle=\small\ttfamily}
\lstset{numbers=left, numberstyle=\tiny, stepnumber=1, numbersep=5pt}
\lstset{tabsize=2}
\lstset{firstnumber=1}
\lstset{frame=single}
\lstset{
  basicstyle=\scriptsize\ttfamily,
  keywordstyle=\scriptsize\ttfamily\bfseries,
  language=C,             
  aboveskip=0pt,
  belowskip=0pt,
  breaklines=true,           
  breakatwhitespace=false,   
  showspaces=false,
  keywords={},
  escapeinside={\%*}{*)},          
  morekeywords={for, typedef, void, float, unsigned, short, int, ushort, assert,uchar,begin_thread,end_thread,join_thread,atomic,assume,static,extern,int,_Bool,return}  
}

\begin{document}

\title{Verifying Quantized Neural Networks using SMT-Based Model Checking}

\author{Luiz Sena}
\affiliation{%
    \institution{Federal University of Amazonas}
    \city{Manaus}
    \state{AM}
    \country{Brazil}
}
\email{lhcs@icomp.ufam.edu.br}

\author{Xidan Song}
\affiliation{%
    \institution{University of Manchester}
    \city{Manchester}
    \country{United Kingdom}
}
\email{xidan.song@postgrad.manchester.ac.uk}

\author{Erickson Alves}
\affiliation{%
    \institution{Federal University of Amazonas}
    \city{Manaus}
    \state{AM}
    \country{Brazil}
}
\email{erickson@icomp.ufam.edu.br}

\author{Iury Bessa}
\affiliation{%
    \institution{Federal University of Amazonas}
    \city{Manaus}
    \state{AM}
    \country{Brazil}
}
\email{iurybessa@ufam.edu.br}

\author{Edoardo Manino}
\affiliation{%
    \institution{University of Manchester}
    \city{Manchester}
    \country{United Kingdom}
}
\email{edoardo.manino@manchester.ac.uk}

\author{Lucas Cordeiro}
\affiliation{%
    \institution{University of Manchester}
    \city{Manchester}
    \country{United Kingdom}
}
\author{Eddie de Lima Filho}
\affiliation{%
    \institution{Federal University of Amazonas and TPV Technology}
    \city{Manaus}
    \country{Brazil}
}
\email{eddie.filho@tpv-tech.com}

\begin{abstract}
Artificial Neural Networks (ANNs) are being deployed for an increasing number of safety-critical applications, including autonomous cars and medical diagnosis. However, concerns about their reliability have been raised due to their black-box nature and apparent fragility to adversarial attacks. These concerns are amplified when ANNs are deployed on restricted system, which limit the precision of mathematical operations and thus introduce additional quantization errors. Here, we develop and evaluate a novel symbolic verification framework using software model checking (SMC) and satisfiability modulo theories (SMT) to check for vulnerabilities in ANNs. More specifically, we propose several ANN-related optimizations for SMC, including invariant inference via interval analysis, slicing, expression simplifications, and discretization of non-linear activation functions. With this verification framework, we can provide formal guarantees on the safe behavior of ANNs implemented both in floating- and fixed-point arithmetic. In this regard, our verification approach was able to verify and produce adversarial examples for $52$ test cases spanning image classification and general machine learning applications. Furthermore, for small- to medium-sized ANN, our approach completes most of its verification runs in minutes. Moreover, in contrast to most state-of-the-art methods, our approach is not restricted to specific choices regarding activation functions and non-quantized representations. Our experiments show that our approach can analyze larger ANN implementations and substantially reduce the verification time compared to state-of-the-art techniques that use SMT solving.
\end{abstract}

\keywords{quantized neural networks, software verification, satisfiability modulo theories, bounded model checking}

\maketitle

\section{Introduction}
\label{sec:intro}

Artificial neural networks (ANNs) are soft computing models usually employed for regression, machine learning, decision-making, and pattern recognition problems~\cite{bishop2006PRML}, which 
have been recently used to perform various safety-critical tasks. For instance,  \ann{}s are employed for Covid-19 diagnosis~\cite{Nour2020}, and for performing steering commands in self-driving cars~\cite{Wu2021}. Unfortunately, in such contexts, incorrect classifications can cause serious problems. Indeed, adversarial disturbances can make ANNs misclassify objects\Ignore{simple patterns}, thus causing severe damage to users of safety-critical systems. For instance, Eykholt~\etal{}~\cite{DBLP:conf/cvpr/EykholtEF0RXPKS18} showed that noise and disturbances, such as graffiti on traffic signals, could result in target misclassification during the operation of computer vision systems. Moreover, given that ANNs are notorious for being difficult to interpret and debug, the whole scenario becomes even more problematic~\cite{LundbergL17}, which then claims for techniques able to assess their structures and verify results and behaviors. For this reason, there is a growing interest in verification methods for ensuring safety, accuracy, and robustness for neural networks. The approaches for ANN verification may be divided into three groups: optimization~\cite{Fazlyab2020,Rossig2020,Venzke2021,Huang2020}, reachability~\cite{Huang2019,Xiang2018,Tran2020,Ivanov2021,Katz2019,Xiang2018,Wang2018}, and satisfiability~\cite{Narodytska2018,huang2017safety,katz2017reluplex,Henzinger2020}. 

On the one hand, optimization-based algorithms pose the safety verification problem as an optimization one, in which safety properties are usually treated as constraints, as described by Tjeng {\it et al.}~\cite{Tjeng2018}. The main difficulty of optimization methods, such as mixed-integer linear programming~\cite{Botoeva2020,Tjeng2018,Venzke2021}, branch and bound~\cite{Rossig2020}, and semi-definite programming~\cite{Fazlyab2020}, is to deal with constraints that are non-linear and non-convex due to a network's complex structure and its activation functions. Indeed, it is still possible to employ dual optimization for simplifying those constraints and then obtain a convex problem~\cite{Dvijotham2018}; however, completeness tends to be lost due to relaxations. On the other hand, reachability-based approaches aim at computing the reachable set of an ANN by propagating input sets through it, layer-by-layer, while checking whether some unsafe state (violation) belongs or not to that same reachable set. The main advantage of those methods is that they are usually sound, i.e., if the algorithm indicates that a network is unsafe, its safety property is violated. However, the computational cost to compute exact reachable sets becomes unreasonable for more complex ANNs and more extensive input spaces. In order to avoid such a problem, a reachable set is over-approximated by using symbolic~\cite{Katz2019,katz2017reluplex,Wang2018} and/or set-theoretic methods~\cite{Xiang2018,Tran2020}. Although those tools effectively reduce the computational cost of reachability sets, it is still challenging to over-approximate ANN's non-linear elements, particularly their activation functions. There are some symbolic techniques suitable for dealing with over-approximation of activation functions~\cite{Ivanov2021,Huang2019}; however, most of the approaches available in literature are only able to approximate piecewise-linear and rectified linear unit (ReLU) activation functions.

Finally, satisfiability modulo theories (SMT) encode both ANN and desired safety property into a single logic formula using a decidable fragment of first-order logic, and then check whether a counterexample exists. In this regard, only binarized neural networks~\cite{Rastegari2016,Hubara2016} can be encoded into boolean logic and verified with existing SAT solvers~\cite{Cheng2018,Narodytska2018}. More complex ANNs, whether implemented in floating- or fixed-point~\cite{Kim2015,Lin2016}, the latter aiming at efficiency and simplicity, require the use of first-order logic instead of propositional logic to exploit more abstract and less expensive techniques to solve the problem at hand. For example, SMT solvers often integrate a simplifier, which applies standard algebraic reduction rules and contextual simplification to simplify the logical formula. Regarding these, several SMT-based approaches have been proposed~\cite{pulina2012challenging,huang2017safety,katz2017reluplex,Katz2019,Baranowski2020,Giacobbe2020,Henzinger2020}. While SMT background theories allow those approaches to model the semantic of neural operations exactly using word-level theories, the resulting verification problem is challenging to solve~\cite{pulina2012challenging}. In this respect, quantization, i.e., a representation with a lower number of bits, has been proven to make this problem even computationally harder~\cite{Henzinger2020}. As a consequence, most existing approaches specialize in simple piecewise-linear activation functions~\cite{katz2017reluplex,Baranowski2020,Giacobbe2020}, focus on the floating-point scenario only~\cite{Katz2019}, or require domain-specific abstractions~\cite{huang2017safety}.

Against this background, we propose a novel approach to verify both fixed- and floating-point ANN implementations. Our main idea is to look at the source code of an ANN rather than the abstract mathematical model behind it. By doing so, we can then leverage many recent advances in software verification that can dramatically increase the computational efficiency of verification processes, as observed in our experimental evaluation. More specifically, in this paper, we make the following original contributions:
\begin{itemize}
    \item We cast the ANN verification problem into a software verification one. On the one hand, we propose a method to represent ANN safety properties as pairs of \texttt{assume} and \texttt{assert} instructions. On the other hand, we explain how to represent fixed- and floating-point operations in a quantized ANN, using direct implementations of their behavior, i.e., representations that consider a target precision.
    \item We introduce several pre-processing steps to increase the efficiency of downstream software verification tools. Namely, we give a principled method to discretize non-linear activation functions and replace them with lookup tables. Furthermore, we show how to bound the feasible range of each variable with interval analysis and how to represent those bounds with additional \texttt{assume} instructions.
    \item We detail which existing techniques for search-space reduction can be borrowed from the software verification literature, and we empirically evaluate their individual and cumulative effects.
    \item We evaluate our approach on fixed- and floating-point ANNs and give empirical evidence on its computational efficiency. In particular, we show that we can verify ANNs with hundreds of neurons in less than an hour.
    \item We compare our approach with state-of-the-art (SOTA) techniques, including quantized and floating-point tools. According to the comparison, since our method applies various optimization techniques before invoking the SMT solver, we have better performance than other SMT-based verification tools.
\end{itemize}

\textit{Outline}. In Section~\ref{sec:preliminaries}, we introduce the ANN verification problem and present existing satisfiability modulo theories. In Section~\ref{sec:verification}, we detail all the steps involved in our code-level verification approach for ANNs. In Section~\ref{sec:exp}, we empirically test our approach on ANN classifiers trained on the classic Iris dataset and an image recognition dataset. In Section~\ref{sec:related_work}, we give a broader review of the recent trends in verifying ANNs. In Section~\ref{sec:conclusion}, we conclude and outline possible future work.

\section{Preliminaries}
\label{sec:preliminaries}

Before introducing the details of our verification approach, let us review some important concepts related to the verification of artificial neural networks.

\subsection{Artificial Neural Networks (ANNs)}
\label{ssec:ann}

Modern ANNs are universal function approximators built by composing multiple copies of the same basic building block, called neuron~\cite{bishop2006PRML}. In other words, they provide a way of constructing system models with a set of sample observations, in such a way that the joint behavior of existing neurons is correctly adjusted. In their most common form, each neuron $k$ is itself the composition of two functions, as illustrated in Fig.~\ref{fig:neuron}. The first one is an affine projection of the $m$ local inputs, often referred to as the \textit{activation potential} $u_k$. The second one is a non-linear transformation of the resulting potential, often referred to as \textit{activation function} $\mathcal{N}_k$. Together, they define the following mapping $n_k:\mathbbm{R}^m\to\mathbbm{R}$:

\begin{equation}
\label{eq:anncalc_active}
    y_k = \mathcal{N}_k(u_k),
\end{equation}
\noindent where
\begin{equation}
\label{eq:anncalc_aff}
    u_k\left(x\right) = \sum_{j=1}^{m}{w_{j,k} x_{j}} + b_k.
\end{equation}

\noindent Finally, $b_{k}$ provides a way of directly shifting a given activation function.

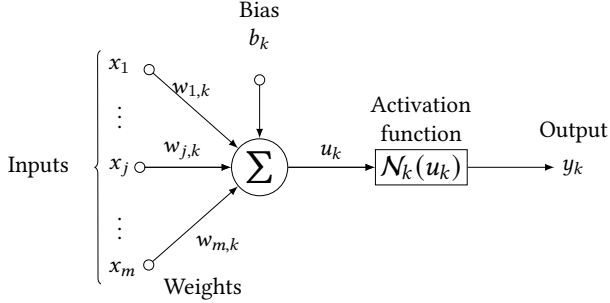
\begin{figure}[htb]
	\resizebox{0.6\linewidth}{!}{
		\begin{tikzpicture}[
		init/.style={
			draw,
			circle,
			inner sep=2pt,
			font=\Huge,
			join = by -latex
		},
		squa/.style={
			draw,
			inner sep=2pt,
			font=\Large,
			join = by -latex
		},
		start chain=2,node distance=13mm
		]
		\node[on chain=2] 
		(x2) {$x_j~$};
		\node[on chain=2,init] (sigma) 
		{$\displaystyle\Sigma$};
		\node[on chain=2,squa,label=above:{\parbox{2cm}{\centering Activation \\ function}}] (actfun)  
		{$\mathcal{N}_k(u_k)$};
		\node[on chain=2,label=above:Output,join=by -latex] 
		{$y_k$};
		\begin{scope}[start chain=1]
		\node[on chain=1] at (0,1.5cm) 
		(x1) {$x_1~$};
		\end{scope}
		\begin{scope}[start chain=3]
		\node[on chain=3] at (0,-1.5cm) 
		(x3) {$x_m$};
		\end{scope}
		
		\node[label=above:\parbox{2cm}{\centering Bias \\ $b_k$}] at (sigma|-x1) (b) {};
		\node[above of = x2, node distance = 0.85cm] (dots) {$\vdots~$};
		\node[below of = x2, node distance = 0.8cm] (dots) {$\vdots~$};
		\draw[o-latex] (x1.east) -- node[above] {$w_{1,k}$} (sigma);
		\draw[o-latex] (x2.east) + (-0.15,0) -- node[above] {$w_{j,k}$} (sigma);
		\draw[o-latex] (x3.east) -- node[below,xshift=0.4cm] {$w_{m,k}$} (sigma);
		\draw[o-latex] (b) -- (sigma);
		\draw[-latex] (sigma) -- node[above] {$u_k$} (actfun);
		\node[below of = x3, xshift = 1.2cm, yshift = 1cm] (weights) {Weights}; 
		\draw[decorate,decoration={brace,mirror}] (x1.north west) -- node[left=10pt] {Inputs} (x3.south west);
		\end{tikzpicture}
	}
	\caption{The detailed view of a single neuron $n_k$.}
	\label{fig:neuron}
\end{figure}

The behavior of the basic neuron in Fig.~\ref{fig:neuron} depends on the values of its weights $w_k$ and also on the chosen activation function $\mathcal{N}_k$. In this regard, researchers have experimented with a wide range of functions, including non-monotonic~\cite{Parascandolo2016,Sitzmann2020}, non-continuous~\cite{bishop2006PRML}, and unbounded ones~\cite{Nair2010,Hendrycks2020}. In our experiments, which are available in Section \ref{sec:exp}, we cover the most popular activation functions: namely, ReLU, sigmoid (Sigm), and the re-scaled version of the latter known as hyperbolic tangent (TanH):
\begin{align}
\label{eq:neuronout_relu}
    \mathcal{N}_{\mathrm{ReLU}}(u_k) &= \max(0,u_k)
    \\
\label{eq:neuronout}
    \mathcal{N}_{\mathrm{Sigm}}(u_k) &= \big(\mathrm{1} + e^{-u_k}\big)^{-1}
    \\
\label{eq:neuronout_tanh}
    \mathcal{N}_{\mathrm{TanH}}(u_k) &= 2\mathcal{N}_{\mathrm{Sigm}}(2u_k)-1.
\end{align}

At the same time, one may notice that many state-of-the-art verification tools for ANNs are only compatible with ReLU and similar piece-wise linear activation functions~\cite{katz2017reluplex,Baranowski2020,Giacobbe2020}. Moreover, those that do support more activation functions~\cite{Ivanov2021,Huang2019,Katz2019} often incur a significant performance hit, when solving the resulting non-linear verification problem. In contrast, the discretization technique we propose in Section \ref{ssec:discretise} allows us to efficiently verify ANNs with any form of activation function.

Besides, our verification methodology is general enough to be applied to a large variety of ANN architectures. Specifically, we support any feedforward, convolutional~\cite{Lecun1998}, recurrent~\cite{Graves2012}, and graph neural network~\cite{WuZonghan2021} that is built from the composition of the basic neuron model in Fig.~\ref{fig:neuron}. Similar to what has been reported in existing ANN verification studies~\cite{Bastani2016,huang2017safety,katz2017reluplex}, the primary factor influencing our verification time is the number of non-linearities, in a neural network, rather than its architecture (see Section \ref{sec:exp}).

\subsection{Quantized Neural Networks  (QNNs)}
\label{ssec:quantisedann}

As the deployment of ANNs in software applications becomes widespread, concerns about power consumption and complexity of large models increase. In this light, one of the main techniques to reduce energy requirements related to ANN inference is \textit{quantization}~\cite{Kim2015}, which further restrict operations required to compute the output of each neuron (see \ref{eq:anncalc_active} and \ref{eq:anncalc_aff}) to integer~\cite{Lin2016} or even binary representations~\cite{Rastegari2016,Hubara2016}. State-of-the-art methods to perform such a transformation significantly improve the low-power feature of ANNs while retaining the original predictive accuracy~\cite{Guo2018}.

At the same time, the discretized nature of quantized neural networks (QNN) generates unique challenges regarding their verification~\cite{Henzinger2020}. More specifically, the output and intermediate computations performed by a network may differ from their floating-point counterparts. Thus, verification tools that operate on non-quantized ANN may return incorrect results.

We demonstrate this with the following motivating example. Assume that we want to verify the neural network in Fig.~\ref{fig:motivating-example}, which relies on the activation function ReLU and whose output can be directly computed as:
\begin{equation}\label{eq:motivating-example}
f(x_1,x_2) = A + B = \mathrm{ReLU}(2x_1 - 3x_2) + \mathrm{ReLU}(x_1 + 4x_2).
\end{equation}
Furthermore, assume that, in our example application, the output of this ANN must never fall below $f(x_1,x_2)\geq2.7$, and that we want to verify whether this is true for the input $(x_1,x_2)=(0.749,0.498)$.
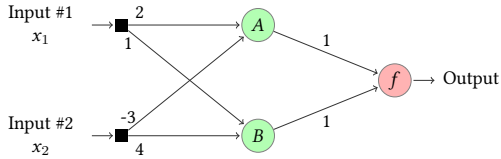
\begin{figure}[htb]
	\centering
	\def\layersep{2.5cm}
	\resizebox{0.5\linewidth}{!}{
		\begin{tikzpicture}[shorten >=1pt,->,draw=black!80, node distance=\layersep]
		\tikzstyle{every pin edge}=[<-,shorten <=1pt]
		\tikzstyle{neuron}=[circle,draw=black!50,fill=black!25,minimum size=17pt,inner sep=0pt];
		\tikzstyle{input neuron}=[neuron, fill=green!30];
		\tikzstyle{output neuron}=[neuron, fill=red!30];
		\tikzstyle{hidden neuron}=[neuron, fill=blue!30];
		\tikzstyle{annot} = [text width=4em, text centered];
		\tikzstyle{input}=[rectangle, fill=black];
		\node[input, pin=left:{\begin{tabular}{c} Input {\#}1 \\ $x_1$\end{tabular}}] (IP-1) at (-\layersep,-2) {};
		\node[input, pin=left:{\begin{tabular}{c} Input {\#}2 \\ $x_2$\end{tabular}}] (IP-2) at (-\layersep,-4) {};
		\node[input neuron] (I-1) at (0,-2) {$A$};
		\node[input neuron] (I-2) at (0,-4) {$B$};
		\node[output neuron,pin={[pin edge={->}]right:Output}, right of=I-1, yshift = -1cm] (O) {$f$};
		\path (IP-1) edge node[above,at start, anchor= south west] {2} (I-1);
		\path (IP-2) edge node[above,at start, anchor= south] {-3} (I-1);
		\path (IP-1) edge node[above,at start, anchor= north] {1} (I-2);
		\path (IP-2) edge node[above,at start, anchor= north west] {4} (I-2);
		\path (I-1) edge node[above] {1} (O);
		\path (I-2) edge node[below] {1} (O);
		\end{tikzpicture}
	}
	%
	\caption{A simple fully-connected neural network with ReLU activations and biases set to zero (not shown).}
	\label{fig:motivating-example}
\end{figure}

Now, if we run an experiment with real numbers $\mathbb{R}$ (from the mathematical domain), the result is $f(0.749,0.498)=2.745$, which satisfies our safety property $f(x_1,x_2)\geq2.7$. However, if the same ANN is quantized to a lower precision, this is not the case anymore. Indeed, for a QNN with 4-bit integer and 6-bit fractional precision, its output becomes $\hat{f}(0.749,0.498)=2.6867$, which violates our property. It is worth mentioning that such discrepancies can be even worse when larger ANNs are employed, due to cumulative error in long computation chains. Thus, in our verification approach, we make sure the actual implementation model used in an ANN implementation is captured (see Section \ref{sec:fixedpointOM}).

Besides, we could formulate another research question of interest: what is the deepest quantization that can be applied to a given ANN so that it makes correct decisions? This way, for instance, we would be able to target heavily restricted devices while still keeping the implementation correctness based on formal guarantees. Although that is not the focus of the present work, it provides the first step towards that goal. Moreover, it paves the way for a complete verification framework suitable to ANN implementations in embedded devices.

\subsection{Safety properties for ANNs and QNNs}
\label{ssec:safetyprop}

Let us now formalize the concept of safety property we briefly mentioned in the previous Section \ref{ssec:quantisedann}. In general, a safety property defines the set of states that a system is designed to reach safely. In software verification, such properties are usually defined according to a user's domain knowledge, which allows him to state which program behaviors are safe~\cite{Alpern1987}. In ANN verification, the black-box nature of their associated computation means that safety properties are usually defined on the inputs and outputs alone~\cite{Huang2020survey,Liu2021}. In this paper, we often refer to safety properties in the following form:
\begin{equation}
\label{eq:safetyimplication}
    \mathbf{x}\in\mathcal{H}\implies f(\mathbf{x})\in\mathcal{G},
\end{equation}
where $\mathbf{x}$ is an input vector, $\mathcal{H}$ is an input region, $f(\mathbf{x})$ is the corresponding output, and $\mathcal{G}$ is an output region. However, one may notice that our verification method supports any safety property that can be expressed in first-order logic (see Section~\ref{sec:safetypropertiesincode}).

A powerful and general way to define an input region $\mathcal{H}$ is choosing a center point $\mathbf{x}\in\mathcal{D}$ in the input domain $\mathcal{D}$, and letting the set $\mathcal{H}(\mathbf{x},d_{in})$ cover the whole neighborhood of points around it that are within a given distance $d_{in}(\mathbf{x},\mathbf{x}')\leq 1$~\cite{Huang2020survey,Liu2021}. As an example, in the field of image classification, robustness properties are defined in this way~\cite{Szegedy2014}. For continuous input domains $\mathcal{D}\equiv\mathbbm{R}^m$, such a distance is often defined in terms of the family of $p$-norms as follows:
\begin{equation}
\label{eq:genpnorm}
    d_p(\mathbf{x},\mathbf{x}') = ||\mathbf{x},\mathbf{x}'||_p=\Big(\sum_{i=1}^m|x_i-x'_i|^p\Big)^{\frac{1}{p}},\qquad\textit{with }p\in[1,\infty),
\end{equation}
\noindent where $p=1$ is the Manhattan distance and $p=2$ is the Euclidean distance. Furthermore, this definition can be extended to $p=\infty$ by introducing the so-called infinity or maximum norm $d_{\infty}(x,x')=\max_i(|x_i-x'_i|)$. Note that input regions defined through $p_{\infty}$ can be described by a set of linear constraints, a fact that makes them attractive to the verification community for efficiency reasons~\cite{katz2017reluplex,Wang2018,Singh2018deepZ}. Also, input vectors can be re-scaled using a diagonal matrix $Z$, allowing us to define hyper-ellipsoids (if $p=2$) and hyper-rectangles (if $p=\infty$) in the input space:
\begin{equation}
\label{eq:gengnorm}
    \hat{d}_p(\mathbf{x},\mathbf{x}',Z) = d_p(Z\mathbf{x},Z\mathbf{x}').
\end{equation}

\noindent Moreover, further attention is required if the input domain $\mathcal{M}$ is discrete in nature, for instance, in natural language processing (NLP) applications. However, a mapping to a continuous space is often available~\cite{Jia2019}.

Once we establish a definition for the input set $\mathcal{H}$ in \eqref{eq:safetyimplication}, we can complete the definition of our safety property by choosing the corresponding output set $\mathcal{G}$~\cite{Liu2021}. For regression tasks, we can again define a safe neighborhood around an output point $f(\mathbf{x})$ within a given distance $d_{out}(f(\mathbf{x}),f(\mathbf{x}'))\leq 1,\forall\mathbf{x}'\in\mathcal{H}(\mathbf{x},d_{in})$. For classification tasks, the output set $\mathcal{G}$ often comprises all points that assign the highest score to the desired class, e.g., $\mathcal{G}\equiv \{\mathbf{y}|(\mathbf{y}=f(\mathbf{x}),\forall\mathbf{x}\in\mathcal{D})\land (y_i>y_j,\forall j\neq i)\}$ for output class $i$. In Section \ref{sec:safetypropertiesincode}, we show how to define this kind of safety properties inside our verification tool.

\subsection{Satisfiability Modulo Theories (SMT)}
\label{ssec:SMT}

Once we have defined a safety property $P$, according to \eqref{eq:safetyimplication}, we need to verify that it always holds for our (quantized) neural network. As we mentioned in Section \ref{sec:intro}, there exist many approximate techniques to do so. However, in this paper, we focus on bit-precise verification via \textit{satisfiability modulo theories} (SMT) solvers~\cite{Barrett2018}.

Similar to Boolean Satisfiability (SAT) solving~\cite{Vizel2015}, the SMT approach to verification works by converting a verification problem at hand into a logic formula and then checking whether it is satisfiable. However, SMT extends SAT beyond boolean logic and allows us to model a verification problem as a decidable subset of first-order logic. At the same time, the interpretation of these models is restricted to a combination of \textit{background theories}, which are written in first-order logic with equality. More formally, given a first-order formula $F$, encoding a verification problem, and a background theory $T$, we say that $F$ is $T$-satisfiable if and only if there exists an assignment such that the union $F\cup \{T\}$ is satisfiable.

The modeling power of SMT comes from the variety of background theories $T$ that we can use. Those theories model the semantic of common mathematical objects like real, floating-point, and integer numbers, arrays, lists, bit vectors, and the operations defined on them for computational problems~\cite{Barrett2010}. While modeling capabilities of SMT are still being extended to new domains (e.g., the work of de Salvo Braz~\cite{DeSalvo2016}), mainstream SMT solvers (e.g., Z3~\cite{de2008z3}, CVC4~\cite{barrett2011cvc4}, and Boolector~\cite{brummayer2009boolector}) already offer native support for all theories above.

\subsection{Existing SMT approaches for ANNs and QNNs}
\label{ssec:existingfwl}

SMT approaches have been applied to an extensive range of verification problems~\cite{Barrett2018}. In this section, we review existing approaches for ANNs and QNNs. One may notice that due to the SMT paradigm flexibility, such approaches vary in the abstraction level at which they tackle a verification problem.

Early research applied existing SMT solvers to the verification of real-valued ANNs and showed some difficulties in scaling beyond toy examples~\cite{pulina2012challenging}. More recently, Katz \textit{et al.} proposed to extend the background theory of real numbers and include an extra predicate for the ReLU activation function~\cite{katz2017reluplex}. Since each ReLU doubles the number of verification formulas, they introduced a dedicated lazy solver, called Reluplex, which only visits a relevant subset of formulas. Their algorithm has been subsequently extended to arbitrary piecewise-linear activation functions~\cite{Katz2019}. An alternative approach by Huang \textit{et al.} asks a user to define a problem-dependent set of micro-manipulations that the SMT solver can chain to search a state space~\cite{huang2017safety}. With this approach, they can scale to medium-sized ANNs for image classification. Furthermore, verification approaches based on real number computation can be easily extended to cover floating-point implementations of ANNs~\cite{katz2017reluplex,Singh2018deepZ}.

In contrast, SMT methods to verify QNNs have to contend with a more challenging computational problem, from the theoretical perspective~\cite{Henzinger2020}. In this respect, Giacobbe \textit{et al.} chose to represent QNN operations with the bit-vector background theory and showed that the associated verification results can be very different from their real and floating-point counterparts~\cite{Giacobbe2020}. Similarly, Baranowski \textit{et al.} proposed a new fixed-point background theory and tested it on some small QNNs~\cite{Baranowski2020}. In general, low-level optimizations in SMT encoding of QNNs are shown to speed up verification processes considerably~\cite{Giacobbe2020,Henzinger2020}. In the extreme case of binarized neural networks, where quantization only allows two binary states for each variable, a verification problem can be reduced to SAT solving~\cite{Narodytska2018}. In addition, hardware-level optimizations are crucial for efficiency too~\cite{Cheng2018}.

In summary, our methodology is a generalization of the previous work by Sena \textit{et al.} focused on SMT verification of CUDA implementations of ANNs~\cite{Sena20}. As we expound in Section \ref{sec:verification}, we take advantage of existing techniques in software verification to model both ANNs and QNNs as SMT formulas. Our novelty lies in the encoding of fixed-point operations and the efficient treatment of non-linear activation functions, which allows us to verify networks beyond the simple ReLU function.

\section{A methodology for Verifying Quantized Neural Networks}
\label{sec:verification}


While we usually think of neural networks as mathematical models, their implementation is actually written in source code, in a given language. Thus, in this respect, neural networks can be treated like any other piece of software. The advantage of this strategy is twofold. First, we can readily adapt many existing software verification techniques to ANNs and QNNs. Second, we give a user access to these highly technical verification tools in a familiar coding framework.

This section lists the sequence of steps required to verify ANNs in such a way. To this end, we assume that an ANN is given as input in the form of a piece of single-threaded C code (see Section \ref{ssec:codegen}). Furthermore, we explain how to represent a quantized ANN by calling our finite-word length (FWL) implementation models, which are discussed in Section \ref{sec:fixedpointOM}. Likewise, we shown how to discretize each activation function with the algorithm in Section \ref{ssec:discretise}.

Once the code has been prepared in this way, the user can specify the desired safety property with \texttt{assume} and \texttt{assert} statements, as detailed in Section \ref{sec:safetypropertiesincode}. Then, we compute a reachable set of values for each variable, using the invariant inference techniques in Section \ref{sssec:invariant}. Finally, we verify the safety property via SMT model checking, as explained in Section \ref{sec:modelchecking}. All the techniques we use to reduce the search space of the SMT solver are listed in Sections \ref{sec:IncrementalVerificationusingInvariantInference} and \ref{sec:SearchSpaceReduction}.

Our whole verification methodology is summarized in Fig. \ref{fig:fp-nn-methodology}. Furthermore, we conclude in Section \ref{ssec:imagerobustnessexample} with a complete walk-through example of our workflow.

\begin{figure*}[htb]
\centering
\includegraphics[width=0.98\textwidth]{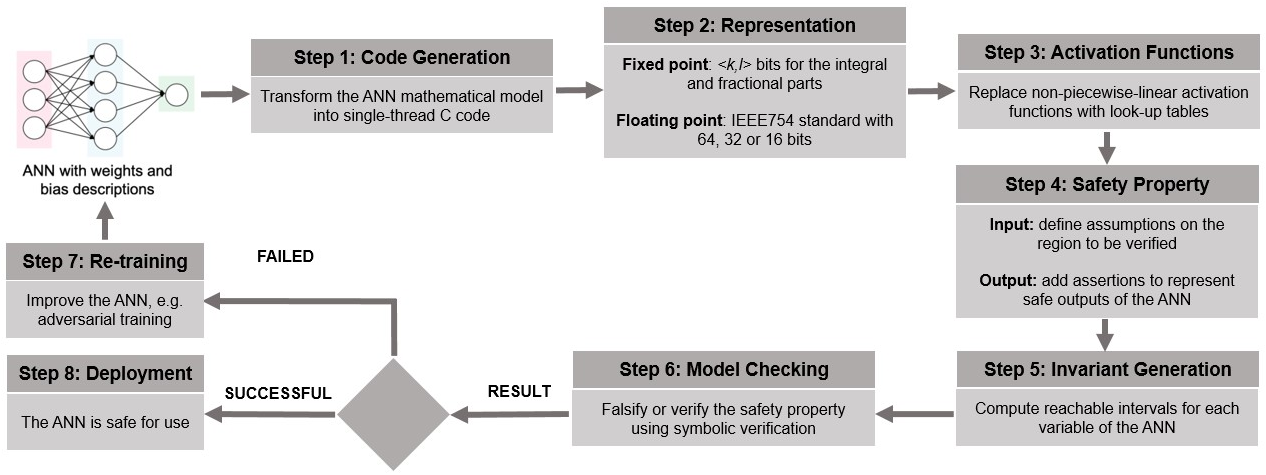}
\caption{The proposed verification workflow for fixed- and floating-point ANNs.}
\label{fig:fp-nn-methodology}
\end{figure*}

\subsection{ANN code generation}
\label{ssec:codegen}

The current mainstream approach to ANN development uses high-level machine learning libraries such as TensorFlow and PyTorch to define the architecture of the neural network and train its weights. Once the development phase is over, a final implementation of the ANN is produced, targeting a specific computer architecture, e.g. AMDx64, CUDA-enabled GPUs~\cite{OH20041311}, embedded systems running on FPGAs~\cite{10.1007/978-3-540-45234-8_120}, etc. Depending on the application, these implementations are optimized with several objectives in mind, ranging from speed of inference to energy consumption and memory required~\cite{10.1145/3309551,7551399}.

In this paper, we use the C language as an abstraction of all these possible system realizations, with the addition of implementation models to represent fixed-point arithmetic (see Section \ref{sec:fixedpointOM}). Furthermore, we limit our scope to sequential code, and leave the verification of concurrent implementations of ANNs (e.g. CUDA) for future work.

At the same time, given the mathematical model of a specific ANN (see Section \ref{ssec:ann}), there exist multiple possible sequential implementations of it. This is because neural networks are highly parallel, since the output of all neurons in a single layer can be computed independently. Furthermore, the activation potential $u_k$ of each neuron (see Equation \ref{eq:anncalc_aff}) is the result of a sequence of multiply-and-accumulate (MAC) operations, whose order can be changed arbitrarily.

In our experiments in Section \ref{sec:code-balancing}, we show that our verification framework is insensitive to changes in the order of the basic operations performed by the ANN. In other words, all equivalent implementations of the same ANN will yield the same verification performance in terms of time, memory usage and outcome. Thus, for the remainder of this section, we work under the assumption that a specific implementation is given, and detail the sequence of processing steps required for its verification.

\subsection{Implementation models for fixed-point ANN implementations}
\label{sec:fixedpointOM}

In this section, we discuss how our implementation models work to support fixed-point verification of neural network implementations.

Generally, there are two ways of supporting fixed-point neural network implementations~\cite{Giacobbe2020}: ($1$) to converting inputs into fixed-point and perform all the underlying steps, e.g., training and validation, in fixed-point; or ($2$) converting trained models and neural network operations, {e.g.}, realization, from floating-point representation into fixed-point, which is then followed by a check of the desired properties. The former is likely to produce better representations, but the latter is likely to be more practical~\cite{Hubara2017}, mainly because datasets are usually provided in floating-point representation. In the present work, we have chosen the latter. Moreover, such a method, also known as network compression or quantization, is also the usual way of deploying neural networks on restricted devices, which reinforces its use. 

Our goal is to transform an existing model (and its constraints) defined in the C programming language into a fixed-point representation. Here, a fixed-point format is specified as $\langle k, l \rangle$, where $k$ denotes the number of bits to encode its sign and integral part, resulting in a representation $I$, and $l$ indicates the number of bits to encode its fractional part, resulting in $F$. Furthermore, given a rational number, we can represent it in fixed-point by using $k + l$ bits, which is interpreted as $I + \frac{F}{2^l}$. Such representation allows us to take a hardware platform's limitations, where a specific model will be executed, into account, in such a way that a more suitable implementation is provided. Moreover, in the present context, two's complement is used for value representation and arithmetic operations, due to some advantages, such as the wrap-around effect \cite{Chaves2019}. For instance, if we want to encode number $+3.25$ into format $\langle 5, 3 \rangle$, it will give rise to the following representation in memory: $\{00011|010\}$, with the most significant bit (i.e., $0$) indicating the sign "$+$", $I = 3$, and $F=2$. 

In order to model the quantization effect on an ANN's computation steps, we need to convert each arithmetic operation (addition, subtraction, multiplication, or division) from floating-point to their respective fixed-point counterparts. In particular, these operations and conversions must take into account the parameters $k$ and $l$, along with the sign bit. We achieve this goal with the implementation models proposed by Chaves {\it et al.}~\cite{Chaves2019}, which have been extensively validated in the digital controller domain. Indeed, they replace the mentioned arithmetic operations (i.e., "$+$", "$-$", "$*$", and "$/$") and then return results according to a specific precision. Furthermore, these implementation models formally define a set of methods and values that precisely represent fixed-point operations' behavior.

In Fig.~\ref{fig:fxp-sample-conversion-code} we show an example of how to convert a piece of floating-point source-code into a fixed-point representation with the proposed implementation models. Here, we have a code snippet that computes the activation potential of a single neuron, one of the basic operations in ANNs. One may notice how the types and operations have been changed in the fixed-point version. In particular, \texttt{fxp\_float\_to\_fxp} transforms a type \texttt{float} into a  type \texttt{fxp\_t} (fixed point), and both \texttt{fxp\_add} and \texttt{fxp\_mult} make sure that the addition and multiplication arithmetic operations are performed in fixed-point and take into account the previously defined desired precision.

\begin{figure}[ht]
\centering
\begin{subfigure}{0.4\textwidth}
\begin{lstlisting}[numbers=left]
float potential(float *w,
                unsigned int w_len,
                float *x,
                unsigned int x_len,
                float b) {
  
  if (w_len != x_len) {
    return 0;
  }
  
  float result = 0;
  
  for (unsigned int i = 0; i < w_len; ++i) {
    result += w[i] * x[i];
  }
  
  result += b;
  
  return result;
}
\end{lstlisting}
\caption{}
\end{subfigure}
\hspace{0.05\textwidth}
\begin{subfigure}{0.4\textwidth}
\begin{lstlisting}[numbers=left]  
fxp_t potential(float *w,
                unsigned int w_len,
                float *x,
                unsigned int x_len,
                float b) {
  if (w_len != x_len) {
    return 0;
  }
  fxp_t result = 0;
  for (unsigned int i = 0; i < w_len; ++i) {
    fxp_t w_fxp = fxp_float_to_fxp(w[i]);
    fxp_t x_fxp = fxp_float_to_fxp(x[i]);
    result = fxp_add(result, fxp_mult(w_fxp, x_fxp));
  }
  fxp_t b_fxp = fxp_float_to_fxp(b);
  result = fxp_add(result, b_fxp);
  return result;
}
\end{lstlisting}
\caption{}
\end{subfigure}
\caption{A method to compute the activation potential of neurons implemented in C: (a) floating-point and (b) fixed-point versions.}
\label{fig:fxp-sample-conversion-code}
\end{figure}

In summary, the fixed-point version of an ANN's code references the appropriate implementation models, thus ensuring that the behavior of each fixed-point arithmetic operation is carried out correctly. Our experiments, in Sections \ref{sec:verification-of-anns-with-fwl-implementation-v2} and \ref{sec:adversarial-cases-verification-in-fwl-ann-implementations-v2}, show the impact of different levels of quantization granularity in ANNs.

Finally, another aspect is worth mentioning: for an entirely correct implementation, when a fixed-point format is chosen, one should still represent the dynamic range associated with the target data. If that is not done, overflow occurs, which introduces errors that can jeopardize an ANN's decision.

In other words, if a given variable holds values that range from $-15.5$ to $15.5$, for instance, a format $\langle 2, 2 \rangle$ should not be used because that would lead to frequent overflow events. Specifically, values above $3.75$ would not be represented. Consequently, in this specific case, a format $\langle 5, 2 \rangle$ (note the dynamic range provided by the integer part), for instance, would be suitable, then keeping correct computation in all associated operations.

\subsection{Discretization of non-linear activation functions}
\label{ssec:discretise}

As mentioned in Section \ref{ssec:ann}, the choice of an activation function can have a considerable impact on verification times. While piece-wise linear functions can be readily represented as a (sequence of) if-then-else instructions, non-linear activation functions require careful adjustments to avoid severe performance degradation. This section presents an approach to convert such non-linear functions into look-up tables, thus significantly speeding up verification processes.

Assume that the non-linear activation function $\mathcal{N}:\mathcal{U} \mapsto \mathbb{R}$ is a piece-wise Lipschitz continuous function \cite{searcoid2006}, thus there is a finite set of $a$ locally Lipschitz continuous functions $\mathcal{N}_{i}:\mathcal{U}_{i} \mapsto \mathbb{R}$ for $i\in \mathbb{N}_{\leq a}$, the so-called selection functions, such that the sets $\mathcal{U}_{i}\subset \mathbb{R}$ are disjoint intervals, $\mathcal{N}(u) \in \left\lbrace \mathcal{N}_{1}(u), \ldots ,\mathcal{N}_{a}(u)  \right\rbrace$ holds for all $u \in \mathbb{U}$, $\mathcal{U} = \bigcup_{i\in \mathbb{N}_{\leq a}}{\mathcal{U}_{i}}$,  and
\begin{equation}
\label{eq:lip}
    \| \mathcal{N}_{i}(u_{1}) - \mathcal{N}_{i}(u_{2})   \| \leq \lambda_{i} \| u_{1} - u_{2} \|, \quad \forall u_{1},u_{2} \in \mathcal{U}_{i},
\end{equation}

\noindent where $\lambda_{i}$ denotes the Lipschitz constant of $\mathcal{N}_{i}$. 

The proposed discretisation approach is applied to each subset $\mathcal{U}_{i}$. The general idea consists in discretising the $\mathcal{U}_{i}$ by obtaining the discrete and countable set $\tilde{\mathcal{U}}_{i} \subset \mathcal{U}_{i}$. Then, we build a lookup table for rounding the evaluation of $\mathcal{N}_{i}(u)$ to $\tilde{\mathcal{N}}_{i}(u):\mathcal{U}_{i}\mapsto \mathcal{R}$, and consequently rounding $\mathcal{N}(u)$ to  $\tilde{\mathcal{N}}(u)\in \left\lbrace \tilde{\mathcal{N}}_{1}(u), \ldots ,\tilde{\mathcal{N}}_{a}(u)  \right\rbrace$. This lookup table contains uniformly distributed $N_{i}$ samples within $\mathcal{U}_{i}$, including interval limits, to ensure the accuracy $\| \tilde{\mathcal{N}}_{i}(u) -  \mathcal{N}_{i}(u)\| \leq \epsilon $. 
Let $L_{i}$ be defined as the length of the interval $\mathcal{U}_{i}$, i.e., 
\begin{equation}
\label{eq:int_leng}
    L_{i} \triangleq \sup_{u\in \mathcal{U}_{i}}{u} - \inf_{u\in \mathcal{U}_{i}}{u}.
\end{equation}

\noindent This way, the following Theorem can be used to choose the number of samples $N_{i}$ to ensure the desired accuracy $\epsilon$.

\begin{theorem}
\label{th:approx}
Let the non-linear activation function $\mathcal{N}:\mathcal{U} \mapsto \mathbb{R}$, $\mathcal{N}\in \left\lbrace \mathcal{N}_{1}(u), \ldots ,\mathcal{N}_{a}(u)  \right\rbrace$, be piecewise Lipschitz continuous such that each selection function $\mathcal{N}_{i}(u):\mathcal{U}_{i}\mapsto \mathcal{R}$ presents the Lipschitz constant $\lambda_{i}$, and consider the discrete approximation $\tilde{\mathcal{N}}(u)\in \left\lbrace \tilde{\mathcal{N}}_{1}(u), \ldots ,\tilde{\mathcal{N}}_{a}(u)  \right\rbrace$, where each selection function $\tilde{\mathcal{N}}_{i}:\mathcal{U}_{i}\mapsto \mathbb{R}$, for $i \in \mathbb{N}_{\leq a}$ is obtained with $\mathcal{U}_{i}\subset \mathcal{U}$ containing $N_{i}$ samples. The approximation error is bounded as
\begin{equation}
    \label{eq:accu}
    \| \tilde{\mathcal{N}}(u) -  \mathcal{N}(u)\| \leq \epsilon ,
\end{equation}
 \noindent for a given $\epsilon$, if 

\begin{equation}
    \label{eq:Nlimit}
    N_{i} \geq 1 + \frac{L_{i}\lambda_{i}}{\epsilon}, \forall i \in \mathbb{N}_{\leq a}
\end{equation}

\noindent holds.
\end{theorem}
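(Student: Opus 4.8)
The plan is to prove the bound one selection function at a time and then glue the pieces together using disjointness of the partition $\{\mathcal{U}_i\}$. First I would fix $i\in\mathbb{N}_{\leq a}$ and restrict attention to the interval $\mathcal{U}_i$, whose length $L_i$ is given by \eqref{eq:int_leng}. Since the lookup table for $\mathcal{N}_i$ stores $N_i$ uniformly spaced samples of $\mathcal{U}_i$ with both endpoints included, consecutive samples lie a distance $\delta_i = L_i/(N_i-1)$ apart, so every $u\in\mathcal{U}_i$ admits a stored sample $\tilde u\in\tilde{\mathcal{U}}_i$ with $\|u-\tilde u\|\leq\delta_i$. By construction the discretised selection function returns the tabulated value at this point, i.e.\ $\tilde{\mathcal{N}}_i(u)=\mathcal{N}_i(\tilde u)$.

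The second step is a direct application of the local Lipschitz property \eqref{eq:lip} to the pair $u,\tilde u\in\mathcal{U}_i$:
\begin{equation*}
\|\tilde{\mathcal{N}}_i(u)-\mathcal{N}_i(u)\| = \|\mathcal{N}_i(\tilde u)-\mathcal{N}_i(u)\| \leq \lambda_i\,\|\tilde u-u\| \leq \lambda_i\,\delta_i = \frac{L_i\lambda_i}{N_i-1}.
\end{equation*}
Requiring the right-hand side to be at most $\epsilon$ and solving for $N_i$ yields exactly $N_i\geq 1+L_i\lambda_i/\epsilon$, which is \eqref{eq:Nlimit}; hence, whenever \eqref{eq:Nlimit} holds, the per-interval approximation error is at most $\epsilon$.

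Finally I would lift this to $\mathcal{N}$ itself. Because the $\mathcal{U}_i$ are disjoint and cover $\mathcal{U}$, each $u\in\mathcal{U}$ lies in exactly one $\mathcal{U}_i$, and on that piece $\mathcal{N}(u)=\mathcal{N}_i(u)$ and $\tilde{\mathcal{N}}(u)=\tilde{\mathcal{N}}_i(u)$ by the definitions of the selection families. The bound from the previous step therefore transfers verbatim to $\|\tilde{\mathcal{N}}(u)-\mathcal{N}(u)\|$, and imposing \eqref{eq:Nlimit} for every $i\in\mathbb{N}_{\leq a}$ makes it hold for all $u\in\mathcal{U}$, which is precisely \eqref{eq:accu}.

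I expect the mathematics to be routine; the only points needing care are the modelling conventions. One must pin down that the lookup table rounds the \emph{argument} to a stored sample (so the relevant gap is the full spacing $\delta_i$ — a round-to-nearest rule would shrink it to $\delta_i/2$ and make the stated bound conservative by a factor of two), and that the codomain $\mathcal{R}$ is just the finite set of tabulated values $\{\mathcal{N}_i(s):s\in\tilde{\mathcal{U}}_i\}$, so that no extra quantisation of the function value is introduced beyond the argument rounding. One should also dispose of the degenerate case $N_i=1$, which is compatible with \eqref{eq:Nlimit} only when $L_i=0$, i.e.\ $\mathcal{U}_i$ is a single point and the approximation there is exact.
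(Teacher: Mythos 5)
Your proposal is correct and follows essentially the same route as the paper's proof: bound the sample spacing by $L_i/(N_i-1)$, apply the local Lipschitz property \eqref{eq:lip} to get the per-interval error $L_i\lambda_i/(N_i-1)$, and solve for $N_i$. If anything, your write-up is the cleaner of the two --- the paper's proof states the implication in the reverse direction (``if \eqref{eq:accu} holds \ldots then \eqref{eq:Nlimit} holds''), whereas you argue the direction the theorem actually asserts, and you additionally pin down the rounding convention and the degenerate case $N_i=1$, which the paper leaves implicit.
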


\begin{proof}
Given that the length of each interval $\mathcal{U}_{i}$ is $L_{i}$ (cf. \eqref{eq:int_leng}), the length of each sub-interval, obtained by uniformly dividing $\mathcal{U}_{i}$ at the $N_{i}$ samples, is $\frac{L_{i}}{N_{i}-1}$. Considering the Lipschitz continuity in~\eqref{eq:lip}, the rounding error for $\tilde{\mathcal{N}}_{i}(u)$ is bounded as
\begin{equation}
\label{eq:ineq_err_1}
    \| \tilde{\mathcal{N}}_{i}(u) -  \mathcal{N}_{i}(u)\| \leq \frac{L_{i}}{N_{i}-1}\lambda_{i}.
\end{equation}

\noindent If~\eqref{eq:accu} holds for all $i \in \mathbb{N}_{\leq a}$, the inequality 
\begin{equation}
\label{eq:ineq_err_2}
     \frac{L_{i}}{N_{i}-1}\lambda_{i} \leq \epsilon
\end{equation}

\noindent and, consequently,~\eqref{eq:Nlimit} also hold. Moreover, from \eqref{eq:ineq_err_1} and \eqref{eq:ineq_err_2}, $\| \tilde{\mathcal{N}}_{i}(u) -  \mathcal{N}_{i}(u)\| \leq \epsilon$ for all $i \in \mathbb{N}_{\leq a}$. 
\end{proof}

Based on Theorem~\ref{th:approx}, the number of samples used in the discretization of nonlinear activation functions, such as the $\mathcal{N}_{\mathrm{TanH}}$ and $\mathcal{N}_{\mathrm{Sigm}}$, described respectively in \eqref{eq:neuronout} and~\eqref{eq:neuronout_tanh}, can be computed to ensure some desired accuracy. Without loss of generality, the approximation $\tilde{\mathcal{N}}_{i}(u)$ can be defined as
\begin{equation}
    \label{eq:ntilde}
    \tilde{\mathcal{N}}_{i}(u) = \mathcal{N}_{i}\left(\mathcal{A}_{i}(u)\right),
\end{equation}
\noindent where $\mathcal{A}_{i}:\mathcal{U}_{i}\mapsto\tilde{\mathcal{U}}_{i}$ is an arbitrary approximation operator, e.g., rounding and quantization.

For instance, consider that we want to obtain the function $\Tilde{\mathcal{N}}_{\mathrm{Sigm}}$, which approximates $\mathcal{N}_{\mathrm{Sigm}}$ based on a discrete domain $\Tilde{\mathcal{U}}$, with target accuracy $\epsilon=0.01$. It is clear that $\mathcal{N}_{\mathrm{Sigm}}$ is globally Lipschitz continuous with constant $\lambda_{\mathrm{Sigm}}=0.25$ since the $
\sup_{u\in \mathbb{U}}{\vert \mathcal{N}_{\mathrm{Sigm}}(u) \vert } = 0.25$, and $\mathcal{U}=\mathbb{R}$. Moreover, let us choose the following three intervals to define the approximation $\Tilde{\mathcal{N}}_{\mathrm{Sigm}}(u)$:
\begin{equation}
    \mathcal{U}_{1}=\left(-\infty,-20\right],
\end{equation}
\begin{equation}
\mathcal{U}_{2}=\left(-20,20\right),
\end{equation}
\noindent and
\begin{equation}
\mathcal{U}_{3}=\left[20,\infty\right),
\end{equation}
\noindent since the derivative of $\mathcal{N}_{\mathrm{Sigm}}(u)$ is negligible for $u\in \mathcal{U}_{1}\cup \mathcal{U}_{3}$, i.e., $\lambda_{1}\approx 0$ and $\lambda_{3}\approx 0$, while the constant $\lambda_{2}$ is equivalent to the global Lipschitz constant, i.e., $\lambda_{2}=\lambda_{\mathrm{Sigm}}=0.25$. Now, we can use \eqref{eq:Nlimit} to compute the number of samples in each interval necessary to ensure the desired accuracy $\epsilon=0.01$. Accordingly, the numbers of samples are $N_{1}=N_{3}=1$ and $N_{2}=1001$ since $L_{2}=40$ (cf.~\eqref{eq:int_leng}). Notice that the approximators $\mathcal{A}_{i}$ can be arbitrarily chosen. For this example, it is suggested to choose $\mathcal{A}_{1}=-20$ and $\mathcal{A}_{3}=20$, because it is not necessary to have more samples than the limits of the intervals for $\mathcal{U}_{1}$ and $\mathcal{U}_{3}$. Finally, $\mathcal{A}_{2}$ can be chosen as the half-towards-zero rounding with 3 decimal digits for floating-point and real ANNs, and as the underlying quantization function for fixed-point ANNs. 

Fig.~\ref{fig:disc} illustrates the effect of the discretization when evaluating the sigmoid function. Note that the approximation fits well for $\epsilon=0.01$, and it becomes poor when $\epsilon$ increases. It is worth mentioning that a look-up table is fundamentally a trade-off between speed and memory. If the latter is not a restriction, verification processes may benefit from such a strategy. Another interesting point is that such a discretization strategy should be under the final desired fixed-point format so that a safety-property verification is not compromised. This way, $\epsilon$ should be arbitrarily small and also much lower than the quantization step incurred by a fixed-point format.

\begin{figure}[!ht]
	\centering
	\begin{subfigure}[t]{0.48\linewidth}
		\centering
        \includegraphics[width=\linewidth]{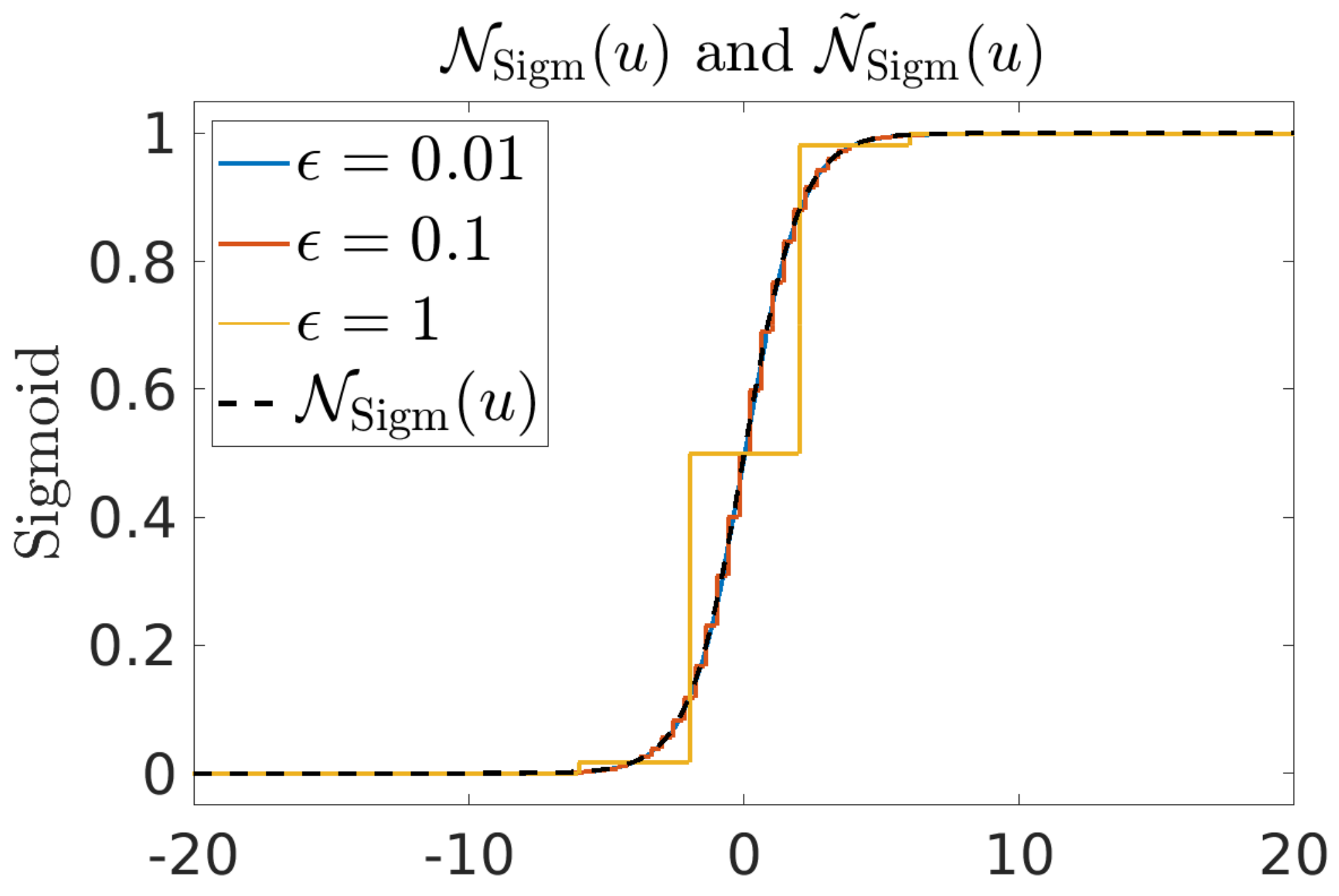}
		\caption{\label{fig:disc_a}}
	\end{subfigure}
	\begin{subfigure}[t]{0.48\linewidth}
		\centering
        \includegraphics[width=\linewidth]{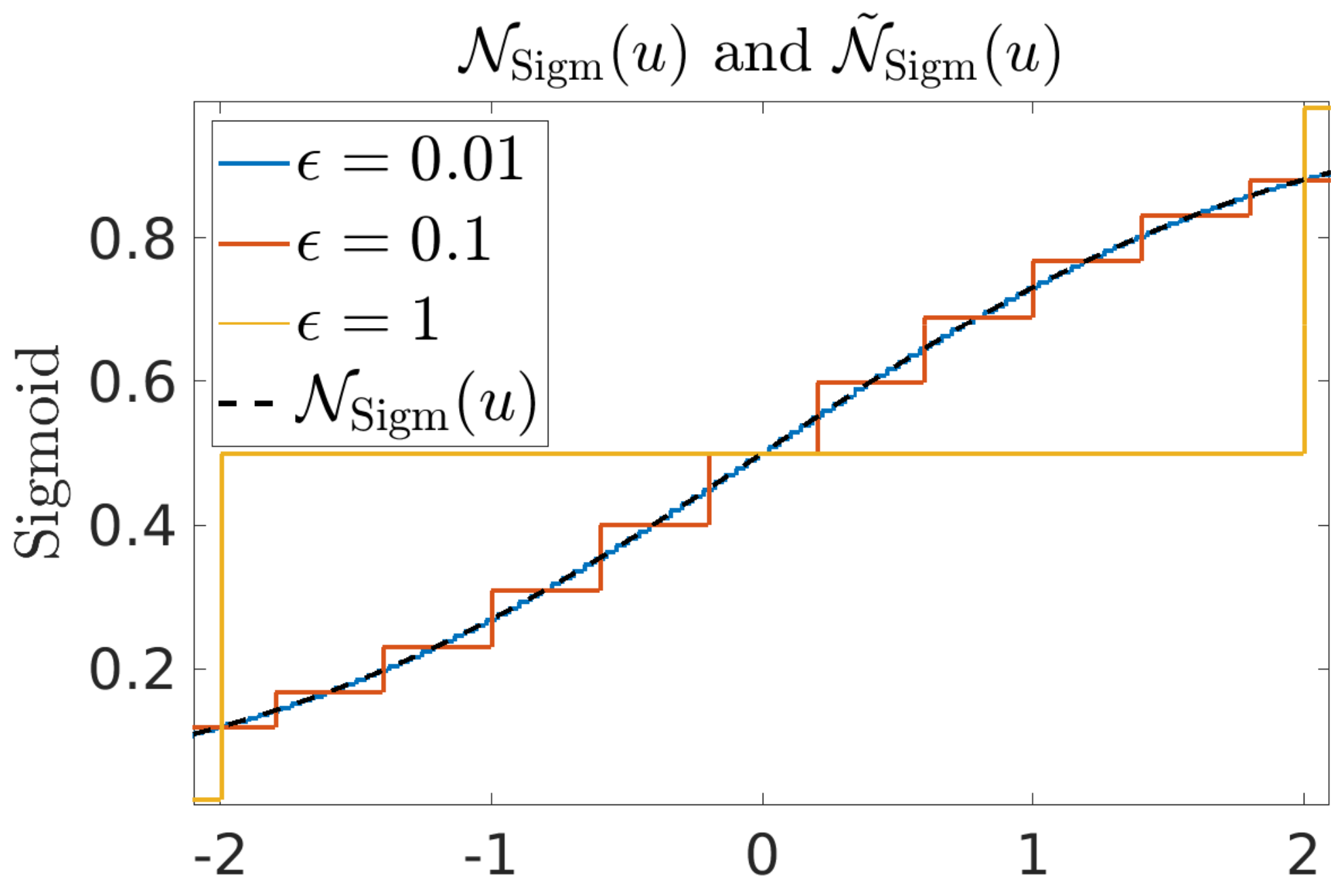}
		\caption{\label{fig:disc_b}}
	\end{subfigure}
	\caption{Comparison between the real sigmoid $\mathcal{N}_{\mathrm{Sigm}}$ and its discretizations $\Tilde{\mathcal{N}}_{\mathrm{Sigm}}$ for $\epsilon=0.01$ ($N_{2}=1001$), $\epsilon=0.1$ ($N_{2}=101$), $\epsilon=1$ ($N_{2}=11$):  (a) sigmoid activation function together with its approximations within the range $\left[ -20,20 \right]$ and (b) a zoom in to show the interval $\left[ -2,2 \right]$. \label{fig:disc}} 
\end{figure}

\subsection{Introducing safety properties in ANN code}
\label{sec:safetypropertiesincode}

As we explain in Section \ref{ssec:safetyprop}, verifying an ANN means proving that a given safety property holds. Such a safety property is a falsifiable mathematical relation defined on the values of an ANN's variables. Since we are considering software implementation of ANNs here, in this section, we then show how to annotate ANN code and also how to specify a desired safety property.

As a preliminary step, we annotate code by replacing the concrete input to the neural network with a general non-deterministic input. We do so by assigning a non-deterministic value to each input variable as in the following example (another example is shown in Figure \ref{figure:neural-net-verification}):
\begin{align}\begin{split}
    \texttt{float x\_1 = nondet\_float()}\\
    \texttt{float x\_2 = nondet\_float()}
\end{split}\end{align}
where we use the notation \texttt{nondet\_float()} prescribed by our underlying verification tool ESBMC~\cite{GadelhaMMC0N18,GadelhaMCN19}. With this, our verification tool knows to expect any possible input, and it is then the role of the safety property (see Equation \ref{eq:safety_prop_input_assume} below) to restrict the input space to the sub-domain of interest.

Let us consider safety properties in the general form $\mathbf{x}\in\mathcal{H}\implies \mathbf{y}\in\mathcal{G}$, where knowing an input vector $\mathbf{x}$, belonging to $\mathcal{H}$, guarantees that the output vector $\mathbf{y}=f(\mathbf{x})$ belongs to $\mathcal{G}$. Consequently, we encode the premise of this implication with a pre-condition instruction \texttt{assume}, specifying the set of values $\mathcal{H}$ that each $x_i\in \mathbf{x}$ can take. For example, a rectangular domain for the input variables $x_1\in[0,2]$ and $x_2\in[-\frac{1}{2},+\frac{1}{2})$ can be encoded as
\begin{equation}
\label{eq:safety_prop_input_assume}
    \texttt{assume(x\_1\;>=\;0\;\&\&\;x\_1\;<=\;2)}\\
\end{equation}
\noindent and
\begin{equation}
    \texttt{assume(x\_2\;>=\;-0.5\;\&\&\;x\_2\;<\;0.5)}.
\end{equation}
\noindent This notation instructs the subsequent SMT model checking to search only the inputs that satisfy the conditions specified in the \texttt{assume} instruction, as it ignores an execution when being false (e.g., see \texttt{\_\_ESBMC\_assume} \cite{AlbuquerqueSBMF17}), thus making sure that the premise of the safety property $\mathbf{x}\in\mathcal{H}$ is satisfied. Note also that the instruction \texttt{assume} is general and supports any boolean condition as its argument.\footnote{When working with quantized representations, whether fixed or floating point, extra care should be taken in checking that the specified constants are rounded in a way that does not break the desired condition.} This way, any form of input region $\mathcal{H}$ can be specified, as long as it is valid C code syntax. At the same time, hyper-rectangular input domains tend to lead to faster verification times, as mentioned in Section \ref{ssec:safetyprop}.

In contrast, we encode the conclusion of the implication with the post-condition instruction \texttt{assert}, specifying the set of values $\mathcal{G}$ that each variable $y_i\in \mathbf{y}$ can safely range in. For instance, if we have a binary classification network with two outputs $y_1$ and $y_2$ indicating the score of each class, we can encode the conclusion of a robustness safety property for the second class as
\begin{equation}
    \texttt{assert(y\_2\;>\;y\_1)}.
\end{equation}
\noindent Consequently, it requires that when that premise is satisfied, our binary network always predicts the second class. As for the input region $\mathcal{H}$, the \texttt{assert} instruction can be used to specify a variety of output regions $\mathcal{G}$, but now making an assessment of what is expected.

\subsection{Invariant inference via interval analysis}
\label{sssec:invariant}

Once the safety property has been specified, as we explain in Section \ref{sec:safetypropertiesincode}, we can inject further \texttt{assume} instructions in the code and reduce the model checker's search space. Indeed, given the sequential nature of ANN computation, the set $\mathcal{H}$ of values allowed by the premise of a safety property also constrains the range of the following intermediate computation steps. Thus, if we can explicitly derive and unfold these additional constraints onto intermediate variables, in such a way that we propagate constraints and benefit from them on subsequent operations, we can more succinctly tell a model checker where to look for counterexamples.

In general, deriving additional (over-approximated) constraints on intermediate computation steps falls under the umbrella of \textit{invariant inference}~\cite{RochaRIC017}. It is based on the discovery of an assertion that holds during the execution a given piece of code, which can then be used in verification procedures. For neural network code, which does not contain loops or dynamic memory allocation, we find that an interval invariant analysis suffices~\cite{Moore2009}. Such a method of invariant analysis computes lower and upper bounds on the values of each program variable (e.g., $a \leq x \leq b$, where $a$, $b$ are constants and $x$ is a variable), by propagating the initial set $\mathcal{H}$ through an ANN with interval arithmetic rules. One may notice that more complex constraint propagation methods (e.g., zonotopes and polyhedra) exist in the literature~\cite{Tran2020}, but whether reduction in search space justifies the additional computational cost is an open problem. Moreover, given that neural network quantization, as tackled here, is already used for integrating this kind of system into restricted devices, low complexity is desired, at least initially.

On the more practical side, there are many tools to perform interval analysis of C code. In our experiments, in Section \ref{sec:exp}, we have used the evolved value analysis (EVA) plugin of the open-source tool FRAMA-C \cite{blanchard2018}. We then inject intervals into ANN code as additional pre-condition instructions \texttt{assume} on intermediate variables, thus covering the entire processing chain. Finally, we have compared this method with the native interval analysis support provided by the state-of-the-art verification tool ESBMC \cite{morse2014esbmc}, in Section \ref{sec:fine-tuning-bmc-parameters-v2}, and found that combining them (both enabled) yields the best results.

\subsection{Model checking ANN implementations}
\label{sec:modelchecking}

Given the annotated C code from Sections \ref{sec:safetypropertiesincode} to \ref{sssec:invariant}, we are now tasked with answering the following verification question: do \textit{all} inputs that satisfy pre-conditions \texttt{assume} also satisfy associated \texttt{assert} post-conditions, in a specific ANN implementation? In other terms, are we able to find at least \texttt{one} specific input that violates a safety property, given an ANN implemented with a specific precision? In this section, we explain how to answer this question with state-of-the-art \textit{symbolic model checking} techniques.

In general, model checking is concerned with verifying whether a given property $\phi$ holds for a finite state transition system $M$, which is typically represented by a triple $(S,I,T)$~\cite{Clarke2018}. More formally, these mathematical objects are defined as follows:
\begin{itemize}
    \item $S$ is the set of states a system can be in, where each state consists of the value of the program counter (\textit{PC}), local and global variables;
    \item $I:S\to\{0,1\}$ is an indicator function for a set of initial states;
    \item $T:s_{i}\to s_{j}$, with $s_{i}, s_{j} \in S$, is a transition function describing a system's evolution, i.e., pairs of states specifying how a system can move from state to state;
    \item $\phi:S\to\{0,1\}$ is an indicator function for safe states.
\end{itemize}

In our case, the annotated ANN code defines these objects implicitly. $S$ represents all possible value assignments to a set of program variables, including the PC. $I$ indicates all assignments that satisfy existing \texttt{assume} pre-conditions. $T$ holds the semantic of each instruction in code, defining how to go from one state to another, which allows checking for reachability (cf. Definition~\ref{reachable-state} below). Finally, $\phi$ represents a safety property encoded with existing \texttt{assert} post-conditions.

\begin{definition}\label{reachable-state}
\textit{Let $M$ be a transition system. A state $s_{r} \in S$ is called a reachable state in $M$ if there exists a finite sequence of $N$ state transitions starting from an initial state $s_{0}$ and ending in state $s_{r}$, i.e., $s_{0} \stackrel{T_{0}}{\rightarrow} s_{1} \stackrel{T_{1}}{\rightarrow} \ldots \stackrel{T_{N}}{\rightarrow} s_{N+1} = s_{r}$, where $s_{n} \stackrel{T_{n}}{\rightarrow} s_{n+1}$ denotes a state transition when applying $T_{n}$.}
\end{definition}

In practice, several state-of-the-art model checkers accept C code as input~\cite{Beyer2011,Kroening2014,GadelhaMMC0N18,GadelhaMCN19}. Frequently, input code is readily converted into static single assignment (SSA) form before further processing \cite{Cytron1991}, which has the advantage of making underlying finite-state transition systems more explicit. We show an example, in Fig. \ref{figure:neural-net-verification}, parts (a) and (b), of such a conversion procedure.

Note that, in all experiments in Section \ref{sec:exp}, we use ESBMC for this model checking step~\cite{GadelhaMMC0N18,GadelhaMCN19}. Like any other state-of-the-art model checker, ESBMC has been heavily optimized to reduce verification times. However, not all of these optimization techniques apply to feed-forward neural network code, which does not contain loops and recursions. In the following Sections \ref{sec:IncrementalVerificationusingInvariantInference} and \ref{sec:SearchSpaceReduction}, we clarify which techniques do apply to ANN code.

\subsection{Incremental verification using lemma learning via SMT}
\label{sec:IncrementalVerificationusingInvariantInference}

The SMTLIB logic format introduced an assertion stack concept and the ability to push and pop assertions of it~\cite{BarFT-SMTLIB}. In particular, some SMTLIB compliant SMT solvers have an internal stack of assertions, which we can add new assertions to or remove old ones from. The main idea here is to enable assertion retraction and lemma learning incrementally. The former allows one to add assertions to a formula, evaluate the individual result, and then return the same formula to its original form. The latter happens when the SMT solver stores facts (in the form of lemmas over a formula's variables). In summary, it has already determined a formula, which may prove helpful in future checks. 

Here, we enable the underlying SMT solver to use lemmas determined during previous checks for future ones, thereby optimizing search procedures and potentially eliminating a large amount of formula state-space to be searched. Note that previous studies report encouraging results using incremental (bounded) model checking for software, increasing the search depth without leading to the overhead of restarting a verification process from scratch~\cite{GuntherW14}. This way, we apply incremental SMT solving to verify neural net implementations, where a formula is built up in stages, and lemmas are learned, along the way, about that same formula.

In particular, this incremental verification is beneficial to exploring neural net implementations by ESBMC since they contain various \textit{ite} operators (e.g., to represent ReLU activation functions). The existing operation of the SMT solver follows directly from ESBMC. Indeed, once we build the directed acyclic graph (DAG) and produce an SSA program by symbolic execution, from a neural net's implementation, that program is converted to a fragment of first-order logic and translated into a form acceptable for the SMT solver. Then, after checking the satisfiability of a given formula, the latter is discarded. Here, many \textit{ite} operations will be converted, solved, and discarded during a neural net's verification procedure. Since each variable in an \textit{ite} operation is assigned only once along each path in SSA form, this requires a case split to evaluate the activation function, e.g., $z \, = \, g \, \, ? \, \, x \, : \, y \,$. As a result, we call the SMT solver during a symbolic execution to check the satisfiability of the guard $g$ and then determine the value of variable $z$. Using \textit{ite} retraction to build and deconstruct a formula has the potential to reduce SMT-conversion overhead, and lemma learning could lead to swifter verification times. The SMT solvers supported by ESBMC (i.e., Z3~\cite{de2008z3}, Yices~\cite{dutertre2014yices}) claim lemma learning as a feature, thereby allowing us to evaluate its impact for verifying neural-net implementations.

To use incremental SMT, during neural net verification, we must identify ways to reuse an SMT formula by pushing and popping \textit{ite} operations into the solver. In particular, we retain the formula produced for an \textit{ite} operator, identify the common prefix between it and the next \textit{ite} operator produced, and retract all the \textit{ite} operations that can be evaluated. Then, we place the \textit{ite} operators that could not be evaluated on top of the remaining formula. Fig.~\ref{figure:neural-net-verification} illustrates this approach. In particular, in Fig.~\ref{figure:neural-net-verification}(a), we have two inputs $x$ and $y$ in lines $4$ and $5$, respectively; three assignments in lines $6$, $8$, and $10$; three \textit{ite} operators, which represent ReLU activation functions, in lines $7$, $9$, and $11$; and one assertion representing a safety property, in line $12$. Fig.~\ref{figure:neural-net-verification}(b) illustrates the program of Fig.~\ref{figure:neural-net-verification}(a) converted into SSA form (i.e., each variable is assigned exactly once), which is the format we use for incremental learning. 

During the symbolic execution of this neural net implementation, based on Fig.~\ref{figure:neural-net-verification}(a), we check the satisfiability of guard ``a < 0'', in line 7, and conclude that it could either be evaluated as ``true'' or ``false'' since ``a'' can assume values between $-3$ (lowest) and $2$ (highest). As a result, we cannot simplify this expression before checking the safety property in line $12$ of Fig.~\ref{figure:neural-net-verification}(a). However, we can learn from this assignment and place its \textit{ite} operation on top of the remaining formula, which can then be used to check the mentioned safety property. After that, we check the satisfiability of guard ``b < 0'', in line $9$, and conclude that it always evaluates to ``false'' since ``b'' can assume only positive numbers between $0$ (lowest) and $5$ (highest). So, we are thus able to remove this expression and the respective assertion. Similarly, we check the satisfiability of guard ``f < 0'', in line $11$, and also conclude that it always evaluates to ``false'' since ``f'' can assume only positive values between $0$ (lowest) and $4$ (highest). 

We show the simplified neural net implementation using our incremental verification via lemma learning in Figure~\ref{figure:neural-net-verification}(c). One may notice that we have safely removed two ReLU activation functions represented by the variables $b2$ and $f2$, initially present in Fig.~\ref{figure:neural-net-verification}(b), which thus reduce the formula's size to be checked by the underlying SMT solver. Note further that we have learned that variable ``a'' can assume values between $-3$ (lowest) and $2$ (highest), which can be used to check the assert statement specified in line $9$ of Fig.~\ref{figure:neural-net-verification}(b). Consequently, that same assert can not be identified in Fig.~\ref{figure:neural-net-verification}(c) anymore because the knowledge of its range allowed such a simplification. The assertions $b1 \leq 5$ and $f1 \leq 4$ were also removed since we previously learned the intervals for the variable $b$ and $f$. Lastly, we can observe the ability to perform a query at any neuron using incremental verification, which can help prune neural net implementation before deploying it to an embedded device with time, memory, and energy constraints.

\begin{figure}[ht]
\centering
\begin{subfigure}{0.4\textwidth}
 \begin{lstlisting}[numbers=left]
int main() {
  _Bool x, y;
  int a, b, f;
  x = nondet_bool();
  y = nondet_bool();
  a = ((2*x) - (3*y)); 
  a = a < 0 ? 0 : a; 
  b = (x + (4*y));
  b = b < 0 ? 0 : b; 
  f = ((3*x) + y);
  f = f < 0 ? 0 : f; 
  assert(a <= 2 && b <= 5 && f <= 4); 
  return 0;
}
\end{lstlisting}
\caption{}
\end{subfigure}
\hspace{0.05\textwidth}
\begin{subfigure}{0.4\textwidth}
 \begin{lstlisting}[numbers=left]  
x1 == nondet_symbol(nondet0)
y1 == nondet_symbol(nondet1)
a1 == 2 * (int)x1 - 3 * (int)y1
a2 == (a1 < 0 ? 0 : a1)
b1 == (int)x1 + 4 * (int)y1
b2 == (b1 < 0 ? 0 : b1)
f1 == 3 * (int)x1 + (int)y1
f2 == (f1 < 0 ? 0 : f1)
(assert) a2 <= 2
(assert) b2 <= 5
(assert) f2 <= 4
\end{lstlisting}
\caption{}
\end{subfigure}
\hspace{0.05\textwidth}
\begin{subfigure}{0.4\textwidth}
 \begin{lstlisting}[numbers=left]  
x1 == nondet_symbol(nondet0)
y1 == nondet_symbol(nondet1)
a1 == 2 * (int)x1 - 3 * (int)y1
a2 == (a1 < 0 ? 0 : a1)
\end{lstlisting}
\caption{}
\end{subfigure}
\caption{(a) A simple neural net implemented in C, where variables ``a'', ``b'', and ``c'' range from $-3$ to $2$, $0$ to $5$, and $0$ to $4$, respectively. (b) The initial neural-net C program converted into SSA form. (c) A simplified version of the SSA form using incremental learning.} 
\label{figure:neural-net-verification}
\end{figure}

\subsection{Constant folding, slicing and expression balancing for search-space reduction}
\label{sec:SearchSpaceReduction}

Our employed verification engine implements general code optimizations, when converting a neural net implementation to SMT. These include \textit{constant folding}, \textit{slicing} and expression balancing~\cite{Cordeiro11}, which we briefly introduce here.

Constant folding evaluates constants, including nondeterministic symbols, and propagates them throughout the resulting formula, during encoding. In particular, we exploit the constant propagation technique to reduce the number of expressions associated with specific neuron computation procedures and activation function. 
Thus, we simplify the SSA representation, using local and 
recursive transformations, to remove functionally redundant expressions (for neuron computation procedures and activation functions) 
and redundant literals (for safety properties), as
\[\begin{array}{ll}
\label{eqnarray:transformations}
a \wedge \mathit{true} = a  & a \wedge \mathit{false} = \mathit{false} \\
a \vee \mathit{false} = a   & a \vee \mathit{true} = \mathit{true} \\
a \oplus \mathit{false} = a & a \oplus \mathit{true} = \neg a \\
ite\left(\mathit{true}, a, b\right) = a & ite\left(\mathit{false}, a, b\right) = b \\
ite\left(f, a, a\right) = a & ite\left(f, f \wedge a, b\right) = ite\left(f, a, b\right).
\end{array}\]
\noindent We apply such simplifications to reduce the size of the resulting formula and consequently achieve simplification within each time step and across time steps, during the encoding procedure of a neural net's implementation. In our experimental evaluation, in Section \ref{sec:fine-tuning-bmc-parameters-v2}, we have noticed substantial improvements using these simplifications in formulas, but we have not identified improvements using the constant propagation approach itself. It happens because neural net inputs are typically symbolic ones and not constants, as can be noticed in the illustrative example in Fig.~\ref{figure:neural-net-verification}, where incremental learning removed the activation functions for neuron $b$ and output $f$.

Slicing removes expressions that do not contribute to the checking procedure of a given safety property. It is an essential step to improve a program's verification procedure, considerably, in some cases~\cite{DBLP:phd/ethos/Morse15}. Our verification engine implements two slicing strategies in combination. First, it removes all instructions after the last assert in the set of SSA. Second, it collects all symbols (and their dependent symbols) in assertions and removes instructions that do not contribute to them. When used in combination, both slicing strategies ensure that unnecessary instructions are ignored during SMT encoding. As an example, the code in Fig.~\ref{figure:neural-net-verification}(a) can be considered. If we are interested in checking that neural net's output only, we could rewrite the final assert statement, in line 12, as $f<=4$. Consequently, such a modification do indicate that everything not involving $f$ does not cause an impact on the conclusion of the intended safety property. Based on such a scenario, the resulting SSA for the code in Fig.~\ref{figure:neural-net-verification}(a) would be sliced as 
\begin{flalign*} 
&x1 == nondet\_symbol(nondet0) \wedge y1 == nondet\_symbol(nondet1) \wedge \nonumber \\
&f1 == 3 * (int)x1 + (int)y1 \wedge f2 == (f1 < 0 \, ? \, 0 : f1) \wedge f2 <= 4,
\end{flalign*} 
\noindent where there is no presence of information (states) regarding neurons $a$ and $b$. In our experimental evaluation, in Section \ref{sec:fine-tuning-bmc-parameters-v2}, we have observed that \textit{slicing} can significantly reduce the resulting SMT solving time.

\begin{figure}[htb]
\centering
\begin{subfigure}[t]{.4\textwidth}
    \begin{tikzpicture}[<-, >=stealth', auto, semithick,
     level/.style={sibling distance=15mm}
    ]
    
    \node [circle,draw] (z){+}
      child {node (a) {$w_n x_n$}
      }
      child {node [circle,draw] (j) {+}
        child {node (k) {$w_{n-1} x_{n-1}$}
        }
      child {node (l) {$\vdots$}
          child {node (o) {$w_1 x_1$}}
          child {node (p) {$w_2 x_2$}}
      }
    };
      child [grow=down] {
        edge from parent[draw=none]
      };
    \draw [->] (z) -- ++(0,1cm);
    \end{tikzpicture}
\caption{Linear layout.}
\label{fig:linearlayout}
\end{subfigure}
\begin{subfigure}[t]{.4\textwidth}
    \begin{tikzpicture}[<-, >=stealth', auto, semithick,
     level/.style={sibling distance=15mm}
    ]
    
    \node [circle,draw] (z){+}
        child {node [circle,draw] (b) {+}
            child {node [circle,draw] {+}
                child {node (d) {$w_1 x_1$}}
                child {node (e) {$w_2 x_2$}}
                } 
            child {node {$\vdots$}}
        }
        child {node [circle,draw] (l) {+}
            child {node {$\vdots$}}
            child {node [circle,draw] (c){+}
                child {node (o) {$w_{n-1} x_{n-1}$}}
                child {node (p) {$w_n x_n$}
            }
        }
      };
    
    \draw [->] (z) -- ++(0,1cm);
    \end{tikzpicture}
\caption{Balanced layout.}
\label{fig:balancedlayout}
\end{subfigure}
\end{figure}

Expression balancing reduces the size of SMT formulae by reordering long chains of operations with the associative rule. This technique has been recently applied to neural networks by Giacobbe {\it et al.}~\cite{Giacobbe2020}, but has been used in compilers for decades. In brief, the computation of neuron potentials in ANNs requires a linear combination of the neuron inputs (see Equation \ref{eq:anncalc_aff}). Depending on the specific implementation, the resulting sequence of multiply-and-accumulate operations (MAC) in the code is translated to SMT formulae of different sizes. In the worst case, which is portrayed in Fig.~\ref{fig:linearlayout}, the formula size is linear in the number of MAC operations. Expression balancing ensures that the SMT formulae are always reordered as in the best case scenario shown in Fig.~\ref{fig:balancedlayout}. That is, the sequence of MAC operations is split over multiple accumulators in a divide-and-conquer fashion, yielding a set of semantically equivalent, but smaller SMT formulae. In Section \ref{sec:fine-tuning-bmc-parameters-v2}, we show that this associative balancing step is crucial in making ANN verification viable. Note that this result is consistent with those presented in~\cite{Giacobbe2020}. Furthermore, in Section \ref{sec:code-balancing} we show that, thanks to this balancing step, the performance of our verification methodology is stable across different implementations of the same ANN.


\subsection{Illustrative example: robustness to adversarial images}
\label{ssec:imagerobustnessexample}

We conclude this section with an illustrative example of our verification methodology. We do so in order to clarify the user's side of the workflow illustrated in Fig. \ref{fig:fp-nn-methodology}. Later, in Section \ref{sec:exp}, we report more details on the range of ANNs and safety properties that can be verified with our methodology, as well as the efficiency of doing so.

The present example, illustrated in Fig. \ref{fig:robustA}, shows how to verify a character recognition ANN. First, given a network's architecture and weights, in a high-level representation, as in Fig. \ref{fig:robustA_a}, such elements should be converted into single-threaded C code. This task can be achieved through the popular machine learning libraries PyTorch ~\cite{paszke2019pytorch} and Tensorflow ~\cite{abadi2016tensorflow}, or, like in many of our experiments, in Section \ref{sec:exp}, by converting from the mid-level representation NNet\footnote{\url{github.com/sisl/NNet}}. In this example, we use the neural network from our Vocalic benchmark (see Section \ref{sec:benchmark-vocalic}) quantized to a fixed-point representation with 8 integer (including sign) and 8 fractional bits.

\begin{figure}[htb]
	\centering
	\begin{subfigure}[b]{0.30\linewidth}
		\centering
		\includegraphics[height=0.5\linewidth]{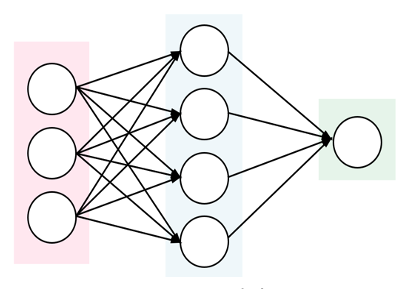}
				\caption{ANN structure and weights.}
		\label{fig:robustA_a}
	\end{subfigure}
	\begin{subfigure}[b]{0.15\linewidth}
		\centering
		\includegraphics[height=\linewidth]{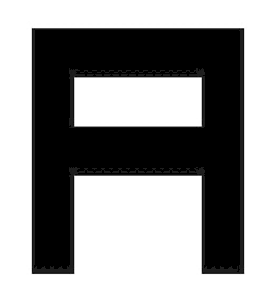}
				\caption{$\mathcal{H}$ center.}
		\label{fig:robustA_b}
	\end{subfigure}
	\begin{subfigure}[b]{0.15\linewidth}
		\centering
		\includegraphics[height=\linewidth]{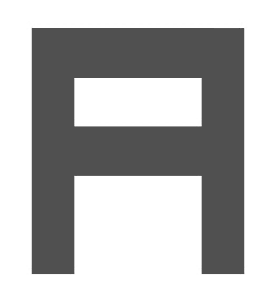}
				\caption{$\mathcal{H}$ lower.}
		\label{fig:robustA_c}
	\end{subfigure} 
	\begin{subfigure}[b]{0.15\linewidth}
		\centering
		\includegraphics[height=\linewidth]{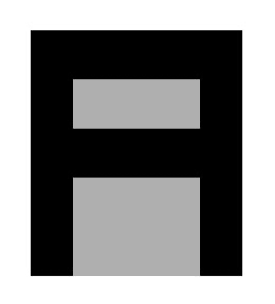}
		    \caption{$\mathcal{H}$ upper.}
		\label{fig:robustA_d}
	\end{subfigure}
	\begin{subfigure}[b]{0.15\linewidth}
		\centering
		\includegraphics[height=\linewidth]{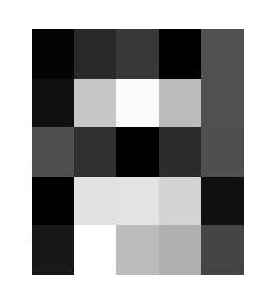}
		    \caption{Counterex.}
		\label{fig:robustA_e}
	\end{subfigure}
	\caption{Inputs and outputs of our verification approach: (a) ANN implementation; safety property, where the (b) center, (c) lower, and (d) upper extremes of the input region are shown; and (e) a counterexample that violates the safety property.}
	\label{fig:robustA}
\end{figure}

Second, the ANN source code undergoes a further sequence of transformations. Initially, we replace all floating-point arithmetic operations with the corresponding fixed-point implementation models (see Section \ref{sec:fixedpointOM}), given that our ANN is quantized. Then, we also replace any sigmoid, hyperbolic tangent, or piecewise-linear activation function with its corresponding discretized look-up table  (see Section \ref{ssec:discretise}).

Third, a safety property is encoded by adding the corresponding pair of \texttt{assume} and \texttt{assert} instructions. In the present example, we check for robustness around a specific input image, which we show in Fig. \ref{fig:robustA_b}. More formally, we define the input region of our safety property (premise) as a set $\mathcal{H}=\{x:|x-x^d|_{\infty}\leq c\}$, where the centre point $x^d$ corresponds to the $5\times 5$ pixel values in the image of the ideal character ``A'', i.e., without deviation, in Fig. \ref{fig:robustA_b}, and $c=80$. For reference, we report the lower and upper bounds of $\mathcal{H}$ in the gray image pixel domain, in Figs. \ref{fig:robustA_c} and \ref{fig:robustA_d}, respectively.

Likewise, we set the output region of the safety property (conclusion) as the set of all outputs that assign a higher score $y_A>y_k,\forall k\neq A$ to class ``A'' than to any other output classes. Note that the final softmax layer, typically included in classification ANNs, can be omitted for our purposes since it is a monotonic function of the score of each class~\cite{bishop2006PRML}. After this, a static analysis tool such as FRAMA-C \cite{blanchard2018} propagates the input region $\mathcal{H}$ through the associated ANN code and annotates it with additional \texttt{assume} instructions, representing the reachable values-interval of each intermediate variable (see Section \ref{sssec:invariant}).

Fourth, annotated C code goes through a model checker that tries to falsify a safety property. In our experiments (see Section \ref{sec:exp}), we have used ESBMC to do so, as it is a good representative of state-of-the-art SMT model checkers~~\cite{GadelhaMMC0N18,GadelhaMCN19}. If a given safety property can not be verified, ESBMC returns a counterexample that falsifies it, which represents a potential adversarial attack on a neural network. In the present example, ESBMC does indeed report such a counterexample, which we show in Fig. \ref{fig:robustA_e}. More adversarial examples can be seen in Figs. \ref{fig:imagequality8bits}, \ref{fig:imagequality16bits}, and \ref{fig:imagequality32bits}, for a wide range of safety properties and quantization granularities of our character recognition ANN.

\section{Experimental Evaluation}
\label{sec:exp}

In this section, we test the performance of the verification approach we introduced in Section \ref{sec:verification}. In this regard, we are mainly interested in the following research questions:

\begin{enumerate}
\item[\textbf{RQ1}] \textbf{- Ablation study -} Is it possible to establish the role of each of the enhancement techniques introduced in Section \ref{sec:verification} and also define an optimal setup, both regarding total verification time and performance?
\item[\textbf{RQ2}] \textbf{- Quantization effects -} How does a quantization choice influence our verification process and the safety of a neural network?
\item[\textbf{RQ3}] \textbf{- Comparison with SOTA techniques -} What is the performance of our verification approach when compared to the existing literature?
\end{enumerate}


Regarding \textbf{RQ1}, since those techniques were first introduced for software verification in general, we are interested, in particular, in finding their optimal configuration to verify ANNs, including contribution and general setup. In addition, \textbf{RQ2} is related to quantization of ANNs, which is in the core of the present work and have the potential to provide a methodology regarding integration into target platforms. Moreover, if we were to verify the same property for different quantization levels, would we observe any difference in verification time or outcome? Finally, regarding \textbf{RQ3}, it is always of paramount importance to position a given approach among the existing scientific knowledge.

We present our answers to those questions in the following way. In Section \ref{sec:quantization-aspects}, we discuss a configuration step regarding quantization and also general data processing to provide adaptation and avoid overflow in ANN operations. In Section \ref{sec:description-of-benchmarks-v2}, we describe the datasets and ANNs that constitute our verification benchmarks, including the necessary minimum number of bits for correct data-range representation. In Section \ref{sec:ablation-study-v2}, we isolate the contribution of each component of our verification approach and propose the configuration that yields the best results performance-wise, which answers \textbf{RQ1}. In Section \ref{sec:verification-of-anns-with-fwl-implementation-v2}, we compare the performance and output of our verification approach across different quantization levels of the same problem, which addresses \textbf{RQ2}, while analyzing important aspects and general behavior and also providing guidance on integration into restricted platforms. In Section \ref{sec:comparison-with-sota-v2}, we compare our verification framework with the most popular SOTA approaches, which fulfills \textbf{RQ3}. Finally, in Section \ref{sec:limitations-v2}, we list the remaining limitations towards large-scale verification of fixed-point ANNs. All benchmarks, tools, and results associated with the current evaluation are available for download at \url{https://tinyurl.com/6y7e49vk}.

\subsection{Quantization aspects and data adaptation}
\label{sec:quantization-aspects}
As mentioned at the end of Section \ref{sec:fixedpointOM}, when correctness comes into play, not every quantization format can be used. Indeed, if a format that is not suitable to the target ANN is chosen, overflow will likely occur, compromising operation results and general ANN output. Nonetheless, a designer can also incur severe quantization and suppose that errors due to wrong operations are an acceptable side effect (even under frequent overflow). Still, our goal is to provide compression that results in quantization error only, then preserving an ANN's associated dynamic range and correct computation of operations in neurons.

Another aspect is that input data may present a broad diversity of dynamic ranges. As a consequence, they are usually processed in scaled format. In our framework, input data is first normalized to the range $[0,1]$ and then fed to a given ANN (also for training). This way, the initial (input) dynamic range is always known.

Consequently, it is essential to analyze neurons in a given ANN and then identify the minimum and maximum associated values resulting from their processing, given input date in the range $[0,1]$, which will define the minimum number of bits for the integer part of a given representation. It does not specify maximum compression because it only intends to represent the existing dynamic range and avoid overflow correctly. Besides, we should also check the number of bits for the fractional part to provide the desired accuracy.

Note that the discovery of the minimum number of bits for the integer part is made by using Eq.~\eqref{eq:genpnorm}, with $p=1$, and taking into account all weights of each neuron to find the maximum magnitude. Alternatively, FRAMA-C \cite{blanchard2018} can also be used, as it reveals intervals associated with variables in ANN code.

\subsection{Description of the benchmarks}
\label{sec:description-of-benchmarks-v2}

In our evaluation, we consider ANNs trained on two datasets: the UCI Iris dataset~\cite{Dua:2019} and a vocalic character recognition dataset~\cite{Sena20}. This section gives the details regarding the employed datasets, the neural networks we trained on top of them, the safety properties that we used to test our verification approach, and, finally, our general experimental setup.

\subsubsection{Iris benchmark}
\label{sec:benchmark-iris}

The Iris dataset~\cite{Dua:2019} consists of $50$ samples from each of three species of Iris (Iris setosa, Iris virginica, and Iris versicolor). This dataset contains both the length and width of the sepals and petals in centimeters (our inputs) and the iris specie label (our output). Here, we use TensorFlow version 1.4~\cite{abadi2016tensorflow} and keras~\cite{gulli2017deep} to train a feedforward neural network with layers of $4\times7\times3$ neurons, hyperbolic tangent activation functions, and softmax output layer. We train such a neural network to predict the correct Iris species with the backpropagation algorithm and cross-validation~\cite{bishop2006PRML}. When quantizing the ANN to fixed-point arithmetic, we followed what was presented in Section \ref{sec:quantization-aspects}. We found that the maximum neuron output was bounded, in modulus, by $23.3$. Consequently, we allow for $6$ integer bits, including sign, as they are required to avoid overflow. In terms of safety properties, we specify hyper-rectangular input regions for each species: \textit{setosa}, \textit{versicolor}, and \textit{virginica}. We identify the center of these regions from the dataset with the granular fuzzy clustering algorithm in~\cite{Cordovil2020}. Then, for each of the four input variables, we computed its maximum range. With it, we generated nine regions $R_s$ for each class, sharing the same center but with different sizes $s\in\{1,2,5,8,10,20,30,40,50\}$ of the hyperrectangle surrounding it, where $s$ is a percentage representing the fraction of the maximum input range.

\subsubsection{Vocalic benchmark}
\label{sec:benchmark-vocalic}

The vocalic dataset~\cite{Sena20} consists of $200$ gray-scale images with dimensions $5\times5$ pixels. Half of the dataset consists of the base images illustrated in Fig.~\ref{fig:vowels} and also noisy versions of them. In contrast, the other half presents non-vocalic images. With it, we have trained a feedforward neural network with architecture $25\times10\times4\times5$ and sigmoid activation functions. As Fig.~\ref{fig:vowels} shows, there are five output classes that this network learned to discriminate via backpropagation algorithm and cross-validation. Once again, we have followed what was presented in Section \ref{sec:quantization-aspects} and found $53.9$ as maximum neuron output. Consequently, we have quantized this ANN to fixed-point arithmetic with a minimum of $7$ integer bits, including sign, as they are required to avoid overflow. As far as the safety properties are concerned, we specify five hypercubic input regions corresponding to the vocalic labels. The centers are defined by the base images in Fig.~\ref{fig:vowels}. Similarly to the Iris benchmark, we generate five instances $L_s$ of these regions with different sizes $s\in\{10,20,40,80,120\}$, where $s$ represents the hypercube's side length.

\begin{figure}[htb]
	\centering
	\begin{subfigure}[b]{0.08\linewidth}
		\centering
		\includegraphics[width=\linewidth]{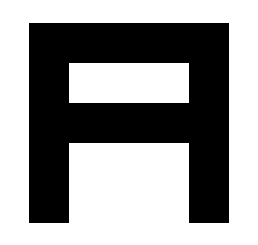}
		\label{fig:A}
	\end{subfigure}
	\begin{subfigure}[b]{0.08\linewidth}
		\centering
		\includegraphics[width=\linewidth]{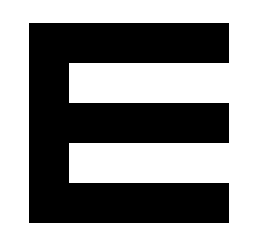}
		\label{fig:E}
	\end{subfigure}
	\begin{subfigure}[b]{0.08\linewidth}
		\centering
\includegraphics[width=\linewidth]{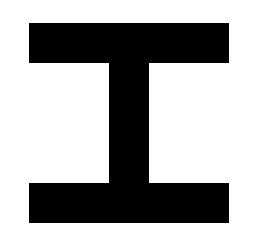}
\label{fig:I}
	\end{subfigure}
\begin{subfigure}[b]{0.08\linewidth}
		\centering
\includegraphics[width=\linewidth]{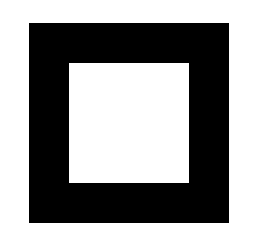}
\label{fig:O}
\end{subfigure}
\begin{subfigure}[b]{0.08\linewidth}
		\centering
\includegraphics[width=\linewidth]{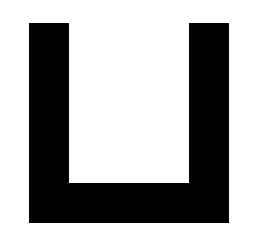}
\label{fig:U}
\end{subfigure}
	\caption{Vocalic images in benchmarks.}\label{fig:vowels}
\end{figure}

\subsubsection{AcasXu benchmark}
\label{sec:benchmark-acasxu}

The Acas Xu benchmark~\cite{julian2016policy} is the result of avionics research in airborne collision avoidance systems (ACAS) for unmanned aircrafts (Xu). In particular, when avoiding a nearby aircraft, some specific piloting decisions must be taken. These are recorded in a large state-action table that is impractical to store on-board due to its memory requirements. The Acas Xu benchmark splits and compresses such a table into a set of $45$ neural networks. The split is done by discretizing the following two input dimensions: time until loss of vertical separation ($9$ intervals), and previous advisory action ($5$ actions). The remaining $5$ inputs are fed into a fully-connected feedforward neural network with ReLU activation functions and architecture $5\times300\times300\times300\times300\times300\times300\times5$, which outputs a prediction for each of the $5$ possible actions. We quantize all these $45$ ANNs with $27$ integer bits, which is the least number of bits required to avoid overflow in the worst-case scenario, i.e., with a neuron output of $72142560.0$, as pointed out by FRAMA-C \cite{blanchard2018}. More details on its associated safety properties can be found in~\cite{katz2017reluplex} and in Section \ref{sec:comparison-with-sota-v2}.

\subsubsection{Experimental setup}
\label{sec:setup}

We have conducted our experimental evaluation on a Intel(R) Xeon(R) CPU E$5$-$2620$ v$4$ @ $2$.$10$GHz with $128$ GB of RAM and Linux OS. All presented execution times are CPU times, i.e., only the elapsed periods spent in allocated CPUs, which was measured with the \texttt{times} system call~\cite{monteiro2018esbmc}. All experimental results reported here were obtained by executing ESBMC v$6$.$6$.$0$\footnote{Available at \url{http://esbmc.org/}} with the following command line parameters, unless specifically noted: \texttt{esbmc <file.c> -I <path-to-OM> --force-malloc-success --no-div-by-zero-check --no-pointer-check --yices --no-bounds-check --interval-analysis --fixedbv}. In general, we let ESBMC run without time or memory limits. The timeouts reported in the following experiments are all due to exceedingly high memory consumption. All of our benchmarks have been annotated with the reachable intervals provided by FRAMA-C, unless specifically noted. In particular, we executed FRAMA-C using the following command: \texttt{frama-c -eva -eva-plevel 255 -eva-precision 11}.

\subsection{Ablation study}
\label{sec:ablation-study-v2}

This section aims at evaluating the impact of different aspects of our approach on the total verification time. Here, our aim is both to discover the best configuration for our verification tool and shed some light on the importance of each technique for reducing the search space of the verification problem. Specifically, we address four choices in our verification approach: SMT solver, optional parameters offered by the ESBMC verification engine, interval analysis technique and expression balancing strategy.

\subsubsection{SMT solvers comparison}
\label{sec:smt-solvers-comparison-v2}

As mentioned in Section~\ref{sec:verification}, our approach relies on model checking to reason about the satisfiability of a given safety property concerning an ANN implementation. For the experiments of the present section, we have chosen ESBMC as our verification engine since it has been extensively evaluated at various SV-Comp~\cite{beyer2021software} competitions, where it has consistently achieved state-of-the-art results~\cite{GadelhaMCN19}. More in detail, the ESBMC model checker takes care of converting input C code into SMT formulae and then calls an external SMT solver. Currently, ESBMC supports four solvers: Bitwuzla, Boolector, Yices, and Z3. In general, they yield different verification results, both in terms of the generated counterexample (if any) and verification time.

Here, we are interested in comparing the performance of such solvers in verifying ANN implementations. To this end, we run them on all our fixed-point benchmarks, with word lengths of 8, 16 and 32 bits. With this choice, we cover the most popular quantization lengths, and observe the behaviour of our verification methodology on a varied test suite. We use these experimental settings all throughout our ablation study (see also Sections \ref{sec:interval-analysis-comparison}, \ref{sec:fine-tuning-bmc-parameters-v2} and \ref{sec:code-balancing}). 

The results of our comparison are summarized in Fig.~\ref{fig:ablation-smt-solvers}. There, we can see that solvers Bitwuzla and Boolector have nearly identical performance, in terms of verification time (Fig.~\ref{fig:ablation-bitwuzla-boolector}). In contrast, Yices exhibits a considerable advantage across the whole verification suite, being, in some specific cases, even two orders of magnitude faster (Fig.~\ref{fig:ablation-yices-boolector}). Finally, solver Z3 struggled to complete the majority of verification runs, and it is, in general, orders of magnitude slower than the other three solvers. For this reason, we do not portray its results in Fig.~\ref{fig:ablation-smt-solvers}.
%
\begin{figure}[htb]
\centering
    \begin{subfigure}[b]{0.48\linewidth}
		\centering
		    \includegraphics[width=\linewidth]{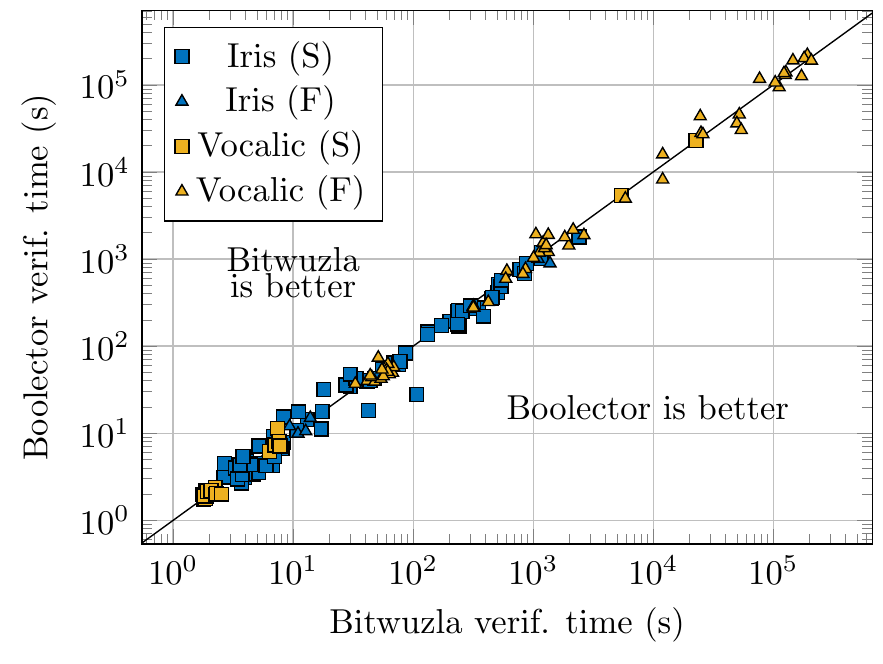}
		\caption{}
		\label{fig:ablation-bitwuzla-boolector}
	\end{subfigure}
    \begin{subfigure}[b]{0.48\linewidth}
		\centering
		    \includegraphics[width=\linewidth]{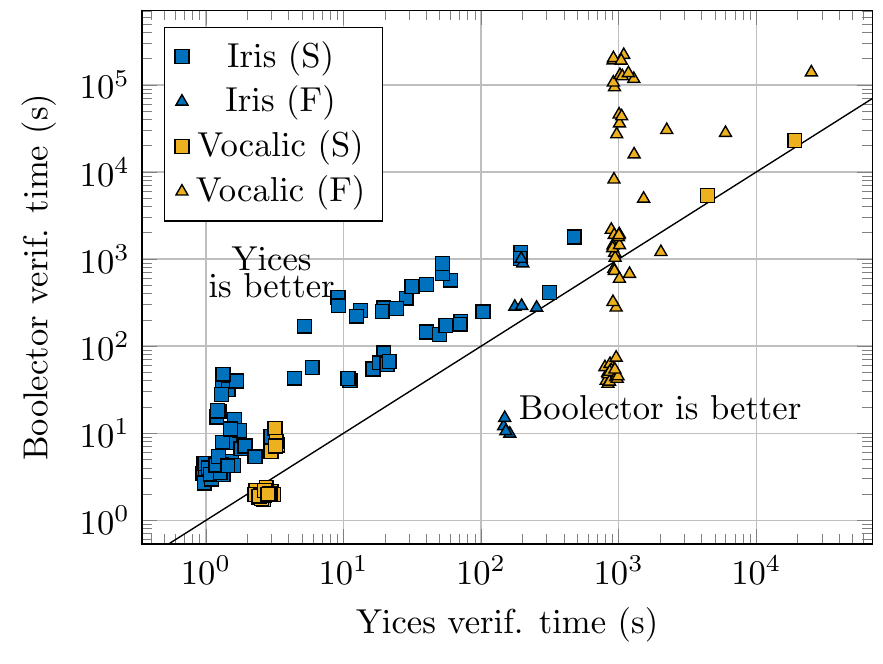}
		\caption{}
		\label{fig:ablation-yices-boolector}
	\end{subfigure}
\caption{Comparison with different SMT solvers, regarding verification time (in seconds), when handling the fixed-point Iris and Vocalic benchmarks. On the left, (a) Bitwuzla and Boolector show similar performance; on the right, (b) Yices is considerably faster than Boolector, in most instances. In both plots, we discriminate between successful verification outcomes (S) and falsifiable safety properties that admit a counterexample (F).}
\label{fig:ablation-smt-solvers}
\end{figure}

Given the results in Fig.~\ref{fig:ablation-smt-solvers}, we choose Yices as our underlying SMT solver for the rest of this experimental section. While it is impossible to know exactly why Yices is the best-performing solver on our test suite, we speculate it is a consequence of the fact that ESBMC encodes verification problems into SMT formulae with the formalism of \textit{QF\_AUFBV} logic.\footnote{\url{https://smtlib.cs.uiowa.edu/logics.shtml}} Here, \textit{QF} stands for quantifier-free formulas, \textit{A} stands for the theory of arrays, \textit{UF} stands for uninterpreted functions, and \textit{BV} stands for the theory of fixed-sized bit-vectors. For this type of formulae, Yices represents the state-of-the-art SMT solver.~\footnote{\url{https://smt-comp.github.io/2020/results/qf-aufbv-single-query}}

\subsubsection{Comparison regarding ESBMC's parameters }
\label{sec:fine-tuning-bmc-parameters-v2}

In Sections \ref{sec:IncrementalVerificationusingInvariantInference} and \ref{sec:SearchSpaceReduction}, we have presented a number of state-of-the-art software verification techniques that apply to ANN implementations. From our prior experience of participating in software verification and testing competitions (e.g., SV-COMP and Test-Comp), such techniques play an essential role in optimizing the performance of ESBMC on a given set of benchmarks~\cite{MorseRCN014,GadelhaMCN19,GadelhaMMCN20}. In the present section, we quantify their individual impact on verification times of our test suite and comment on their relative performance.

Here, we rely on the fact that the ESBMC's verification engine allows us to toggle each separate technique via command-line parameters. More specifically, the list of verification techniques and corresponding ESBMC parameters are as follows:
\begin{itemize}
    \item \textbf{Constant propagation.} It can be disabled with the option \texttt{no-propagation}. Otherwise, it will generate a minimal set of SSAs in the symbolic engine.
    \item \textbf{Slicing.} It can be disabled with the option \texttt{no-slice}. Otherwise, it will eliminate redundant or irrelevant portions of a program~\cite{de2001program}. In ESBMC, this is applied to the SSA program before it is encoded to SMT to reduce the number of variable assignments by identifying variables not used to evaluate any property assertion.
    \item \textbf{Incremental verification.} Activated with the (experimental) options \texttt{smt-during-symex} and \texttt{smt-symex-guard}. The former enables incremental SMT solving using the SMT solvers Yices or Z3, the latter allows calls to the solver during symbolic execution to check the satisfiability of the guards.
    \item \textbf{Expression simplification.} It can be disabled with the option \texttt{no-simplify}, effectively neutering constant propagation so that no fact is statically determined to be true or false, and always end up exploring to the top of the unwind bound.
\end{itemize}

\begin{figure}[htb]
\centering
    \begin{subfigure}[b]{0.32\linewidth}
		\centering
		    \includegraphics[width=\linewidth]{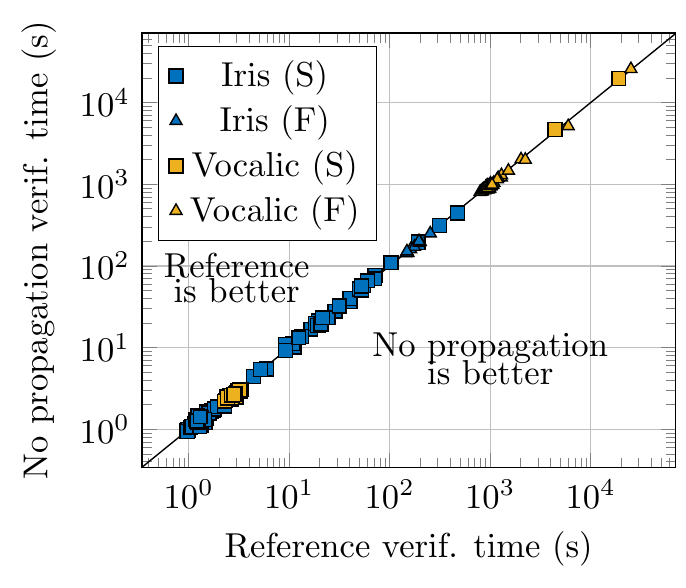}
		\caption{}
		\label{fig:esbmc-noprop}
	\end{subfigure}
    \begin{subfigure}[b]{0.32\linewidth}
		\centering
		    \includegraphics[width=\linewidth]{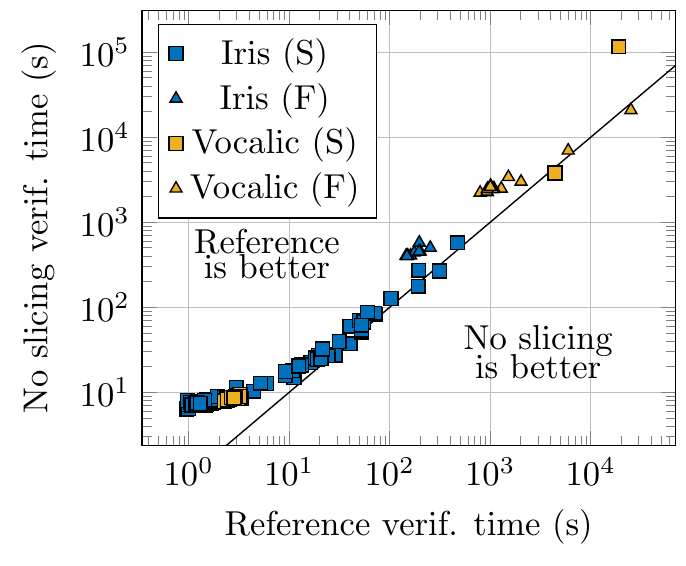}
		\caption{}
		\label{fig:esbmc-noslice}
	\end{subfigure}
	\begin{subfigure}[b]{0.32\linewidth}
		\centering
		    \includegraphics[width=\linewidth]{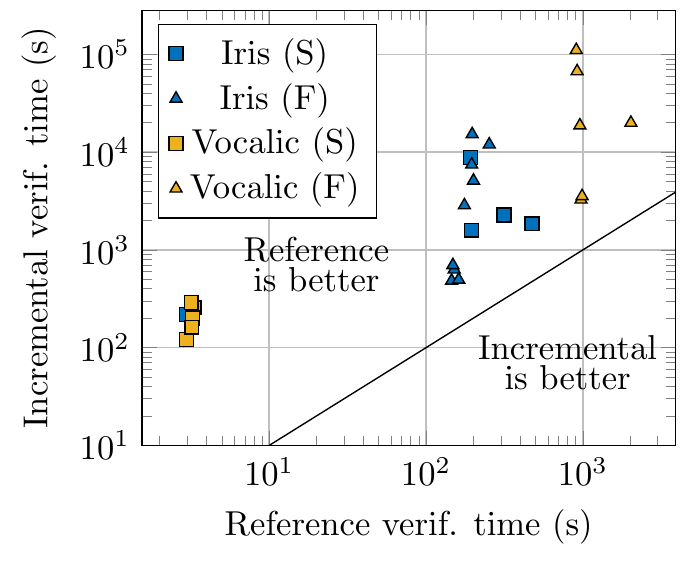}
		\caption{}
		\label{fig:esbmc-increment}
	\end{subfigure}
\caption{Comparison of verification times of ESBMC with different parameters settings on the fixed-point Iris and Vocalic benchmarks. In each figure, one individual technique has been changed from the best (reference) configuration: (a) disabling constant propagation, (b) disabling slicing, and (c) enabling incremental verification. In the plots, we discriminate between successful verification outcomes (S) and falsifiable safety properties that admit a counterexample (F).}
\label{fig:esbmc-parameters-ablation}
\end{figure}

Here, we quantify the impact of each technique on the same test suite of Section \ref{sec:smt-solvers-comparison-v2}. We do so by setting a reference configuration and toggling one verification technique at a time. For reasons that become clear from the results shown in Fig.~\ref{fig:esbmc-parameters-ablation}, our reference configuration of ESBMC has constant propagation, slicing and expression simplification enabled. In contrast, we choose to keep incremental verification disabled.

As the results in Fig.~\ref{fig:esbmc-noprop} show, constant propagation makes no difference on our test suite. This is because we are verifying a specific kind of safety property, namely robustness to adversarial examples, which allows all input variables to be modified. As such, there is no constant input that can be propagated through the ANN code, thus yielding no reduction in the SMT formulae size. At the same time, we believe that constant propagation is a useful technique for safety properties that restrict the attack surface to just a subset of the input variables, as the ones identified by Karmon, Zoran, and Goldberg~\cite{pmlr-v80-karmon18a}.

In contrast, Fig.~\ref{fig:esbmc-noslice} shows that slicing yields a small improvement in performance, which becomes the more significant the shorter the verification time is. We speculate that this is because neural networks are usually redundant (e.g., see dropout~\cite{bishop2006PRML}), and thus the majority of neurons contribute to the ANN output. As a consequence, only a small number of expressions can be removed with slicing.

Interestingly, incremental verification (cf.~Fig.~\ref{fig:esbmc-increment}) does not improve verification time as expected. We believe this happens because the cost of deriving and storing new facts during the verification process outweighs the reduction in search space they induce since it performs various calls to the solver. Still, we hypothesize that incremental verification may offer some advantages when verifying not only one but also a whole set of safety properties since it allows incrementally remembering important facts across properties, whose net contribution may pay off. For example, we could perform a query at any neuron using incremental lemma learning, which could help prune neural net implementation before deploying it to an embedded device with time, memory, and energy constraints. However, we leave the exploration of such a hypothesis for future work.

Finally, expression simplification is crucial in making the verification of our test suite practical. Indeed, without expression simplification, none of the safety properties could be checked before hitting our machine memory limit of 128GB, despite letting the verification process run without any time limit.

\subsubsection{Interval analysis comparison}
\label{sec:interval-analysis-comparison}

In Section~\ref{sssec:invariant}, we introduced interval analysis as an essential pre-processing stage before running the verification engine on ANN code. Here, we show the effect of disabling such an important step on total verification times. Furthermore, we compare two approaches to interval analysis and discuss their results. The first requires FRAMA-C \cite{blanchard2018} to annotate ANN code with additional \texttt{assume} instructions. In contrast, the second requires running ESBMC with the extra \texttt{--interval-analysis} option enabled. Note that both of them compute hyper-rectangular constraints over program variables.

For consistency with the previous experiments, we evaluate the impact of these two interval analysis options on the same test suite as in Sections~\ref{sec:smt-solvers-comparison-v2} and~\ref{sec:fine-tuning-bmc-parameters-v2}. We present the results in Fig.~\ref{fig:interval-analysis-comparison}, where the native \texttt{--interval-analysis} option and the externally computed intervals by FRAMA-C are compared with our reference configuration of ESBMC without any form of interval analysis. Note how the former has almost no impact on the verification time, while the latter can improve it by up to two orders of magnitude. Still, regarding the use of FRAMA-C, it is interesting to notice that we only observe improvement on successful safety properties (S), i.e., those that do not admit a counterexample. This way, the verification time of falsifiable properties (F) does not appear to be improved by interval analysis on our test suite.

On the one hand, as no counterexample is found, the FRAMA-C's more sophisticated interval analysis indeed pays off, given the apparent reduction in the state space that must be explored. On the other hand, when a property is falsifiable, that seems to be easily identified in the proposed framework and adopted test suite. As future work, we can perform a  deep analysis of that matter and then even propose improvements in this interval analysis focused on ANN code and properties. Note that the intervals produced by ESBMC work only for integer variables~\cite{esbmc-tacas-2019}, while Frama-C can make intervals for integer and floating-point ones~\cite{buhle2017}. Since our benchmarks contain heavily floating-point computations, we expected Frama-C to improve our verification results considerably compared to the interval analysis implemented in ESBMC, particularly for safe neural nets due to the state-space size.


\begin{figure}[htb]
\centering
	\begin{subfigure}[b]{0.48\linewidth}
		\centering
		    \includegraphics[width=\linewidth]{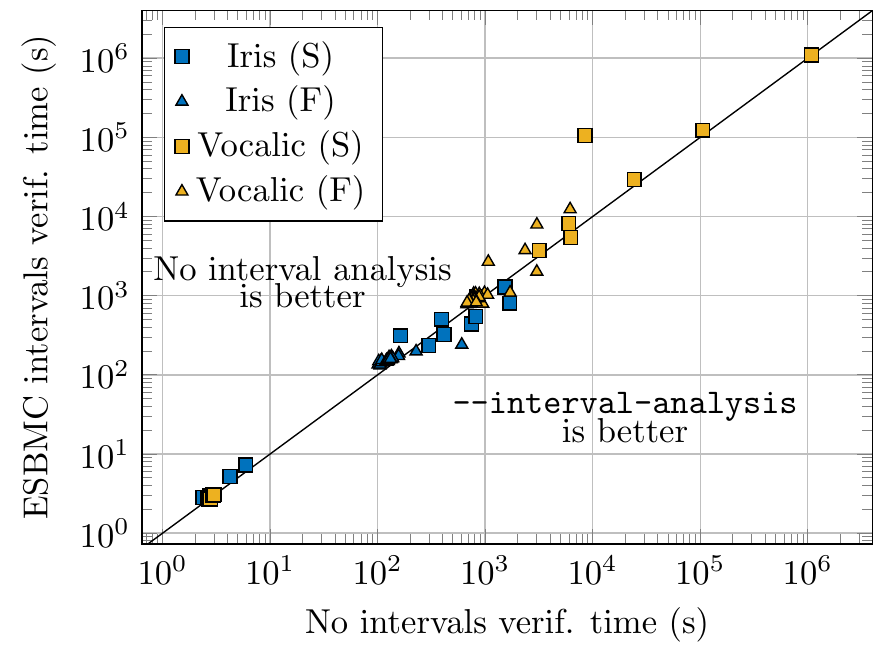}
		\caption{}
		\label{fig:interval-analysis-esbmc}
	\end{subfigure}
	\begin{subfigure}[b]{0.48\linewidth}
		\centering
		    \includegraphics[width=\linewidth]{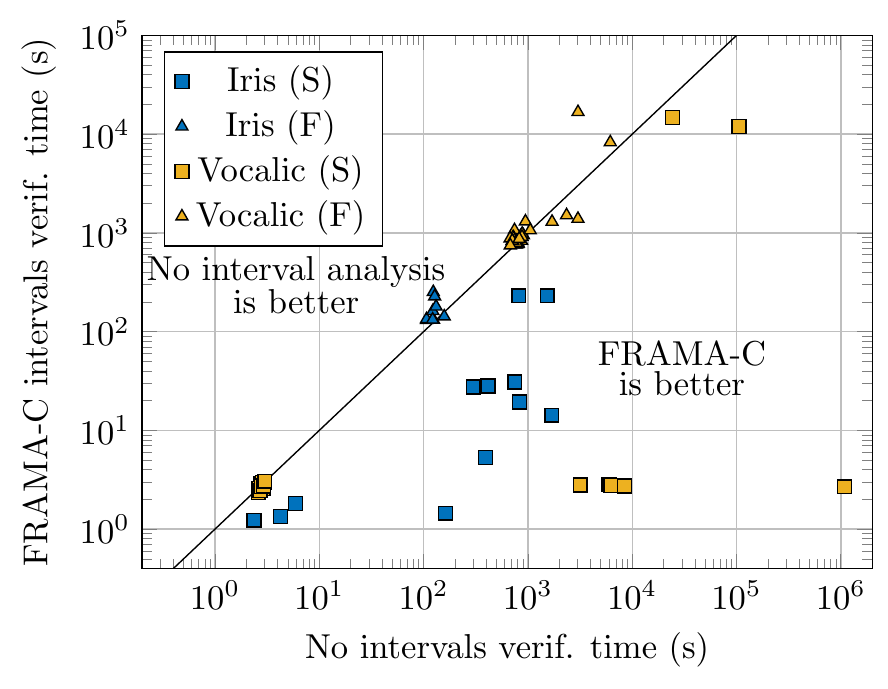}
		\caption{}
		\label{fig:interval-analysis-frama-c}
	\end{subfigure}
\caption{Comparison of verification times with and without interval analysis on the fixed-point Iris and Vocalic benchmarks. On the left, (a) enabling the native \texttt{--interval-analysis} option in ESBMC does not yield much improvement; on the right, (b) adding the intervals computed by FRAMA-C reduces verification times of a large number of safety properties. In both plots, we discriminate between successful verification outcomes (S) and falsifiable safety properties that admit a counterexample (F).}
\label{fig:interval-analysis-comparison}
\end{figure}

Such performance improvement is in line with our previous experiments over a large set of open-source software benchmarks when enabling invariant generation~\cite{GadelhaMCN19}. In particular, in the mentioned study, invariant generation based on intervals allowed us to verify $7$\% more programs using a \textit{k}-induction proof rule. Therefore, we chose to use both the \texttt{--interval-analysis} option in ESBMC and the FRAMA-C's intervals for the upcoming experiments.

\subsubsection{Activation function discretization comparison}
\label{sec:lookup-table}

The Iris and Vocalic benchmarks we use in the present ablation study are based on neural networks with sigmoid and hyperbolic tangent activation functions (see detailed descriptions in Sections \ref{sec:benchmark-iris} and \ref{sec:benchmark-vocalic}). An important step in our verification methodology is the discretization of such functions, as explained in Section \ref{ssec:discretise}. In practical terms, it means replacing the non-linear mathematical expression of the activation function with a look-up table. Here, we show the impact of the resolution of such look-up table on verification times, and how the error we introduce with the discretization influences the verification outcome.

To this end, we compare three different resolutions of our look-up tables, which we call \textit{Res1}, \textit{Res2}, and \textit{Res3}. These discretize the input interval $[-6,+6]$ with one, two, or three decimal fractional places, respectively. Outputs for inputs that fall outside that range are automatically saturated to $0$ or $1$ for the sigmoid function and $-1$ or $+1$ for the hyperbolic tangent one. We report the corresponding results on the Iris and Vocalic benchmarks with $8$, $16$, and $32$ bits, all condensed in Fig.~\ref{fig:lookup-table-time}. Although coarser resolutions usually result in faster verification times, as expected, given the inherent speed-up in operations, one may also notice some outliers: all regarding the Iris benchmark, when comparing \textit{Res1} with \textit{Res2}, and a mixture of Iris and Vocalic benchmarks, when comparing \textit{Res2} with \textit{Res3}. This is because different look-up table resolutions affect the computation of each neuron's output, and, in some cases, even the ANN's output itself (see example in Section \ref{ssec:quantisedann}). Consequently, a given violation that happened early during state-space exploration may then occur later or may not be even identified anymore, thus introducing a lot of variability in the verification time.

A more outcome-oriented comparison is presented in Table \ref{fig:lookup-table-outcome}. As one can notice, the verification outcome is indeed affected by the resolution choice. In fact, comparing \textit{Res1} and \textit{Res2} on the Vocalic benchmark yields one instance where the two verification runs disagree: \textit{Res1} reports a falsifiable property with a counterexample (F), whereas suh counterexample disappears with the finer resolution \textit{Res2} and the property is declared safe (S). Unfortunately, if we increase the resolution further to \textit{Res3}, the additional computational requirements overwhelm our verification setup, and we begin to observe a number of time-outs. This is more noticeable for the Vocalic benchmarks, because they employ a larger ANN.

\begin{figure}[htb]
\centering
	\begin{subfigure}[b]{0.48\linewidth}
		\centering
		    \includegraphics[width=\linewidth]{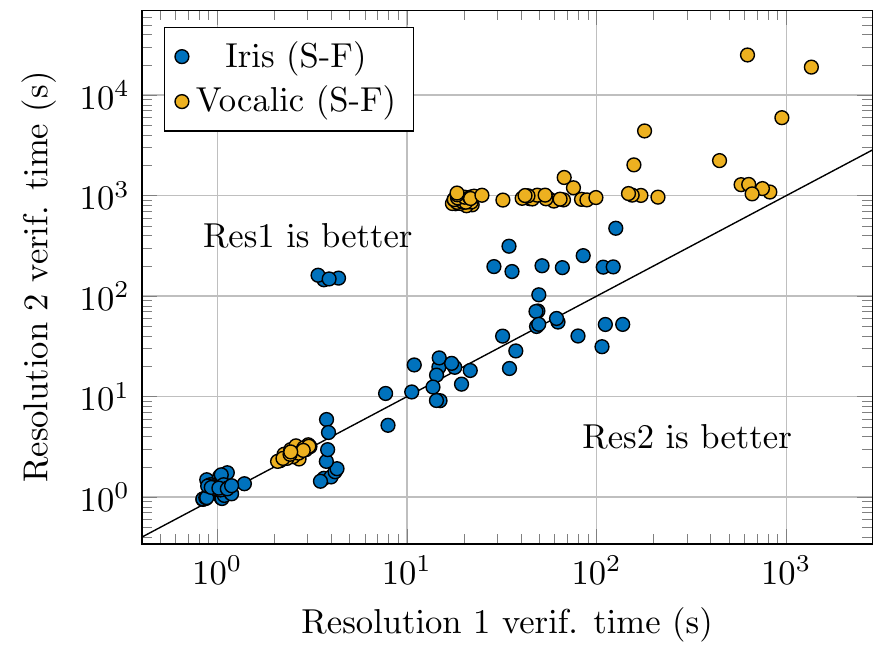}
		\caption{}
		\label{fig:lookup-table-time-res1-res2}
	\end{subfigure}
	\begin{subfigure}[b]{0.48\linewidth}
		\centering
		    \includegraphics[width=\linewidth]{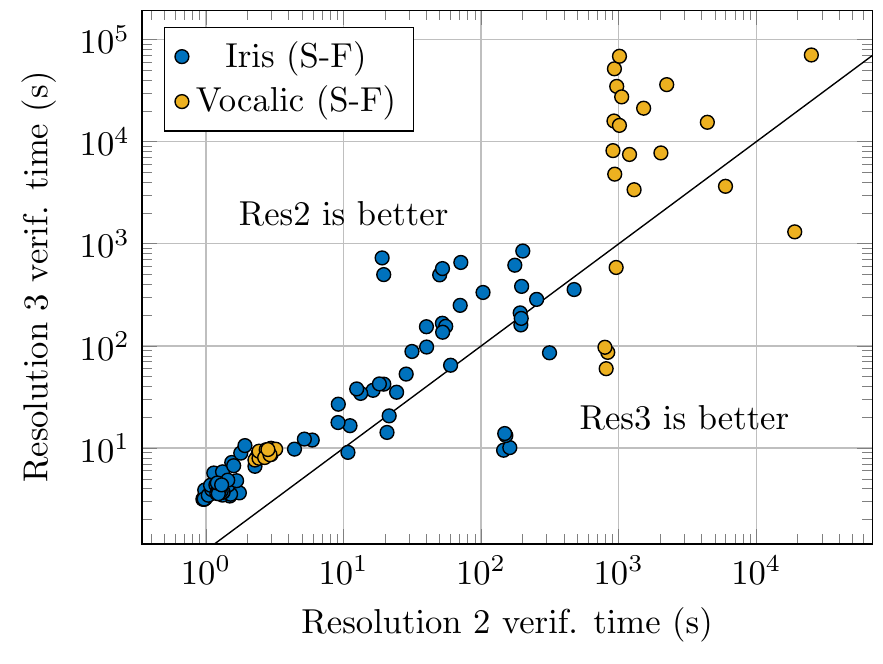}
		\caption{}
		\label{fig:lookup-table-time-res2-res3}
	\end{subfigure}
\caption{Comparison of verification times with different discretization resolutions for activation functions on the fixed-point Iris and Vocalic benchmarks. On the left, (a) comparison between one and two decimal places; on the right, (b) comparison between two and three decimal places. In both plots, we only report benchmarks that did not incur in timeout.}
\label{fig:lookup-table-time}
\end{figure}

In conclusion, choosing the right discretization resolution is a trade-off between verification time and possible errors in verification outcomes. In the ablation study in Section \ref{sec:ablation-study-v2} and the later quantization experiments in Section \ref{sec:verification-of-anns-with-fwl-implementation-v2}, we choose the intermediate resolution \textit{Res2}, based on two main reasons. First, it is the finest resolution that does not incur in large amounts of timeout when verifying our benchmarks. Second, all the counterexamples generated with it are valid, as we confirmed by running them through a non-discretized MATLAB implementation of the corresponding neural networks.

\begin{table}[htb]
\centering
    \begin{subtable}[b]{0.48\linewidth}
    \centering
        \begin{tabular}{|c|c|c|c|c|}
    	\hline
    	\multicolumn{2}{|c|}{Iris} & \multicolumn{3}{c|}{Res2}\\
    	\cline{3-5}
    	\multicolumn{2}{|c|}{Dataset} & S & F & TO\\
    	\hline
    	\multirow{3}{*}{Res1} & S & 72 & 0 & 0\\
    	\cline{2-5}
    	& F & 0 & 9 & 0\\
    	\cline{2-5}
    	& TO & 0 & 0 & 0\\
    	\hline
    	\multicolumn{5}{c}{}\\
    	\hline
    	\multicolumn{2}{|c|}{Vocalic} & \multicolumn{3}{c|}{Res2}\\
    	\cline{3-5}
    	\multicolumn{2}{|c|}{Dataset} & S & F & TO\\
    	\hline
    	\multirow{3}{*}{Res1} & S & 21 & 0 & 0\\
    	\cline{2-5}
    	& F & 1 & 53 & 0\\
    	\cline{2-5}
    	& TO & 0 & 0 & 0\\
    	\hline
        \end{tabular}
    \caption{}
    \label{fig:lookup-table-outcome-res1-res2}
    \end{subtable}
    \begin{subtable}[b]{0.48\linewidth}
    \centering
        \begin{tabular}{|c|c|c|c|c|}
    	\hline
    	\multicolumn{2}{|c|}{Iris} & \multicolumn{3}{c|}{Res3}\\
    	\cline{3-5}
    	\multicolumn{2}{|c|}{Dataset} & S & F & TO\\
    	\hline
    	\multirow{3}{*}{Res2} & S & 70 & 0 & 2\\
    	\cline{2-5}
    	& F & 0 & 0 & 9\\
    	\cline{2-5}
    	& TO & 0 & 0 & 0\\
    	\hline
    	\multicolumn{5}{c}{}\\
    	\hline
    	\multicolumn{2}{|c|}{Vocalic} & \multicolumn{3}{c|}{Res3}\\
    	\cline{3-5}
    	\multicolumn{2}{|c|}{Dataset} & S & F & TO\\
    	\hline
    	\multirow{3}{*}{Res2} & S & 20 & 0 & 2\\
    	\cline{2-5}
    	& F & 0 & 0 & 53\\
    	\cline{2-5}
    	& TO & 0 & 0 & 0\\
    	\hline
        \end{tabular}
    \caption{}
    \label{fig:lookup-table-outcome-res2-res3}
    \end{subtable}
\caption{Comparison of verification outcomes with different discretization resolutions of activation functions on the fixed-point Iris and Vocalic benchmarks. On the left, (a) comparison between one and two decimal places; on the right, (b) comparison between two and three decimal places. Both tables are structured as confusion matrices: entries on the main diagonal represent benchmarks with the same outcome under both resolutions. There, we discriminate between successful verification outcomes (S), falsifiable properties that admit a counterexample (F), and properties that incurred in timeout (TO).}
\label{fig:lookup-table-outcome}
\end{table}

\subsubsection{Code generation comparison}
\label{sec:code-balancing}

In Section \ref{ssec:codegen} we mentioned that a single ANN can be implemented in multiple ways. In fact, due to the intrinsic parallelism of neural architectures, the order of many mathematical operations can be shuffled arbitrarily. Here, we show that our verification methodology produces the same result (time and outcome) for very different orderings of these mathematical operations, and thus its performance is stable across them.

Specifically, we focus on the order of operations required to compute the activation potential of each neuron, one of the basic building blocks of ANNs (see \eqref{eq:anncalc_aff}). In this regard, we compare two opposite implementations of it that we exemplify in Fig.~\ref{fig:code-balancing}. On the one hand, we have run a fully sequential version of that ANN code, where each multiply-and-accumulate (MAC) operation in \eqref{eq:anncalc_aff} is executed in the same order as the input vector $\mathbf{x}$. We implement this version of the code with simple loops as in the example of Fig.~\ref{fig:code-balancing-seq}. On the other hand, we have also run a \textit{balanced} version of the ANN code, where the MAC operations are reordered in a divide-and-conquer sequence to minimize the number of additions, as in the example of Fig.~\ref{fig:code-balancing-bal}. Such associative rebalancing procedures are common optimizations performed by compilers, as they reduce the total number of machine instructions and improve execution time on out-of-order processors~\cite{Zory98usingalgebraic,Dekel1986,Karfa2013}.

\begin{figure}[ht]
\centering
    \begin{subfigure}{0.4\textwidth}
        \begin{lstlisting}[numbers=left]
float potential_8(float *w,
                  float *x,
                  float b) {
  float result = 0;
  
  for (unsigned int i=0; i<8; ++i) {
    result += w[i] * x[i];
  }
  
  result += b;
  
  return result;
}
        \end{lstlisting}
    \caption{}
    \label{fig:code-balancing-seq}
    \end{subfigure}
\hspace{0.05\textwidth}
    \begin{subfigure}{0.4\textwidth}
\begin{lstlisting}[numbers=left]  
float potential_8(float *w,
                  float *x,
                  float b) {
  float tmp_01 = w[0]*x[0]+w[1]*x[1];
  float tmp_23 = w[2]*x[2]+w[3]*x[3];
  float tmp_45 = w[4]*x[4]+w[5]*x[5];
  float tmp_67 = w[6]*x[6]+w[7]*x[7];
  float tmp_0123 = tmp_01 + tmp_23;
  float tmp_4567 = tmp_45 + tmp_67;
  float result = tmp_0123 + tmp_4567;
  result += b;
  return result;
}
        \end{lstlisting}
    \caption{}
    \label{fig:code-balancing-bal}
    \end{subfigure}
\caption{An example of balancing the order of MAC operations when computing activation potentials with eight inputs: (a) sequential version with a loop and (b) balanced version with a divide-and-conquer pattern.}
\label{fig:code-balancing}
\end{figure}

As we have done in the previous Sections \ref{sec:smt-solvers-comparison-v2}, \ref{sec:fine-tuning-bmc-parameters-v2} and \ref{sec:interval-analysis-comparison}, we compare the verification performance of these two code generation approaches on the fixed-point Iris and Vocalic benchmarks, considering the word lengths $8$, $16$ and $32$ (bits). The results in Fig.~\ref{fig:code-balancing-result} show very little difference in verification time and identical verification outcomes (except for one single timeout with balanced code). The reason for such a behavior lies in the understanding that ESBMC performs several aggressive expression-simplification steps including associative techniques, as explained in Section \ref{sec:SearchSpaceReduction}. As such, the final set of SMT formulae that are fed into the solver are quite insensitive to the order of operations in ANN code. Thus, we can conclude that the performance of our verification methodology is consistent across different implementations of the same ANN.

\begin{figure}[htb]
\centering
	\begin{subfigure}[b]{0.48\linewidth}
		\centering
		    \includegraphics[width=\linewidth]{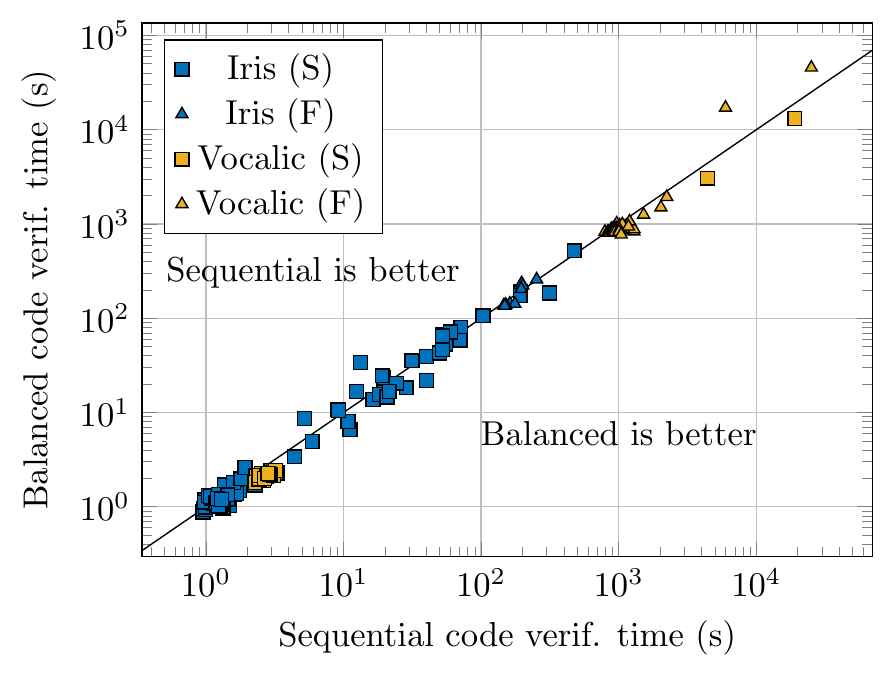}
		\caption{}
		\label{fig:balancing-time}
	\end{subfigure}
	\begin{subfigure}[b]{0.48\linewidth}
    \centering
    \raisebox{2.8cm}{
        \begin{tabular}{|c|c|c|c|c|}
        \hline
        \multicolumn{2}{|c|}{Iris} & \multicolumn{3}{c|}{Balanced}\\
        \cline{3-5}
        \multicolumn{2}{|c|}{Dataset} & S & F & TO\\
        \hline
        \multirow{3}{*}{Sequential} & S & 72 & 0 & 0\\
        \cline{2-5}
        & F & 0 & 9 & 0\\
        \cline{2-5}
        & TO & 0 & 0 & 0\\
        \hline
        \multicolumn{5}{c}{}\\
        \hline
        \multicolumn{2}{|c|}{Vocalic} & \multicolumn{3}{c|}{Balanced}\\
        \cline{3-5}
        \multicolumn{2}{|c|}{Dataset} & S & F & TO\\
        \hline
        \multirow{3}{*}{Sequential} & S & 22 & 0 & 0\\
        \cline{2-5}
        & F & 0 & 52 & 1\\
        \cline{2-5}
        & TO & 0 & 0 & 0\\
        \hline
        \end{tabular}
    }
    \caption{}
    \label{fig:balancing-outcome}
    \end{subfigure}
\caption{Comparison of verification performance with different ANN code generation techniques on the fixed-point Iris and Vocalic datasets. On the left, (a) verification time; on the right, (b) verification outcome. In all plots and tables, we discriminate between successful verification outcomes (S) and falsifiable safety properties that admit a counterexample (F).}
\label{fig:code-balancing-result}
\end{figure}

\begin{tcolorbox}
The results presented here successfully answer \textbf{RQ1 - Ablation study}: we have identified an optimal configuration for the ESBMC verification engine within our framework, which consists of using the SMT solver Yices and the \texttt{interval-analysis} option in conjunction with FRAMA-C intervals. Moreover, we quantified the individual importance and associated influence of a number of related techniques: constant propagation, expression simplification, slicing, incremental verification, discretization of non-linear activation functions, and code generation.
\end{tcolorbox}

\subsection{Verification of quantized ANNs}
\label{sec:verification-of-anns-with-fwl-implementation-v2}

In Section \ref{sec:ablation-study-v2}, we established what the best configuration of our verification method is by comparing its runtime under different scenarios. Similarly, in the present section, we compare its verification time and output along another dimension: the quantization level of ANNs. Our main result is that the granularity of ANN quantization may influence verification performance, but that may even be considered minor, depending on the specific aspect being evaluated. Here, we show that this is true both for verification time and verification outcome. Consequently, ANN quantization can be regarded as a viable and effective tool for adaptation towards a given target platform, as long as some evaluation is performed. 

\subsubsection{Effects of quantization on verification time}
\label{sec:quantization-vs-verification-time}

First, let us comment on how the quantization of an ANN affects its verification time of its safety properties. First, recall that verifying quantized neural networks is PSPACE-hard, as proven by~\cite{Henzinger2020}. However, this is a theoretical worst case, and existing empirical results in~\cite{Giacobbe2020} show a positive correlation between the number of bits used in a quantized representation and the total verification time. Here, we show that this correlation holds only for small number of bits and specific safety properties, and there is no general trend for word lengths equal or longer than $16$ bits.

To this end, we run our Iris and Vocalic benchmarks with a broad range of quantization levels, covering the span between the common word lengths of $8$, $16$ and $32$ bits, and extending to smaller word lengths with zero fractional bits. We present such results in Fig.~\ref{fig:quantum-scatter}. Note that there is a general upwards trend in verification time for short word lengths (from $6-7$ to $15$ bits), but this phenomenon almost disappears for longer word lengths ($16$ bits and above). Moreover, results are spread across six orders of magnitude, thus it is difficult to prove the existence of a true correlation in the associated data. In fact, applying common summary statistics (e.g., median verification time like in~\cite{Giacobbe2020}) shows only a partial correlation between time and quantization for the Iris benchmarks, and none for the Vocalic ones.

\begin{figure}[htb]
\centering
    \begin{subfigure}[b]{0.48\linewidth}
		\centering
		    \includegraphics[width=\linewidth]{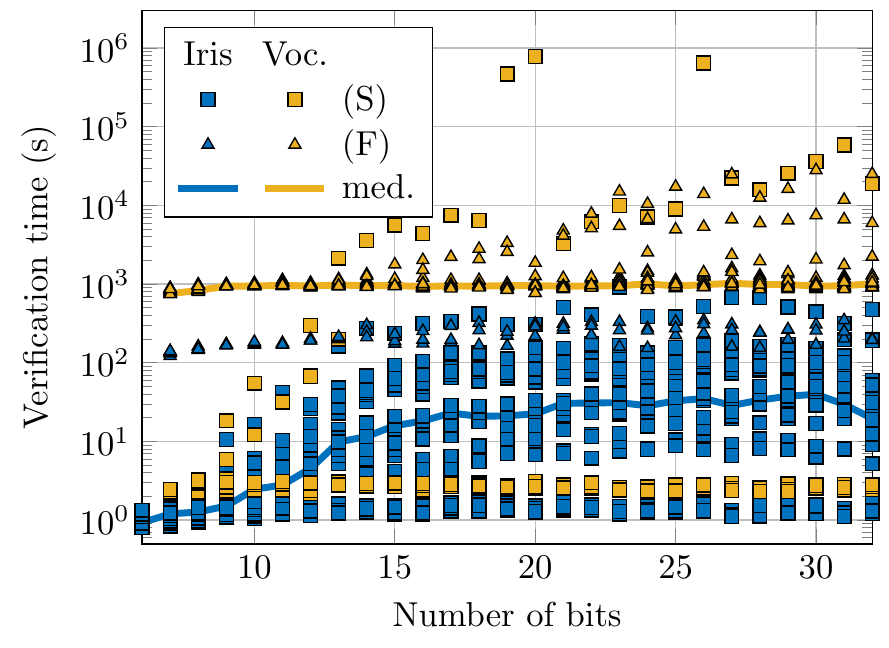}
		\caption{}
		\label{fig:quantum-scatter}
	\end{subfigure}
    \begin{subfigure}[b]{0.48\linewidth}
		\centering
		    \includegraphics[width=\linewidth]{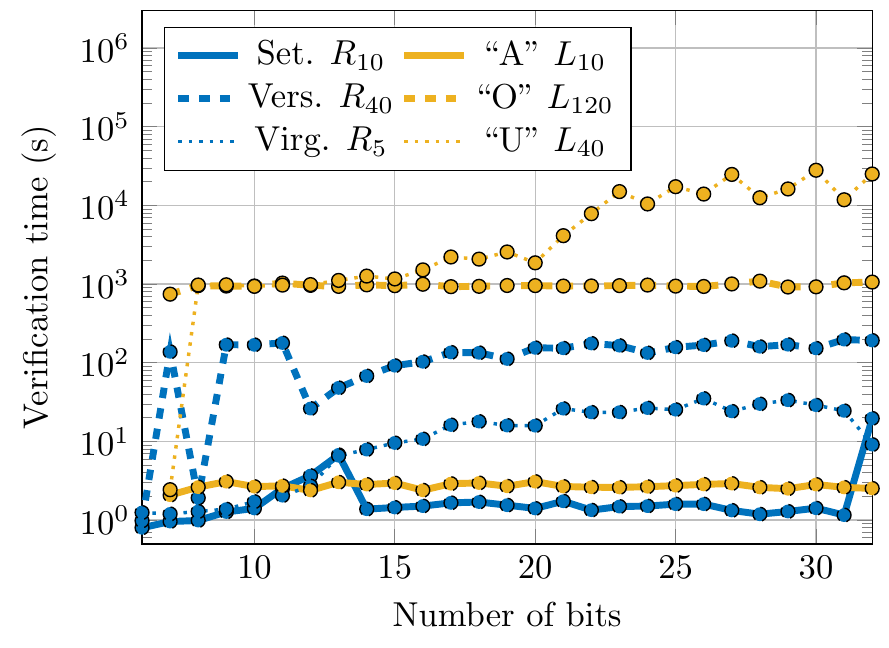}
		\caption{}
		\label{fig:quantum-prop}
	\end{subfigure}
\caption{Comparison of verification times with different quantization levels on the fixed-point Iris and Vocalic benchmarks. On the left, (a) a scatter plot of all the safety properties in our benchmarks with their respective median times; on the right, (b) a selection of six safety properties shows a very limited correlation between number of bits and verification time.}
\label{fig:quantum_sweep_time}
\end{figure}

A better understanding can be extracted by selecting individual safety properties and comparing their verification time across different quantization levels. We do so in Fig.~~\ref{fig:quantum-prop}, where we choose six properties from Fig.~\ref{fig:quantum-scatter} that showcase the full range of behaviors. More specifically, we broadly observe three different behaviors. First, properties like Vocalic ``A'' $L_{10}$ and Vocalic ``O'' $L_{120}$ exhibit almost identical verification time across all quantization levels. Second, properties like Iris Setosa $R_{10}$ and Iris Versicolor $R_{40}$ are somewhat erratic across quantization levels. However, their verification time falls into a limited range, where no systematic trend emerges. Third, properties like Vocalic ``U'' $L_{40}$ and Iris Virginica $R_{5}$ have verification time that is mildly correlated with the quantization level.

Overall, we believe that the quantization level has only a minor impact on the hardness of the verification problem from a practical perspective. Other factors, like the number of active neurons or the size of input regions of a given safety property, are probably better predictors regarding verification time. However, since these are beyond the scope of the present paper, we leave a thorough exploration of them to future work, where we might establish predictors and bounds.

\subsubsection{Effects of quantization on verification outcome}
\label{sec:quantization-vs-verification-outcome}


Another aspect regards verification outcomes, where narrower bit widths deserve some discussion. Here, we take the results of the same experiments shown in Section \ref{sec:quantization-vs-verification-time} and plot, in Fig.~\ref{fig:quantum_sweep_outcome}, a summary of how many safety properties are declared safe (S), generate a counterexample (F), or result in timeout (TO). As the figure shows, the percentage of successful safety properties is stable across quantization levels. The only noticeable differences happen in the Iris and Vocalic benchmarks for small word lengths. In the former, we observe a sudden drop in the number of safe properties between $6$ and $7$ bits, which goes through behavior that resembles transient responses in control systems~\cite{ChavesBIFCF18}, until a more suitable representation is achieved ($12$ bits). In addition, with $6$ bits, all safety properties are declared safe, which is indeed due to differences caused by computation with quantized values (see Section \ref{ssec:quantisedann}). Moreover, one may notice a clear trend related to more comprehensive formats, indicating an increasing number of correct operations.

We observe a higher incidence of undecidable safety properties regarding the Vocalic benchmarks that lead to a timeout. Note, however, that the Vocalic ANN is larger than Iris. Thus, more timeout events are expected due to the additional computational complexity, which is also worsened by the chosen representation. Again, stability regarding verification outcome is only achieved when a more suitable representation is used ($14$ bits).

In this context, some conclusions can be drawn. Indeed, there is a clear relationship between data representation and safety-property verification when using restricted formats. In addition, it becomes negligible when more bits are used. Moreover, arbitrarily small representations should not be carelessly used, as erratic behavior may be experienced.

\begin{figure}[htb]
\centering
    \begin{subfigure}[b]{0.48\linewidth}
		\centering
		    \includegraphics[width=\linewidth]{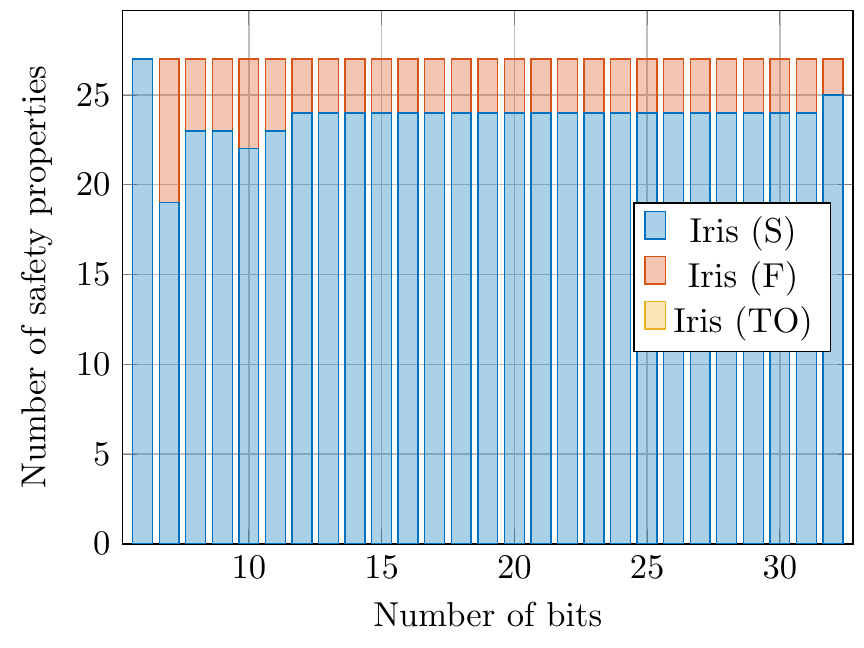}
		\caption{}
		\label{fig:quantum-out-iris}
	\end{subfigure}
    \begin{subfigure}[b]{0.48\linewidth}
		\centering
		    \includegraphics[width=\linewidth]{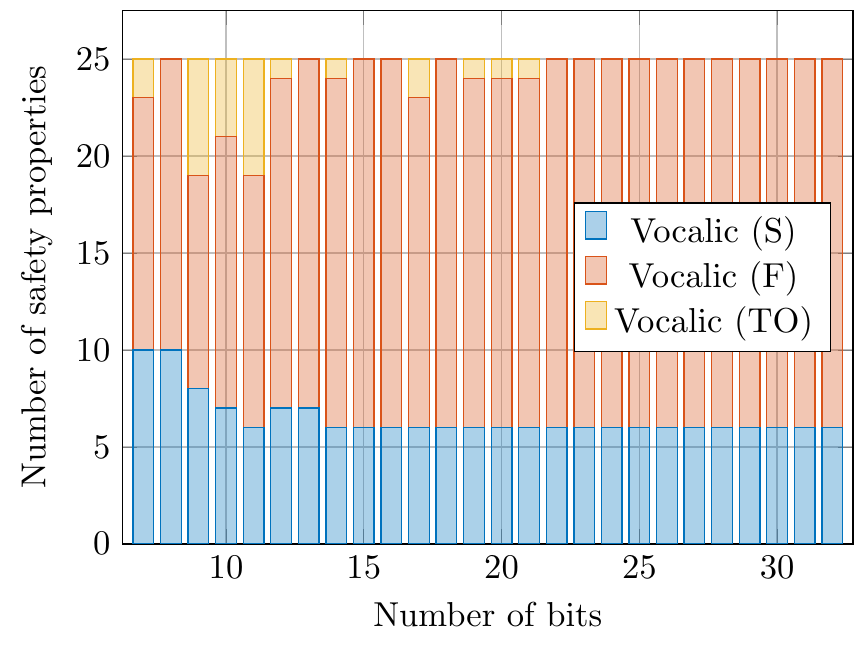}
		\caption{}
		\label{fig:quantum-out-vocalic}
	\end{subfigure}
\caption{Comparison of verification outcomes with different quantization levels on the fixed-point Iris and Vocalic benchmarks. On the left, (a) Iris dataset; on the right, (b) Vocalic dataset. In both histograms, we discriminate between successful verification outcomes (S), falsifiable properties that admit a counterexample (F), and properties that resulted in timeout (TO).}
\label{fig:quantum_sweep_outcome}
\end{figure}

A more focused picture of the relationship between quantization and verification outcome can be extracted by looking at individual safety properties. To this end, we report, in Tables \ref{table:quantum_sweep_iris_out} and \ref{table:quantum_sweep_vocalic_out}, all safety properties that have different outcomes across quantization levels. There, we can see two completely opposite behaviors. On the one hand, properties like Vocalic ``A'' $L_{20}$, Vocalic ``I'' $L_{10}$, Iris Versicolor $R_{50}$ and Iris Virginica $L_{50}$ are only safe for very short word lengths. On the other hand, properties like Vocalic ``U'' $L_{20}$  and Iris Versicolor $R_{40}$ tend to be safe as the word length increases.

\begin{table}[htb]
\resizebox{\textwidth}{!}{%
\begin{tabular}{|c|c|c|c|c|c|c|c|c|c|c|c|c|c|c|c|c|c|c|c|c|c|c|c|c|c|c|c|c|}
\hline
\multicolumn{2}{|c|}{Iris} & \multicolumn{27}{c|}{Number of bits}\\
\hline
\multicolumn{2}{|c|}{Property} & 6 & 7 & 8 & 9 & 10 & 11 & 12 & 13 & 14 & 15 & 16 & 17 & 18 & 19 & 20 & 21 & 22 & 23 & 24 & 25 & 26 & 27 & 28 & 29 & 30 & 31 & 32\\
\hline
\multirow{2}{*}{Set.} & $R_{40}$ & \cellcolor{blue!25}S & \cellcolor{blue!25}S & \cellcolor{red!25}F & \cellcolor{blue!25}S & \cellcolor{blue!25}S & \cellcolor{blue!25}S & \cellcolor{blue!25}S & \cellcolor{blue!25}S & \cellcolor{blue!25}S & \cellcolor{blue!25}S & \cellcolor{blue!25}S & \cellcolor{blue!25}S & \cellcolor{blue!25}S & \cellcolor{blue!25}S & \cellcolor{blue!25}S & \cellcolor{blue!25}S & \cellcolor{blue!25}S & \cellcolor{blue!25}S & \cellcolor{blue!25}S & \cellcolor{blue!25}S & \cellcolor{blue!25}S & \cellcolor{blue!25}S & \cellcolor{blue!25}S & \cellcolor{blue!25}S & \cellcolor{blue!25}S & \cellcolor{blue!25}S & \cellcolor{blue!25}S\\
\cline{2-29}
 & $R_{50}$ & \cellcolor{blue!25}S & \cellcolor{blue!25}S & \cellcolor{red!25}F & \cellcolor{red!25}F & \cellcolor{red!25}F & \cellcolor{red!25}F & \cellcolor{red!25}F & \cellcolor{red!25}F & \cellcolor{red!25}F & \cellcolor{red!25}F & \cellcolor{red!25}F & \cellcolor{red!25}F & \cellcolor{red!25}F & \cellcolor{red!25}F & \cellcolor{red!25}F & \cellcolor{red!25}F & \cellcolor{red!25}F & \cellcolor{red!25}F & \cellcolor{red!25}F & \cellcolor{red!25}F & \cellcolor{red!25}F & \cellcolor{red!25}F & \cellcolor{red!25}F & \cellcolor{red!25}F & \cellcolor{red!25}F & \cellcolor{red!25}F & \cellcolor{blue!25}S\\
\hline
\multirow{4}{*}{Vers.} & $R_{20}$ & \cellcolor{blue!25}S & \cellcolor{red!25}F & \cellcolor{blue!25}S & \cellcolor{blue!25}S & \cellcolor{blue!25}S & \cellcolor{blue!25}S & \cellcolor{blue!25}S & \cellcolor{blue!25}S & \cellcolor{blue!25}S & \cellcolor{blue!25}S & \cellcolor{blue!25}S & \cellcolor{blue!25}S & \cellcolor{blue!25}S & \cellcolor{blue!25}S & \cellcolor{blue!25}S & \cellcolor{blue!25}S & \cellcolor{blue!25}S & \cellcolor{blue!25}S & \cellcolor{blue!25}S & \cellcolor{blue!25}S & \cellcolor{blue!25}S & \cellcolor{blue!25}S & \cellcolor{blue!25}S & \cellcolor{blue!25}S & \cellcolor{blue!25}S & \cellcolor{blue!25}S & \cellcolor{blue!25}S\\
\cline{2-29}
 & $R_{30}$ & \cellcolor{blue!25}S & \cellcolor{red!25}F & \cellcolor{blue!25}S & \cellcolor{blue!25}S & \cellcolor{blue!25}S & \cellcolor{blue!25}S & \cellcolor{blue!25}S & \cellcolor{blue!25}S & \cellcolor{blue!25}S & \cellcolor{blue!25}S & \cellcolor{blue!25}S & \cellcolor{blue!25}S & \cellcolor{blue!25}S & \cellcolor{blue!25}S & \cellcolor{blue!25}S & \cellcolor{blue!25}S & \cellcolor{blue!25}S & \cellcolor{blue!25}S & \cellcolor{blue!25}S & \cellcolor{blue!25}S & \cellcolor{blue!25}S & \cellcolor{blue!25}S & \cellcolor{blue!25}S & \cellcolor{blue!25}S & \cellcolor{blue!25}S & \cellcolor{blue!25}S & \cellcolor{blue!25}S\\
\cline{2-29}
 & $R_{40}$ & \cellcolor{blue!25}S & \cellcolor{red!25}F & \cellcolor{blue!25}S & \cellcolor{red!25}F & \cellcolor{red!25}F & \cellcolor{red!25}F & \cellcolor{blue!25}S & \cellcolor{blue!25}S & \cellcolor{blue!25}S & \cellcolor{blue!25}S & \cellcolor{blue!25}S & \cellcolor{blue!25}S & \cellcolor{blue!25}S & \cellcolor{blue!25}S & \cellcolor{blue!25}S & \cellcolor{blue!25}S & \cellcolor{blue!25}S & \cellcolor{blue!25}S & \cellcolor{blue!25}S & \cellcolor{blue!25}S & \cellcolor{blue!25}S & \cellcolor{blue!25}S & \cellcolor{blue!25}S & \cellcolor{blue!25}S & \cellcolor{blue!25}S & \cellcolor{blue!25}S & \cellcolor{blue!25}S\\
\cline{2-29}
 & $R_{50}$ & \cellcolor{blue!25}S & \cellcolor{red!25}F & \cellcolor{red!25}F & \cellcolor{red!25}F & \cellcolor{red!25}F & \cellcolor{red!25}F & \cellcolor{red!25}F & \cellcolor{red!25}F & \cellcolor{red!25}F & \cellcolor{red!25}F & \cellcolor{red!25}F & \cellcolor{red!25}F & \cellcolor{red!25}F & \cellcolor{red!25}F & \cellcolor{red!25}F & \cellcolor{red!25}F & \cellcolor{red!25}F & \cellcolor{red!25}F & \cellcolor{red!25}F & \cellcolor{red!25}F & \cellcolor{red!25}F & \cellcolor{red!25}F & \cellcolor{red!25}F & \cellcolor{red!25}F & \cellcolor{red!25}F & \cellcolor{red!25}F & \cellcolor{red!25}F\\
\hline
\multirow{4}{*}{Virg.} & $R_{20}$ & \cellcolor{blue!25}S & \cellcolor{red!25}F & \cellcolor{blue!25}S & \cellcolor{blue!25}S & \cellcolor{blue!25}S & \cellcolor{blue!25}S & \cellcolor{blue!25}S & \cellcolor{blue!25}S & \cellcolor{blue!25}S & \cellcolor{blue!25}S & \cellcolor{blue!25}S & \cellcolor{blue!25}S & \cellcolor{blue!25}S & \cellcolor{blue!25}S & \cellcolor{blue!25}S & \cellcolor{blue!25}S & \cellcolor{blue!25}S & \cellcolor{blue!25}S & \cellcolor{blue!25}S & \cellcolor{blue!25}S & \cellcolor{blue!25}S & \cellcolor{blue!25}S & \cellcolor{blue!25}S & \cellcolor{blue!25}S & \cellcolor{blue!25}S & \cellcolor{blue!25}S & \cellcolor{blue!25}S\\
\cline{2-29}
 & $R_{30}$ & \cellcolor{blue!25}S & \cellcolor{red!25}F & \cellcolor{blue!25}S & \cellcolor{blue!25}S & \cellcolor{blue!25}S & \cellcolor{blue!25}S & \cellcolor{blue!25}S & \cellcolor{blue!25}S & \cellcolor{blue!25}S & \cellcolor{blue!25}S & \cellcolor{blue!25}S & \cellcolor{blue!25}S & \cellcolor{blue!25}S & \cellcolor{blue!25}S & \cellcolor{blue!25}S & \cellcolor{blue!25}S & \cellcolor{blue!25}S & \cellcolor{blue!25}S & \cellcolor{blue!25}S & \cellcolor{blue!25}S & \cellcolor{blue!25}S & \cellcolor{blue!25}S & \cellcolor{blue!25}S & \cellcolor{blue!25}S & \cellcolor{blue!25}S & \cellcolor{blue!25}S & \cellcolor{blue!25}S\\
\cline{2-29}
 & $R_{40}$ & \cellcolor{blue!25}S & \cellcolor{red!25}F & \cellcolor{blue!25}S & \cellcolor{blue!25}S & \cellcolor{red!25}F & \cellcolor{blue!25}S & \cellcolor{blue!25}S & \cellcolor{blue!25}S & \cellcolor{blue!25}S & \cellcolor{blue!25}S & \cellcolor{blue!25}S & \cellcolor{blue!25}S & \cellcolor{blue!25}S & \cellcolor{blue!25}S & \cellcolor{blue!25}S & \cellcolor{blue!25}S & \cellcolor{blue!25}S & \cellcolor{blue!25}S & \cellcolor{blue!25}S & \cellcolor{blue!25}S & \cellcolor{blue!25}S & \cellcolor{blue!25}S & \cellcolor{blue!25}S & \cellcolor{blue!25}S & \cellcolor{blue!25}S & \cellcolor{blue!25}S & \cellcolor{blue!25}S\\
\cline{2-29}
 & $R_{50}$ & \cellcolor{blue!25}S & \cellcolor{red!25}F & \cellcolor{red!25}F & \cellcolor{red!25}F & \cellcolor{red!25}F & \cellcolor{red!25}F & \cellcolor{red!25}F & \cellcolor{red!25}F & \cellcolor{red!25}F & \cellcolor{red!25}F & \cellcolor{red!25}F & \cellcolor{red!25}F & \cellcolor{red!25}F & \cellcolor{red!25}F & \cellcolor{red!25}F & \cellcolor{red!25}F & \cellcolor{red!25}F & \cellcolor{red!25}F & \cellcolor{red!25}F & \cellcolor{red!25}F & \cellcolor{red!25}F & \cellcolor{red!25}F & \cellcolor{red!25}F & \cellcolor{red!25}F & \cellcolor{red!25}F & \cellcolor{red!25}F & \cellcolor{red!25}F\\
\hline
\end{tabular}}
\caption{Iris safety properties with different verification outcomes across quantization levels.}
\label{table:quantum_sweep_iris_out}
\end{table}

\begin{table}[htb]
\resizebox{\textwidth}{!}{%
\begin{tabular}{|c|c|c|c|c|c|c|c|c|c|c|c|c|c|c|c|c|c|c|c|c|c|c|c|c|c|c|c|}
\hline
\multicolumn{2}{|c|}{Vocalic} & \multicolumn{26}{c|}{Number of bits}\\
\hline
\multicolumn{2}{|c|}{Property} & 7 & 8 & 9 & 10 & 11 & 12 & 13 & 14 & 15 & 16 & 17 & 18 & 19 & 20 & 21 & 22 & 23 & 24 & 25 & 26 & 27 & 28 & 29 & 30 & 31 & 32\\
\hline
\multirow{4}{*}{A} & $L_{20}$ & \cellcolor{blue!25}S & \cellcolor{blue!25}S & \cellcolor{blue!25}S & \cellcolor{blue!25}S & \cellcolor{blue!25}S & \cellcolor{blue!25}S & \cellcolor{blue!25}S & \cellcolor{red!25}F & \cellcolor{red!25}F & \cellcolor{red!25}F & \cellcolor{yellow!50}TO & \cellcolor{red!25}F & \cellcolor{red!25}F & \cellcolor{yellow!50}TO & \cellcolor{red!25}F & \cellcolor{red!25}F & \cellcolor{red!25}F & \cellcolor{red!25}F & \cellcolor{red!25}F & \cellcolor{red!25}F & \cellcolor{red!25}F & \cellcolor{red!25}F & \cellcolor{red!25}F & \cellcolor{red!25}F & \cellcolor{red!25}F & \cellcolor{red!25}F\\
\cline{2-28}
 & $L_{40}$ & \cellcolor{blue!25}S & \cellcolor{red!25}F & \cellcolor{blue!25}S & \cellcolor{red!25}F & \cellcolor{red!25}F & \cellcolor{red!25}F & \cellcolor{red!25}F & \cellcolor{red!25}F & \cellcolor{red!25}F & \cellcolor{red!25}F & \cellcolor{yellow!50}TO & \cellcolor{red!25}F & \cellcolor{yellow!50}TO & \cellcolor{red!25}F & \cellcolor{red!25}F & \cellcolor{red!25}F & \cellcolor{red!25}F & \cellcolor{red!25}F & \cellcolor{red!25}F & \cellcolor{red!25}F & \cellcolor{red!25}F & \cellcolor{red!25}F & \cellcolor{red!25}F & \cellcolor{red!25}F & \cellcolor{red!25}F & \cellcolor{red!25}F\\
\cline{2-28}
 & $L_{80}$ & \cellcolor{red!25}F & \cellcolor{red!25}F & \cellcolor{yellow!50}TO & \cellcolor{red!25}F & \cellcolor{red!25}F & \cellcolor{red!25}F & \cellcolor{red!25}F & \cellcolor{red!25}F & \cellcolor{red!25}F & \cellcolor{red!25}F & \cellcolor{red!25}F & \cellcolor{red!25}F & \cellcolor{red!25}F & \cellcolor{red!25}F & \cellcolor{red!25}F & \cellcolor{red!25}F & \cellcolor{red!25}F & \cellcolor{red!25}F & \cellcolor{red!25}F & \cellcolor{red!25}F & \cellcolor{red!25}F & \cellcolor{red!25}F & \cellcolor{red!25}F & \cellcolor{red!25}F & \cellcolor{red!25}F & \cellcolor{red!25}F\\
\cline{2-28}
 & $L_{120}$ & \cellcolor{red!25}F & \cellcolor{red!25}F & \cellcolor{red!25}F & \cellcolor{red!25}F & \cellcolor{red!25}F & \cellcolor{red!25}F & \cellcolor{red!25}F & \cellcolor{yellow!50}TO & \cellcolor{red!25}F & \cellcolor{red!25}F & \cellcolor{red!25}F & \cellcolor{red!25}F & \cellcolor{red!25}F & \cellcolor{red!25}F & \cellcolor{yellow!50}TO & \cellcolor{red!25}F & \cellcolor{red!25}F & \cellcolor{red!25}F & \cellcolor{red!25}F & \cellcolor{red!25}F & \cellcolor{red!25}F & \cellcolor{red!25}F & \cellcolor{red!25}F & \cellcolor{red!25}F & \cellcolor{red!25}F & \cellcolor{red!25}F\\
\hline
\multirow{3}{*}{E} & $L_{40}$ & \cellcolor{yellow!50}TO & \cellcolor{red!25}F & \cellcolor{yellow!50}TO & \cellcolor{yellow!50}TO & \cellcolor{yellow!50}TO & \cellcolor{yellow!50}TO & \cellcolor{red!25}F & \cellcolor{red!25}F & \cellcolor{red!25}F & \cellcolor{red!25}F & \cellcolor{red!25}F & \cellcolor{red!25}F & \cellcolor{red!25}F & \cellcolor{red!25}F & \cellcolor{red!25}F & \cellcolor{red!25}F & \cellcolor{red!25}F & \cellcolor{red!25}F & \cellcolor{red!25}F & \cellcolor{red!25}F & \cellcolor{red!25}F & \cellcolor{red!25}F & \cellcolor{red!25}F & \cellcolor{red!25}F & \cellcolor{red!25}F & \cellcolor{red!25}F\\
\cline{2-28}
 & $L_{80}$ & \cellcolor{red!25}F & \cellcolor{red!25}F & \cellcolor{yellow!50}TO & \cellcolor{red!25}F & \cellcolor{yellow!50}TO & \cellcolor{red!25}F & \cellcolor{red!25}F & \cellcolor{red!25}F & \cellcolor{red!25}F & \cellcolor{red!25}F & \cellcolor{red!25}F & \cellcolor{red!25}F & \cellcolor{red!25}F & \cellcolor{red!25}F & \cellcolor{red!25}F & \cellcolor{red!25}F & \cellcolor{red!25}F & \cellcolor{red!25}F & \cellcolor{red!25}F & \cellcolor{red!25}F & \cellcolor{red!25}F & \cellcolor{red!25}F & \cellcolor{red!25}F & \cellcolor{red!25}F & \cellcolor{red!25}F & \cellcolor{red!25}F\\
\cline{2-28}
 & $L_{120}$ & \cellcolor{red!25}F & \cellcolor{red!25}F & \cellcolor{red!25}F & \cellcolor{red!25}F & \cellcolor{yellow!50}TO & \cellcolor{red!25}F & \cellcolor{red!25}F & \cellcolor{red!25}F & \cellcolor{red!25}F & \cellcolor{red!25}F & \cellcolor{red!25}F & \cellcolor{red!25}F & \cellcolor{red!25}F & \cellcolor{red!25}F & \cellcolor{red!25}F & \cellcolor{red!25}F & \cellcolor{red!25}F & \cellcolor{red!25}F & \cellcolor{red!25}F & \cellcolor{red!25}F & \cellcolor{red!25}F & \cellcolor{red!25}F & \cellcolor{red!25}F & \cellcolor{red!25}F & \cellcolor{red!25}F & \cellcolor{red!25}F\\
\hline
\multirow{2}{*}{I} & $L_{10}$ & \cellcolor{blue!25}S & \cellcolor{blue!25}S & \cellcolor{blue!25}S & \cellcolor{yellow!50}TO & \cellcolor{red!25}F & \cellcolor{red!25}F & \cellcolor{red!25}F & \cellcolor{red!25}F & \cellcolor{red!25}F & \cellcolor{red!25}F & \cellcolor{red!25}F & \cellcolor{red!25}F & \cellcolor{red!25}F & \cellcolor{red!25}F & \cellcolor{red!25}F & \cellcolor{red!25}F & \cellcolor{red!25}F & \cellcolor{red!25}F & \cellcolor{red!25}F & \cellcolor{red!25}F & \cellcolor{red!25}F & \cellcolor{red!25}F & \cellcolor{red!25}F & \cellcolor{red!25}F & \cellcolor{red!25}F & \cellcolor{red!25}F\\
\cline{2-28}
 & $L_{20}$ & \cellcolor{red!25}F & \cellcolor{blue!25}S & \cellcolor{red!25}F & \cellcolor{yellow!50}TO & \cellcolor{red!25}F & \cellcolor{red!25}F & \cellcolor{red!25}F & \cellcolor{red!25}F & \cellcolor{red!25}F & \cellcolor{red!25}F & \cellcolor{red!25}F & \cellcolor{red!25}F & \cellcolor{red!25}F & \cellcolor{red!25}F & \cellcolor{red!25}F & \cellcolor{red!25}F & \cellcolor{red!25}F & \cellcolor{red!25}F & \cellcolor{red!25}F & \cellcolor{red!25}F & \cellcolor{red!25}F & \cellcolor{red!25}F & \cellcolor{red!25}F & \cellcolor{red!25}F & \cellcolor{red!25}F & \cellcolor{red!25}F\\
\hline
\multirow{3}{*}{O} & $L_{20}$ & \cellcolor{yellow!50}TO & \cellcolor{blue!25}S & \cellcolor{yellow!50}TO & \cellcolor{red!25}F & \cellcolor{yellow!50}TO & \cellcolor{red!25}F & \cellcolor{red!25}F & \cellcolor{red!25}F & \cellcolor{red!25}F & \cellcolor{red!25}F & \cellcolor{red!25}F & \cellcolor{red!25}F & \cellcolor{red!25}F & \cellcolor{red!25}F & \cellcolor{red!25}F & \cellcolor{red!25}F & \cellcolor{red!25}F & \cellcolor{red!25}F & \cellcolor{red!25}F & \cellcolor{red!25}F & \cellcolor{red!25}F & \cellcolor{red!25}F & \cellcolor{red!25}F & \cellcolor{red!25}F & \cellcolor{red!25}F & \cellcolor{red!25}F\\
\cline{2-28}
 & $L_{40}$ & \cellcolor{red!25}F & \cellcolor{red!25}F & \cellcolor{red!25}F & \cellcolor{red!25}F & \cellcolor{yellow!50}TO & \cellcolor{red!25}F & \cellcolor{red!25}F & \cellcolor{red!25}F & \cellcolor{red!25}F & \cellcolor{red!25}F & \cellcolor{red!25}F & \cellcolor{red!25}F & \cellcolor{red!25}F & \cellcolor{red!25}F & \cellcolor{red!25}F & \cellcolor{red!25}F & \cellcolor{red!25}F & \cellcolor{red!25}F & \cellcolor{red!25}F & \cellcolor{red!25}F & \cellcolor{red!25}F & \cellcolor{red!25}F & \cellcolor{red!25}F & \cellcolor{red!25}F & \cellcolor{red!25}F & \cellcolor{red!25}F\\
\cline{2-28}
 & $L_{80}$ & \cellcolor{red!25}F & \cellcolor{red!25}F & \cellcolor{yellow!50}TO & \cellcolor{red!25}F & \cellcolor{yellow!50}TO & \cellcolor{red!25}F & \cellcolor{red!25}F & \cellcolor{red!25}F & \cellcolor{red!25}F & \cellcolor{red!25}F & \cellcolor{red!25}F & \cellcolor{red!25}F & \cellcolor{red!25}F & \cellcolor{red!25}F & \cellcolor{red!25}F & \cellcolor{red!25}F & \cellcolor{red!25}F & \cellcolor{red!25}F & \cellcolor{red!25}F & \cellcolor{red!25}F & \cellcolor{red!25}F & \cellcolor{red!25}F & \cellcolor{red!25}F & \cellcolor{red!25}F & \cellcolor{red!25}F & \cellcolor{red!25}F\\
\hline
\multirow{3}{*}{U} & $L_{20}$ & \cellcolor{blue!25}S & \cellcolor{blue!25}S & \cellcolor{red!25}F & \cellcolor{blue!25}S & \cellcolor{red!25}F & \cellcolor{blue!25}S & \cellcolor{blue!25}S & \cellcolor{blue!25}S & \cellcolor{blue!25}S & \cellcolor{blue!25}S & \cellcolor{blue!25}S & \cellcolor{blue!25}S & \cellcolor{blue!25}S & \cellcolor{blue!25}S & \cellcolor{blue!25}S & \cellcolor{blue!25}S & \cellcolor{blue!25}S & \cellcolor{blue!25}S & \cellcolor{blue!25}S & \cellcolor{blue!25}S & \cellcolor{blue!25}S & \cellcolor{blue!25}S & \cellcolor{blue!25}S & \cellcolor{blue!25}S & \cellcolor{blue!25}S & \cellcolor{blue!25}S\\
\cline{2-28}
 & $L_{40}$ & \cellcolor{blue!25}S & \cellcolor{red!25}F & \cellcolor{red!25}F & \cellcolor{red!25}F & \cellcolor{red!25}F & \cellcolor{red!25}F & \cellcolor{red!25}F & \cellcolor{red!25}F & \cellcolor{red!25}F & \cellcolor{red!25}F & \cellcolor{red!25}F & \cellcolor{red!25}F & \cellcolor{red!25}F & \cellcolor{red!25}F & \cellcolor{red!25}F & \cellcolor{red!25}F & \cellcolor{red!25}F & \cellcolor{red!25}F & \cellcolor{red!25}F & \cellcolor{red!25}F & \cellcolor{red!25}F & \cellcolor{red!25}F & \cellcolor{red!25}F & \cellcolor{red!25}F & \cellcolor{red!25}F & \cellcolor{red!25}F\\
\cline{2-28}
 & $L_{80}$ & \cellcolor{red!25}F & \cellcolor{red!25}F & \cellcolor{yellow!50}TO & \cellcolor{red!25}F & \cellcolor{red!25}F & \cellcolor{red!25}F & \cellcolor{red!25}F & \cellcolor{red!25}F & \cellcolor{red!25}F & \cellcolor{red!25}F & \cellcolor{red!25}F & \cellcolor{red!25}F & \cellcolor{red!25}F & \cellcolor{red!25}F & \cellcolor{red!25}F & \cellcolor{red!25}F & \cellcolor{red!25}F & \cellcolor{red!25}F & \cellcolor{red!25}F & \cellcolor{red!25}F & \cellcolor{red!25}F & \cellcolor{red!25}F & \cellcolor{red!25}F & \cellcolor{red!25}F & \cellcolor{red!25}F & \cellcolor{red!25}F\\
\cline{2-28}
 & $L_{120}$ & \cellcolor{red!25}F & \cellcolor{red!25}F & \cellcolor{red!25}F & \cellcolor{yellow!50}TO & \cellcolor{red!25}F & \cellcolor{red!25}F & \cellcolor{red!25}F & \cellcolor{red!25}F & \cellcolor{red!25}F & \cellcolor{red!25}F & \cellcolor{red!25}F & \cellcolor{red!25}F & \cellcolor{red!25}F & \cellcolor{red!25}F & \cellcolor{red!25}F & \cellcolor{red!25}F & \cellcolor{red!25}F & \cellcolor{red!25}F & \cellcolor{red!25}F & \cellcolor{red!25}F & \cellcolor{red!25}F & \cellcolor{red!25}F & \cellcolor{red!25}F & \cellcolor{red!25}F & \cellcolor{red!25}F & \cellcolor{red!25}F\\
\hline
\end{tabular}}
\caption{Vocalic safety properties with different verification outcomes across quantization levels.}
\label{table:quantum_sweep_vocalic_out}
\end{table}

A possible explanation of this behavior is that the properties listed in Tables \ref{table:quantum_sweep_iris_out} and \ref{table:quantum_sweep_vocalic_out} are on the verge of breaking the ANN robustness. In fact, for each of these properties, reducing the input region size $L_i$ or $R_i$ makes them more safe, and increasing it makes them less safe. Thus, by staying on the threshold between these two regimes, any minor change in the ANN implementation (e.g., the quantization level) can easily flip the verification outcome and yield the erratic results we observe.

Indeed, there is no prediction methodology or technique capable of indicating that such behavior will occur; however, the present work successfully reveals it and hints at how to devise a suitable scheme. For instance, a closed-loop approach could test a chosen set of properties against some possible quantization levels so that the realization with the smallest number of bits that still provide an output that is considered stable is chosen. This way, we could navigate through a response dependent on the target quantization level and, when a steady-state region is achieved, the narrowest format that allows this behavior is chosen. Currently, we can only validate or not a given quantized ANN; we leave the synthesis of neural net implementations as future work.

\subsubsection{Effects of quantization on adversarial examples}
\label{sec:adversarial-cases-verification-in-fwl-ann-implementations-v2}

Finally, let us comment on the effect of quantization on the counterexamples returned by our verification approach. As Tables~\ref{table:quantum_sweep_iris_out} and \ref{table:quantum_sweep_vocalic_out} show, the verification outcome of the same safety property changes depending on the chosen ANN representation. This is because different quantization granularities may either hide or reveal specific vulnerabilities in ANN computation. At the same time, even if the verification outcome is the same and the safety property is kept falsifiable (F), the counterexamples returned by the verification engine may be different.

Here, we present a qualitative comparison between counterexamples, which, in the context of machine learning research, are also known as \textit{adversarial examples}. Specifically, we focus on our three levels of fixed-point quantization, namely 8, 16 and 32 bits, for which we present a selection of adversarial examples from the Vocalic benchmarks in Figs.~\ref{fig:imagequality8bits}, \ref{fig:imagequality16bits}, and \ref{fig:imagequality32bits} respectively. Each figure contains pairs of images, where the center of the input region is on the left, and its corresponding adversarial example is on the right. 

\begin{figure}[htb]
\centering
    \begin{subfigure}[t]{\linewidth}
    	\centering
    	\begin{subfigure}[t]{0.19\linewidth}
    		\centering
    		\begin{subfigure}[t]{0.4\linewidth}
    			\centering
    			\includegraphics[height=0.4in]{images/imagema.png}
    		\end{subfigure}%
    		~
    		\begin{subfigure}[t]{0.4\linewidth}
    			\centering
    			\includegraphics[height=0.4in]{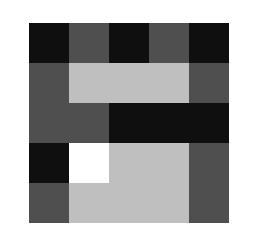}
    		\end{subfigure}
    		\caption*{``A'', $L_{40} \rightarrow$ ``O''.}
    	\end{subfigure}
    	\begin{subfigure}[t]{0.19\linewidth}
    		\centering
    		\begin{subfigure}[t]{0.4\linewidth}
    			\centering
    			\includegraphics[height=0.4in]{images/imageme.png}
    		\end{subfigure}%
    		~
    		\begin{subfigure}[t]{0.4\linewidth}
    			\centering
    			\includegraphics[height=0.4in]{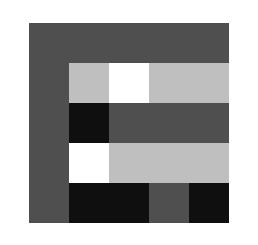}
    		\end{subfigure}
    		\caption*{``E'', $L_{40} \rightarrow$ ``A''.}
    	\end{subfigure}
    	\begin{subfigure}[t]{0.19\linewidth}
    		\centering
    		\begin{subfigure}[t]{0.4\linewidth}
    			\centering
    			\includegraphics[height=0.4in]{images/imagemi.png}
    		\end{subfigure}%
    		~
    		\begin{subfigure}[t]{0.4\linewidth}
    			\centering
    			\includegraphics[height=0.4in]{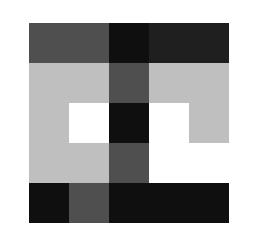}
    		\end{subfigure}
    		\caption*{``I'', $L_{40} \rightarrow$ ``A''.}
    	\end{subfigure}
    	\begin{subfigure}[t]{0.19\linewidth}
    		\centering
    		\begin{subfigure}[t]{0.4\linewidth}
    			\centering
    			\includegraphics[height=0.4in]{images/imagemo.png}
    		\end{subfigure}%
    		~
    		\begin{subfigure}[t]{0.4\linewidth}
    			\centering
    			\includegraphics[height=0.4in]{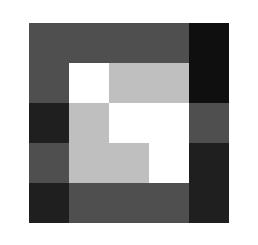}
    		\end{subfigure}
    		\caption*{``O'', $L_{40} \rightarrow$ ``A''.}
    	\end{subfigure}
    	\begin{subfigure}[t]{0.19\linewidth}
    		\centering
    		\begin{subfigure}[t]{0.4\linewidth}
    			\centering
    			\includegraphics[height=0.4in]{images/imagemu.png}
    		\end{subfigure}%
    		~
    		\begin{subfigure}[t]{0.4\linewidth}
    			\centering
    			\includegraphics[height=0.4in]{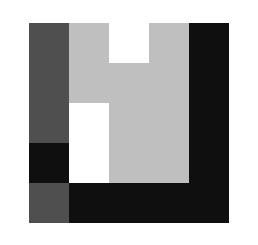}
    		\end{subfigure}
    		\caption*{``U'', $L_{40} \rightarrow$ ``A''.}
    	\end{subfigure}
        \caption{}
        \label{fig:imagequality8bits}
    \end{subfigure}
    	
    \begin{subfigure}[t]{\linewidth}
        \centering
    	\begin{subfigure}[t]{0.19\linewidth}
    		\centering
    		\begin{subfigure}[t]{0.4\linewidth}
    			\centering
    			\includegraphics[height=0.4in]{images/imagema.png}
    		\end{subfigure}%
    		~
    		\begin{subfigure}[t]{0.4\linewidth}
    			\centering
    			\includegraphics[height=0.4in]{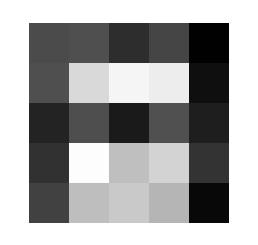}
    		\end{subfigure}
    		\caption*{``A'', $L_{40} \rightarrow$ ``U''.}
    	\end{subfigure}
    	\begin{subfigure}[t]{0.19\linewidth}
    		\centering
    		\begin{subfigure}[t]{0.4\linewidth}
    			\centering
    			\includegraphics[height=0.4in]{images/imageme.png}
    		\end{subfigure}%
    		~
    		\begin{subfigure}[t]{0.4\linewidth}
    			\centering
    			\includegraphics[height=0.4in]{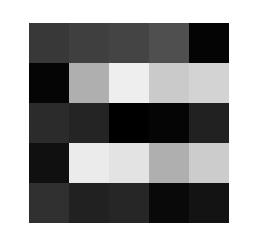}
    		\end{subfigure}
    		\caption*{``E'', $L_{40} \rightarrow$ ``A''}
    	\end{subfigure}
    	\begin{subfigure}[t]{0.19\linewidth}
    		\centering
    		\begin{subfigure}[t]{0.4\linewidth}
    			\centering
    			\includegraphics[height=0.4in]{images/imagemi.png}
    		\end{subfigure}%
    		~
    		\begin{subfigure}[t]{0.4\linewidth}
    			\centering
    			\includegraphics[height=0.4in]{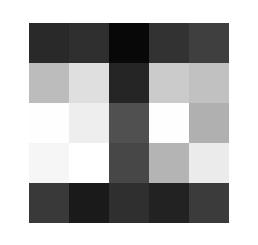}
    		\end{subfigure}
    		\caption*{``I'', $L_{40} \rightarrow$ ``E''.}
    	\end{subfigure}
    	\begin{subfigure}[t]{0.19\linewidth}
    		\centering
    		\begin{subfigure}[t]{0.4\linewidth}
    			\centering
    			\includegraphics[height=0.4in]{images/imagemo.png}
    		\end{subfigure}%
    		~
    		\begin{subfigure}[t]{0.4\linewidth}
    			\centering
    			\includegraphics[height=0.4in]{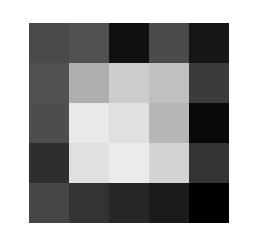}
    		\end{subfigure}
    		\caption*{``O'', $L_{40} \rightarrow$ ``I''.}
    	\end{subfigure}
    	\begin{subfigure}[t]{0.19\linewidth}
    		\centering
    		\begin{subfigure}[t]{0.4\linewidth}
    			\centering
    			\includegraphics[height=0.4in]{images/imagemu.png}
    		\end{subfigure}%
    		~
    		\begin{subfigure}[t]{0.4\linewidth}
    			\centering
    			\includegraphics[height=0.4in]{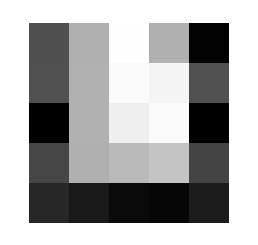}
    		\end{subfigure}
    		\caption*{``U'', $L_{40} \rightarrow$ ``I''.}
    	\end{subfigure}
        \caption{}
        \label{fig:imagequality16bits}
    \end{subfigure}
    	
    \begin{subfigure}[t]{\linewidth}
        \centering
    	\begin{subfigure}[t]{0.19\linewidth}
    		\centering
    		\begin{subfigure}[t]{0.4\linewidth}
    			\centering
    			\includegraphics[height=0.4in]{images/imagema.png}
    		\end{subfigure}%
    		~
    		\begin{subfigure}[t]{0.4\linewidth}
    			\centering
    			\includegraphics[height=0.4in]{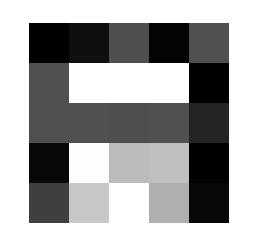}
    		\end{subfigure}
    		\caption*{``A'', $L_{40} \rightarrow$ ``U''.}
    	\end{subfigure}
    	\begin{subfigure}[t]{0.19\linewidth}
    		\centering
    		\begin{subfigure}[t]{0.4\linewidth}
    			\centering
    			\includegraphics[height=0.4in]{images/imageme.png}
    		\end{subfigure}%
    		~
    		\begin{subfigure}[t]{0.4\linewidth}
    			\centering
    			\includegraphics[height=0.4in]{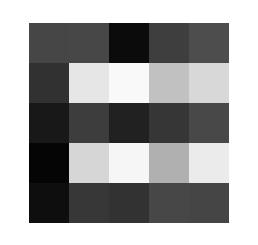}
    		\end{subfigure}
    		\caption*{``E'', $L_{40} \rightarrow$ ``I''.}
    	\end{subfigure}
    	\begin{subfigure}[t]{0.19\linewidth}
    		\centering
    		\begin{subfigure}[t]{0.4\linewidth}
    			\centering
    			\includegraphics[height=0.4in]{images/imagemi.png}
    		\end{subfigure}%
    		~
    		\begin{subfigure}[t]{0.4\linewidth}
    			\centering
    			\includegraphics[height=0.4in]{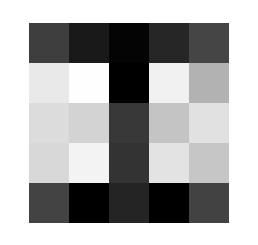}
    		\end{subfigure}
    		\caption*{``I'', $L_{40} \rightarrow$ ``E''.}
    	\end{subfigure}
    	\begin{subfigure}[t]{0.19\linewidth}
    		\centering
    		\begin{subfigure}[t]{0.4\linewidth}
    			\centering
    			\includegraphics[height=0.4in]{images/imagemo.png}
    		\end{subfigure}%
    		~
    		\begin{subfigure}[t]{0.4\linewidth}
    			\centering
    			\includegraphics[height=0.4in]{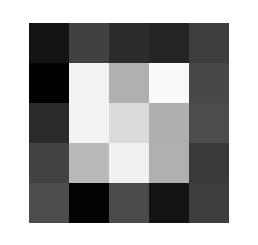}
    		\end{subfigure}
    		\caption*{``O'', $L_{40} \rightarrow$ ``I''.}
    	\end{subfigure}
    	\begin{subfigure}[t]{0.19\linewidth}
    		\centering
    		\begin{subfigure}[t]{0.4\linewidth}
    			\centering
    			\includegraphics[height=0.4in]{images/imagemu.png}
    		\end{subfigure}%
    		~
    		\begin{subfigure}[t]{0.4\linewidth}
    			\centering
    			\includegraphics[height=0.4in]{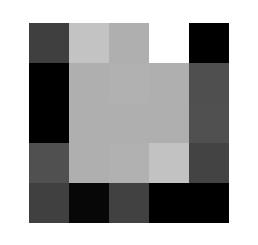}
    		\end{subfigure}
    		\caption*{``U'', $L_{40} \rightarrow$ ``A''.}
    	\end{subfigure}
        \caption{}
        \label{fig:imagequality32bits}
    \end{subfigure}
    \caption{Counterexamples for the Vocalic benchmarks with (a) $8$, (b) $16$, and (c) $32$-bits fixed-point representations. For each safety property, we report the centroid and associated counterexample, the input region dimension $L_i$, and the incorrect output label that was generated.}
    \label{fig:adversarial-image-quality}
\end{figure}

As Figures~\ref{fig:imagequality8bits}, \ref{fig:imagequality16bits}, and \ref{fig:imagequality32bits} show, the granularity of ANN quantization has a significant effect on the quality of the adversarial examples. On the one hand, coarser fixed-point representations, such as $\langle 4, 4 \rangle$, restrict the search space to fewer gray-scale levels, which is clearly seen in Fig.~\ref{fig:imagequality8bits}, given the easily noticeable differences. On the other hand, finer quantizations, such as $\langle 16, 16 \rangle$, let the verification engine produce counterexamples with minimal noise spread across the whole image (see Fig.~\ref{fig:imagequality32bits}). The latter is typical of floating-point ANNs. Besides, it is especially dangerous for this specific case of image classification, as such adversarial examples may go undetected even by a human observer~\cite{Szegedy2014}. It is important to notice that quantization levels also influence counterexamples. Moreover, they can be regarded as adapted to the contexts created by the latter.

\begin{tcolorbox}
These results successfully answer \textbf{RQ2 - Quantization effects}: we established that the verification time has some correlation with the number of bits of the ANN quantization. Moreover, we showed that the safety of an ANN is mostly stable across different quantization levels, which supports the use of aggressive quantization in machine learning practice as long as some verification is performed.
\end{tcolorbox}

\subsection{Comparison with state-of-the-art verification tools}
\label{sec:comparison-with-sota-v2}

This section compares our verification methodology with existing works in the literature. We note that the field is progressing very rapidly at the time of writing, and thus the present comparison is limited to what tools are currently available. Namely, the few existing approaches for verifying quantized ANNs (Giacobbe \textit{et al.}~\cite{Giacobbe2020}, Baranowski \textit{et al.}~\cite{Baranowski2020}, Kai Jia {\it et al.}~\cite{jia2021verifying}, Guy Amir {\it et al.}~\cite{amir2021smt}) do not provide reliable source code to replicate their experiments. As such, we can only compare our methodology with earlier tools that verify the safety of ANNs as abstract mathematical models, i.e. in infinite precision. Among those, we choose the two most popular ones as follow:
\begin{itemize}
    \item \textbf{Marabou}~\cite{Katz2019}. Based on the earlier tool Reluplex~\cite{katz2017reluplex}, Marabou uses a simplex-like algorithm to split the verification problem in smaller subproblems and invoke an SMT solver on each of them.
    \item \textbf{Neurify}~\cite{Wang2018}. It uses symbolic intervals to over-approximate the ReLU non-linearity of each neuron, and turn the verification process into finding the solution of a linear problem. These over-approximation are iteratively tightened by splitting each ReLU activation function in two independent linear problems. 
\end{itemize}

The goal of this comparison is showing that our quantized methodology is at least as efficient as these two state-of-the-art tools. At the same time, notice that our methodology provides more information on the safety of the actual ANN implementations than the abstract safety guarantees provided by tools like Marabou and Neurify.

To this end, we choose the AcasXu benchmark as our comparison suite (see Section \ref{sec:benchmark-acasxu}). This benchmark has the advantage of being already implemented in both Marabou and Neurify,\footnote{\url{https://github.com/NeuralNetworkVerification/Marabou}} \footnote{\url{https://github.com/tcwangshiqi-columbia/ReluVal}} thus allowing us to run the authors' code for a fair comparison of their performance. Furthermore, the neural networks in the AcasXu benchmark contain ReLU activation functions exclusively, which makes them compatible with Neurify. In a similar vein, we focus our comparison on safety property 1 of the AcasXu benchmark, since it is the one that incurs the fewest time-outs with the aforementioned verification tools~\cite{Katz2019, Wang2018-2}. Note that 45 different neural networks need to be verified for each safety property of AcasXu, thus giving us a larger enough sample size for a significant comparison. Regarding our verification methodology, we choose a 32 bit representation with 28 integer bits (including sign), which are needed to avoid overflows.

The summary of our results regarding verification time are shown in Figures \ref{fig:sota-comparison-marabou} and \ref{fig:sota-comparison-neurify}. On the one hand, in Fig.~\ref{fig:sota-comparison-marabou}, we compare our methodology with the SMT-based tool Marabou. Note how our verification methodology is considerably faster than Marabou. We believe this is because our underlying model checker, ESBMC, is more efficient at producing optimized SMT formulae (see Section~\ref{sec:SearchSpaceReduction}) than the custom simplex-like method employed by Marabou~\cite{Katz2019}. This also explains why the verification times of our methodology are almost constant across the whole comparison suite.

On the other hand, in Fig.~\ref{fig:sota-comparison-neurify}, we compare our methodology with the symbolic interval tool Neurify. Note that this tool has been released by the authors as a multi-threaded software~\cite{Wang2018}. This is the version we compare to in Fig.~\ref{fig:neurify-ours-27-4}. However, for the sake of a fair comparison with our methodology, we also present a modified version of Neurify that uses only a single thread in Fig.~\ref{fig:neurify-st-ours-27-4}.\footnote{The modified code is available at \url{https://github.com/ericksonalves/nn-verification-comparison}} Note how the multi-threaded version of Neurify is faster than our methodology in a majority of cases. This is because, on our machine, the multi-threaded Neurify uses up to 22 processors in parallel, giving it an obvious advantage over our single-threaded methodology. At the same time, such advantage disappears for the single-threaded version: more specifically, for the latter our methodology is faster in verifying 24 out of the 45 neural networks.

\begin{figure}[htb]
\centering
    \includegraphics[width=0.48\linewidth]{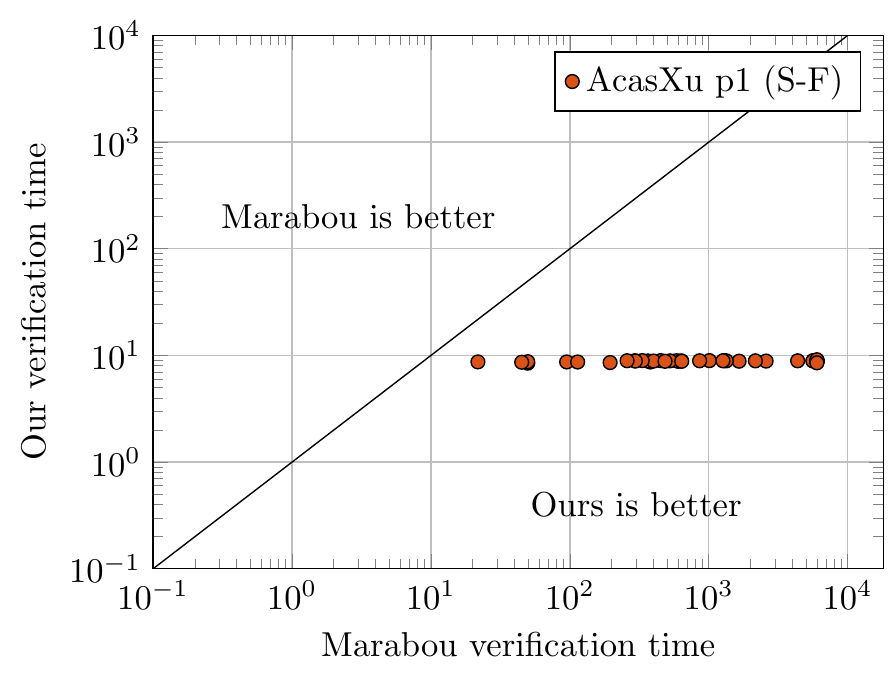}
\caption{Comparison of the verification times of Marabou and our methodology on property 1 of the AcasXu benchmark.}
\label{fig:sota-comparison-marabou}
\end{figure}

\begin{figure}[htb]
\centering
    \begin{subfigure}[b]{0.48\linewidth}
		\centering
		    \includegraphics[width=\linewidth]{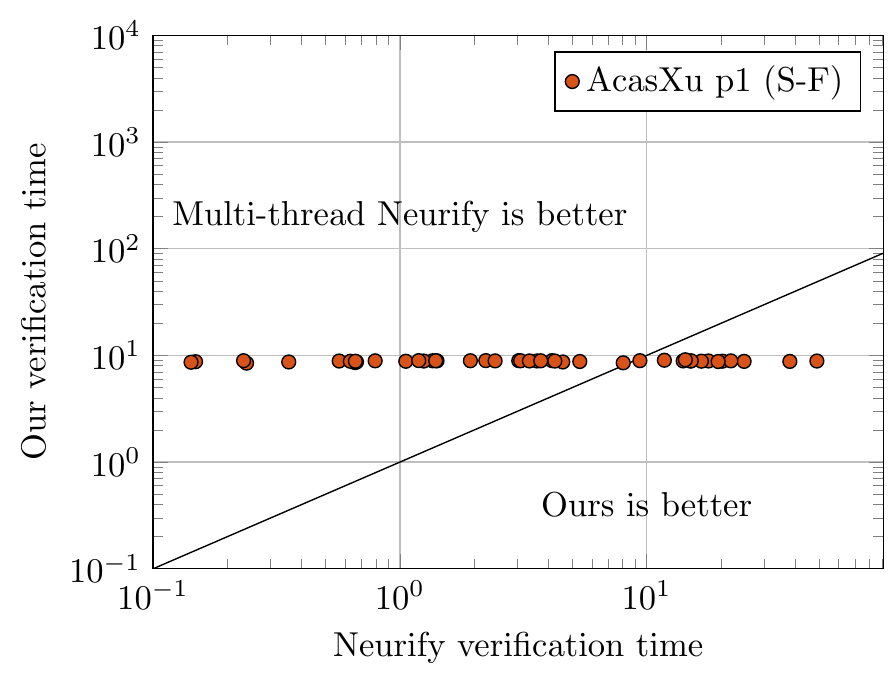}
		\caption{}
		\label{fig:neurify-ours-27-4}
	\end{subfigure}
	\begin{subfigure}[b]{0.48\linewidth}
		\centering
		    \includegraphics[width=\linewidth]{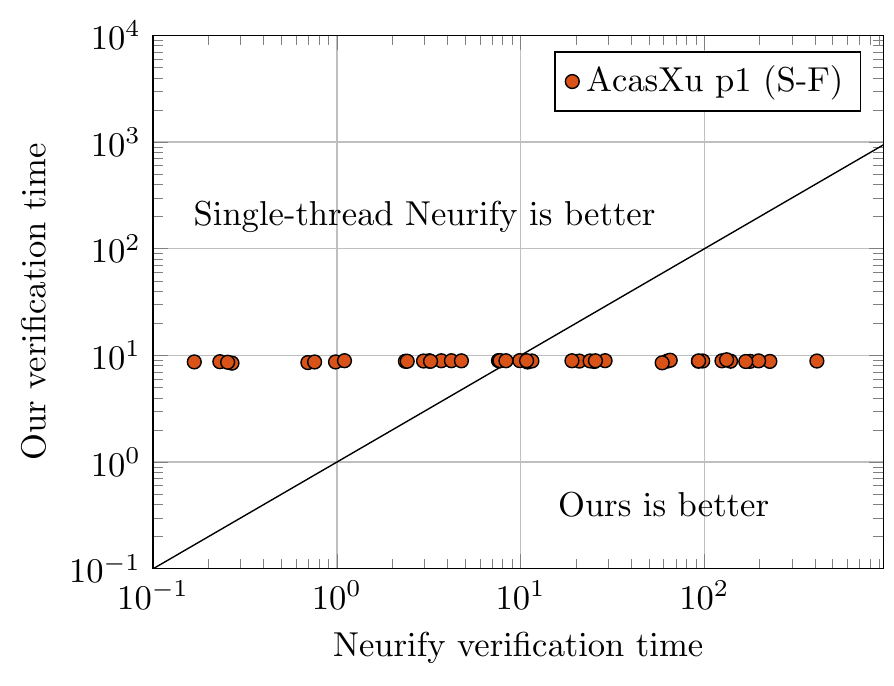}
		\caption{}
		\label{fig:neurify-st-ours-27-4}
	\end{subfigure}
\caption{Comparison of the verification times of Neurify and our methodology on property 1 of the AcasXu benchmark. On the left, (a) the original multi-threaded version of Neurify; on the right, (b) the modified single-threaded version.}
\label{fig:sota-comparison-neurify}
\end{figure}

For completeness, we also report the verification outcomes on all 45 benchmarks in Table \ref{table:comparison-floating-point}. Note how both Neurify and our methodology are able to successfully verify all 45 neural networks, whereas Marabou incurs a time-out for 13 of them. These results confirm that our methodology offers a comparable performance to that of Neurify, and faster than the state-of-the-art tool Marabou. Note also that our methodology offers guarantees on the actual implementation of the ANNs, e.g. the ones that would be deployed on an autonomous aircraft in the AcasXu case, thus making it more attractive for practical scenarios where the safety of a deployed system is critical.

\begin{table}[htb]
\begin{subtable}{\textwidth}
\resizebox{\textwidth}{!}{%
\begin{tabular}{|c|c|c|c|c|c|c|c|c|c|c|c|c|c|c|c|c|c|c|c|c|c|c|c|}
\hline
ACAS XU                                & \multicolumn{23}{c|}{Benchmark}                                                                                                                                \\ \hline
Tool                                   & 1\_1 & 1\_2 & 1\_3 & 1\_4 & 1\_5 & 1\_6 & 1\_7 & 1\_8 & 1\_9 & 2\_1 & 2\_2 & 2\_3 & 2\_4 & 2\_5 & 2\_6 & 2\_7 & 2\_8 & 2\_9 & 3\_1 & 3\_2 & 3\_3 & 3\_4 & 3\_5 \\ \hline
Marabou                                & \cellcolor{blue!25}S    & \cellcolor{blue!25}S    & \cellcolor{blue!25}S    & \cellcolor{blue!25}S    & \cellcolor{blue!25}S    & \cellcolor{blue!25}S    & \cellcolor{blue!25}S    & \cellcolor{blue!25}S    & \cellcolor{blue!25}S    & \cellcolor{blue!25}S    & \cellcolor{blue!25}S    & \cellcolor{blue!25}S    & \cellcolor{blue!25}S    & \cellcolor{blue!25}S    & \cellcolor{blue!25}S    & \cellcolor{yellow!50}TO   & \cellcolor{yellow!50}TO   & \cellcolor{yellow!50}TO   & \cellcolor{blue!25}S    & \cellcolor{blue!25}S    & \cellcolor{blue!25}S    & \cellcolor{blue!25}S    & \cellcolor{blue!25}S    \\ \hline
Neurify                                & \cellcolor{blue!25}S    & \cellcolor{blue!25}S    & \cellcolor{blue!25}S    & \cellcolor{blue!25}S    & \cellcolor{blue!25}S    & \cellcolor{blue!25}S    & \cellcolor{blue!25}S    & \cellcolor{blue!25}S    & \cellcolor{blue!25}S    & \cellcolor{blue!25}S    & \cellcolor{blue!25}S    & \cellcolor{blue!25}S    & \cellcolor{blue!25}S    & \cellcolor{blue!25}S    & \cellcolor{blue!25}S    & \cellcolor{blue!25}S    & \cellcolor{blue!25}S    & \cellcolor{blue!25}S    & \cellcolor{blue!25}S    & \cellcolor{blue!25}S    & \cellcolor{blue!25}S    & \cellcolor{blue!25}S    & \cellcolor{blue!25}S    \\ \hline
Ours \textless{}28,   4\textgreater{}   & \cellcolor{blue!25}S    & \cellcolor{blue!25}S    & \cellcolor{blue!25}S    & \cellcolor{blue!25}S    & \cellcolor{blue!25}S    & \cellcolor{blue!25}S    & \cellcolor{blue!25}S    & \cellcolor{blue!25}S    & \cellcolor{blue!25}S    & \cellcolor{blue!25}S    & \cellcolor{blue!25}S    & \cellcolor{blue!25}S    & \cellcolor{blue!25}S    & \cellcolor{blue!25}S    & \cellcolor{blue!25}S    & \cellcolor{blue!25}S    & \cellcolor{blue!25}S    & \cellcolor{blue!25}S    & \cellcolor{blue!25}S    & \cellcolor{blue!25}S    & \cellcolor{blue!25}S    & \cellcolor{blue!25}S    & \cellcolor{blue!25}S    \\ \hline
\end{tabular}}
\caption{}
\end{subtable}
\begin{subtable}{\textwidth}
\resizebox{\textwidth}{!}{%
\begin{tabular}{|c|c|c|c|c|c|c|c|c|c|c|c|c|c|c|c|c|c|c|c|c|c|c|}
\hline
ACAS XU                                & \multicolumn{22}{c|}{Benchmark}                                                                                                                         \\ \hline
Tool                                   & 3\_6 & 3\_7 & 3\_8 & 3\_9 & 4\_1 & 4\_2 & 4\_3 & 4\_4 & 4\_5 & 4\_6 & 4\_7 & 4\_8 & 4\_9 & 5\_1 & 5\_2 & 5\_3 & 5\_4 & 5\_5 & 5\_6 & 5\_7 & 5\_8 & 5\_9 \\ \hline
Marabou                                & \cellcolor{yellow!50}TO   & \cellcolor{yellow!50}TO   & \cellcolor{yellow!50}TO   & \cellcolor{yellow!50}TO   & \cellcolor{blue!25}S    & \cellcolor{blue!25}S    & \cellcolor{blue!25}S    & \cellcolor{blue!25}S    & \cellcolor{blue!25}S    & \cellcolor{yellow!50}TO   & \cellcolor{yellow!50}TO   & \cellcolor{yellow!50}TO   & \cellcolor{yellow!50}TO   & \cellcolor{blue!25}S    & \cellcolor{blue!25}S    & \cellcolor{blue!25}S    & \cellcolor{blue!25}S    & \cellcolor{blue!25}S    & \cellcolor{blue!25}S    & \cellcolor{blue!25}S    & \cellcolor{yellow!50}TO   & \cellcolor{yellow!50}TO   \\ \hline
Neurify                                & \cellcolor{blue!25}S    & \cellcolor{blue!25}S    & \cellcolor{blue!25}S    & \cellcolor{blue!25}S    & \cellcolor{blue!25}S    & \cellcolor{blue!25}S    & \cellcolor{blue!25}S    & \cellcolor{blue!25}S    & \cellcolor{blue!25}S    & \cellcolor{blue!25}S    & \cellcolor{blue!25}S    & \cellcolor{blue!25}S    & \cellcolor{blue!25}S    & \cellcolor{blue!25}S    & \cellcolor{blue!25}S    & \cellcolor{blue!25}S    & \cellcolor{blue!25}S    & \cellcolor{blue!25}S    & \cellcolor{blue!25}S    & \cellcolor{blue!25}S    & \cellcolor{blue!25}S    & \cellcolor{blue!25}S    \\ \hline
Ours \textless{}28,   4\textgreater{}   & \cellcolor{blue!25}S    & \cellcolor{blue!25}S    & \cellcolor{blue!25}S    & \cellcolor{blue!25}S    & \cellcolor{blue!25}S    & \cellcolor{blue!25}S    & \cellcolor{blue!25}S    & \cellcolor{blue!25}S    & \cellcolor{blue!25}S    & \cellcolor{blue!25}S    & \cellcolor{blue!25}S    & \cellcolor{blue!25}S    & \cellcolor{blue!25}S    & \cellcolor{blue!25}S    & \cellcolor{blue!25}S    & \cellcolor{blue!25}S    & \cellcolor{blue!25}S    & \cellcolor{blue!25}S    & \cellcolor{blue!25}S    & \cellcolor{blue!25}S    & \cellcolor{blue!25}S    & \cellcolor{blue!25}S    \\ \hline
\end{tabular}}
\caption{}
\end{subtable}
\caption{Verification outcomes of Marabou, Neurify and our methodology on the AcasXu benchmarks for property 1. For reason of space, we split the 45 results in two subtables (a) and (b). Note that changing the parallelism of Neurify as in Figures \ref{fig:neurify-ours-27-4} and \ref{fig:neurify-st-ours-27-4} does not change its verification outcomes.}
\label{table:comparison-floating-point}
\end{table}

\begin{tcolorbox}
These results successfully answer \textbf{RQ3 - Comparison with SOTA}: We evaluated and compared our tool with other state-of-the-art tools, including SMT-based verification and symbolic intervals such as Marabou and Neurify, respectively. In terms of correctness, our approach can successfully verify all the benchmarks without timeout or crash. Furthermore, considering AcasXu property 1 benchmarks, our approach is significantly faster and solves more verification tasks than Marabou, a competitive opponent in SMT-based verification. 
\end{tcolorbox}

\subsection{Limitations}
\label{sec:limitations-v2}

We believe the work we present in this paper is an essential milestone for verifying fixed- and floating-point ANNs with arbitrary activation functions. This way, it can be considered a unified quantization framework, with the potential of broad model exploration and verification regarding data representation. However, we still want to highlight a few limitations of our verification approach that need to be addressed in future work.

First, we handle non-linear activation functions by replacing them with lookup tables (see Section~\ref{ssec:discretise}). This step is necessary for efficiency reasons but has a drawback: even with a proper resolution, a lookup table will always approximate an original function. Our experiments used lookup tables with a resolution of three decimal places and correctly validated all adversarial cases with MATLAB. However, we cannot exclude that our verification approach may produce incorrect adversarial examples or successful verification outcomes in other ANN verification scenarios, especially when a given lookup table's resolution does not match the adopted quantization granularity. This constitutes a potential threat to the validity of our method.

Second, the biggest challenge in ANN verification is scaling to large neural networks. In this regard, our Iris and Vocalic benchmarks are small to medium-sized regarding the number of neurons. Furthermore, the dataset themselves is small, which probably generated ANNs with low robustness to adversarial attacks. Both these factors contribute to keeping the dimensionality of resulting SMT formulae low and thus help our method achieve competitive verification times. However, a thorough investigation of which factors hamper verification performance and overcome them is still required.

Finally, quantized frameworks do not usually publish the code of their methods, which compromises any direct comparison attempt. Even so, the research presented here is itself SOTA and can pave the way for further research towards ANN deployment in restricted systems based on formal guarantees.

\section{Related work}
\label{sec:related_work}

This work's main contribution is providing a sound verification approach for checking the safety of MLPs with arbitrary activation functions and taking into account FWL effects in computations (weights, bias, and operations) due to fixed-point implementation, in addition to activation function discretization. SMT-based approaches~\cite{pulina2012challenging,huang2017safety,katz2017reluplex,Katz2019,Sena20} have been used for safety verification  of ANNs. Besides, the main advantage of those techniques lies in SMT solvers' soundness; however, there is an important drawback: the scalability is limited since they are sensitive to the ANN complexity. For this reason, most of them are unable to deal with large ANNs. 

Wang \etal{}~\cite{Wang2018} propose an efficient approach for checking different safety properties of large neural networks, aiming at finding adversarial cases. Their approach is based on two main ideas. First, symbolic linear relaxation combines symbolic interval analysis and linear relaxation to create an efficient propagation method for tighter estimations. Second, directed-constraint refinement, which identifies nodes whose output is overestimated and iteratively refines their output ranges. Those two techniques are implemented in a tool called Neurify that was validated against multiple ANN architectures. Furthermore, to scale up their verification framework, they have implemented their code using multi-threaded programming techniques. However, as the previous tools~\cite{pulina2012challenging,huang2017safety,katz2017reluplex}, Neurify only supports ReLU activation functions. Katz \etal{}~\cite{Katz2019} present Marabou that extends the Reluplex approach and uses lazy search to deal with nonlinearities of activation functions, allowing verification of ANNs with any piecewise-linear activation functions. 

Recently, set-theoretic methods for reachability-based verification have been proposed for verifying ANN-controlled closed-loop systems. In particular, Tran \etal{} propose the NNV tool~\cite{Tran2020}, which over-approximates the exact reachable set by approximating the exact reachable set after applying an activation function. It allows support to hyperbolic tangent and sigmoid activation functions. Other approaches~\cite{Huang2019,Ivanov2021} also employ set-theoretic methods and polynomial approximation of hyperbolic tangent and sigmoid, using Taylor's~\cite{Ivanov2021} or Bernstein's~\cite{Huang2019} polynomials. Our approach also allows verifying ANNs with non-linear activation functions. This approximation is based on lookup tables create with a suitable number of intervals (i.e., expected error) to avoid use of non-linear operators' in SMT solvers. This approach allows support to any piecewise continuous activation function.

Robustness and explainability are the core properties of the present study, and applying those properties to ANNs has shown impressive experimental results. Explainability showed a vital property to evaluate safety in ANNs: the core idea is to obtain an explanation for an adversarial case by observing the pattern activation behavior of a subset of neurons described by a given invariant. Gopinath \etal{} presented formal~\cite{Gopinath2019} and data-driven~\cite{Gopinath2018} techniques to extract properties from ANNs, which may be used as formal specifications for the ANNs. It is a crucial result to ensure explainable adversarial examples. 

Robustness is the ability to ensure safe outputs under the presence of disturbances and uncertainties, such as input noises and implementation issues~\cite{Giacobbe2020}. In this sense, Dey \etal{}~\cite{Dey2018} provide a parametric regularization methodology to improve the robustness of ANNs concerning additive noise. However, sensitivity to FWL effects is not considered in that approach. ANNs are usually designed to work in real arithmetic; however, it is already shown that safety violations may occur due to the floating-~\cite{Jia2020} and fixed-point~\cite{Giacobbe2020} implementations. In particular, Baranowski~\etal{} presented a practical SMT-based approach for verifying neural networks' properties considering fixed-point arithmetic. Their approach employs a realistic model of FWL effects that includes different rounding and overflow models. However, as shown by Henziger~\etal{}~\cite{Henzinger2020}, the scalability of this kind of approach is compromised due to the hardness of the verification of fixed-point implementations of ANNs. Therefore, a new method for verifying fixed-point implementations based on abstract interpretation is proposed in~\cite{Henzinger2020} to reduce complexity and increase scalability. However, that method can only verify ANNs with piecewise linear activation functions since it does not consider the propagation of FWL effects through generic non-linear functions. Our approach also considers FWL effects of fixed-point implementations of ANNs based on an efficient FWL implementation model that reduces complexity when verifying those ANNs. Our experiments and previous work on verification of fixed-point digital controllers~\cite{Chaves2019} indicated that scalability is not compromised by the use of this FWL implememntation model.

Our approach implemented on top of ESBMC has some similarities with other techniques described here, e.g., regarding the covering methods proposed by Sun \etal{}~\cite{Sun2019}, model checking to obtain adversarial cases proposed by Huang \etal{}\cite{huang2017safety}, and incremental verification of ANNs implemented in CUDA by Sena \etal{}~\cite{Sena20}. However, the main contribution concerns our requirements and how we handle, with invariant inference, actual implementations of ANNs with non-linear activation functions, also considering FWL effects. Moreover, the latter results in promptly deployable ANNs, which could be integrated into a unified design framework. Only ANNs' weights, bias descriptors, and desired input regarding a dataset are required to run our proposed safety verification. For tools such as DeepConcolic~\cite{Sun2019} and DLV~\cite{huang2017safety}, obtaining adversarial cases or safety guarantees in customized ANNs depends on the intrinsic characteristics of models. For instance, in their implementations, they do not support complex non-linear activation functions. Moreover, Sena \etal{}~\cite{Sena20} do not exploit invariant inference to prune the state space exploration, which is done in our proposed approach.

\section{Conclusions}
\label{sec:conclusion}

Verification of ANNs has recently attracted considerable attention, with notable approaches using optimization, reachability, and satisfiability methods. While the former two promise to scaling to large neural networks, they achieve such a goal by relaxing and approximating the verification problem. In contrast, satisfiability methods are exact by construction but are confronted with the full complexity of the original verification problem.

In this paper, we propose a satisfiability modulo theory (SMT) approach to address ANN verification. More specifically, we view the ANN not as an abstract mathematical model but as a concrete piece of software (i.e., source code), which performs a sequence of fixed- or floating-point arithmetic operations. We can borrow several techniques from software verification and seamlessly apply them to ANN verification with this view. In this regard, we center our verification framework around software model checking (SMC) and empirically show the importance of interval analysis, constant folding, tree balancing, and slicing in reducing the total verification time. Furthermore, we propose a tailored discretization technique for non-linear activation functions that allow us to verify ANNs beyond the piecewise-linear assumptions that many state-of-the-art methods are restricted to.

Besides, in our experimental evaluation, we uncovered an important relationship between the granularity of ANN quantization and verification time and the correctness of its properties. The more granular the quantization, the more significant the search space and thus the more prolonged the verification time. This is contrary to the main existing theoretical result in the literature, which states that verifying quantized ANNs is computationally harder than verifying real-valued ones. However, further research is needed to shed more light on this phenomenon. Regarding correctness, we verified that narrower bit widths can be used but must be verified before deployment to achieve the minimum format that still provides broadly correct results. However, when that minimum representation is obtained, more comprehensive formats will usually provide correct results, as the stationary response of a curve relating bit width and verification result.

We have also evaluated and compared our tool with Marabou and Neurify. Considering the ACASXu property one benchmarks~\cite{julian2016policy}, we have observed that our approach is significantly faster and solves more verification tasks than Marabou~\cite{Katz2019}, a competitive opponent in SMT-based verification. However, in many cases, Neurify~\cite{Wang2018} is faster than our tool since it deploys a multi-threaded algorithm to solve the verification tasks. However, note that neither Marabou nor Neurify can verify quantized neural networks as in our approach.

Finally, we believe that the problem of verifying ANNs is still open. More specifically, it is unclear which set of techniques yields the best performance when scaling to large networks. In this regard, our future work includes comparing our verification approach to other existing techniques and optimizing our verification performance even further. In addition, the results of our work can be regarded as the first steps towards an approach capable of revealing the most aggressive ANN representation that still provides correct operation, which aims at achieving maximum compression for a particular model.

\bibliographystyle{ACM-Reference-Format}
\bibliography{references}

\end{document}